\title{Contextual Blocking Bandits}
\author{%
  Soumya Basu \thanks{These authors contributed equally to this work.} \\
  Department of ECE\\
  The University of Texas at Austin\\
  Austin, TX 78712, USA\\
  \texttt{basusoumya@utexas.edu} \\
  \and
  Orestis Papadigenopoulos{\normalfont \textsuperscript{*}} \\
  Department of Computer Science\\
  The University of Texas at Austin\\
  Austin, TX 78712, USA\\
  \texttt{papadig@cs.utexas.edu} \\
  \and
  Constantine Caramanis\\
Department of ECE\\
The University of Texas at Austin\\
Austin, TX 78712, USA\\
\texttt{constantine@utexas.edu} \\
\and 
    Sanjay Shakkottai \\
    Department of ECE\\
    The University of Texas at Austin\\
    Austin, TX 78712, USA\\
    \texttt{sanjay.shakkottai@utexas.edu}
}
\newcommand{\oracle}{\textsc{fi-cbb}\xspace}
\newcommand{\ucb}{\textsc{ucb-cbb}\xspace}
\newtheorem{theorem}{Theorem}
\newtheorem{fact}{Fact}
\newtheorem{definition}{Definition}
\DeclareMathOperator*{\A}{\mathcal{A}}
\DeclareMathOperator*{\C}{\mathcal{C}}
\newcommand\event[1]{\mathop{\mathbb{I}\left(#1\right)}}
\newcommand\Ex[2]{\mathop{\underset{#1}{\mathbb{E}}\left[#2\right]}}
\newcommand\Pro[1]{\mathop{\mathbb{P}\left(#1\right)}}
\DeclareMathOperator{\extr}{\mathcal{Z}}
\DeclareMathOperator{\extrsub}{\mathcal{Z}^S}
\DeclareMathOperator{\pit}{\tilde{\pi}\xspace}
\DeclareMathOperator{\Reg}{\textbf{Reg}\xspace}
\DeclareMathOperator{\Rew}{\textbf{Rew}\xspace}
\DeclareMathOperator*{\dummy}{B_1}
\DeclareMathOperator*{\dummyy}{B_2}
\newcommand*{\QED}{\hfill\ensuremath{\blacksquare}}%
\begin{document}

\maketitle

\begin{abstract}

We study a novel variant of the multi-armed bandit problem, where at each time step, the player observes an independently sampled context that determines the arms' mean rewards. However, playing an arm blocks it (across all contexts) for a fixed and known number of future time steps. The above contextual setting, which captures important scenarios such as recommendation systems or ad placement with diverse users, invalidates greedy solution techniques that are effective for its non-contextual counterpart (Basu et al., NeurIPS19). Assuming knowledge of the context distribution and the mean reward of each arm-context pair, we cast the problem as an online bipartite matching problem, where the right-vertices (contexts) arrive stochastically and the left-vertices (arms) are blocked for a finite number of rounds each time they are matched. 
This problem has been recently studied in the full-information case, where competitive ratio bounds have been derived.
We focus on the bandit setting, where the reward distributions are initially unknown; we propose a UCB-based variant of the full-information algorithm that guarantees a $\mathcal{O}(\log T)$-regret w.r.t. an $\alpha$-optimal strategy in $T$ time steps, matching the $\Omega(\log(T))$ regret lower bound in this setting. Due to the time correlations caused by blocking, existing techniques for upper bounding regret fail. For proving our regret bounds, we introduce the novel concepts of delayed exploitation and opportunistic sub-sampling and combine them with ideas from combinatorial bandits and non-stationary Markov chains coupling.
\end{abstract}

\section{Introduction}
There has been much interest in variants of the stochastic {\em multi-armed bandit} (MAB) problem to model the phenomenon of {\em local} performance loss, where after each play, an arm either becomes unavailable {\color{black} for several subsequent rounds}~\cite{BSSS19}, or its mean reward {\color{black} temporarily} decreases~\cite{KI18, CCB19}. These studies provide state-of-the-art finite time regret guarantees. However, many (if not most) practical applications of bandit algorithms are contextual in nature (e.g., in recommendation systems, task allocations), and these studies do not capture such scenarios where the rewards depend on a task-dependent context.
Our paper focuses on a contextual variant of the blocking bandits model~\cite{BSSS19}.

We consider a set of {\em arms} such that, once an arm is pulled, it cannot be played again (i.e., is blocked) for a known and fixed number of consecutive rounds. At each round, a unique {\em context} is sampled according to some known distribution over a finite set of contexts and the player observes this context before playing an arm. The reward of each arm is drawn independently from a different distribution, depending on the context of the round under which the arm is played. The objective of the player is to maximize the expected cumulative reward collected within an unknown time horizon.

Applications of the above model include scheduling in data-centers, where the reward for assigning a task to a particular server depends on the workload of the task (e.g., computation, memory or storage intensive), or task assignment in online and physical service systems (e.g., Mechanical Turk for crowdsourcing tasks, ride sharing platforms for matching customers to vehicles). In these settings, both the contextual nature as well as  transient unavailability (e.g., a vehicle currently unavailable due to being occupied by a previous customer) of resources are central to the resource allocation tasks. 

\subsection{Key challenges} 
We introduce and study the problem of contextual blocking bandits (CBB). In this setting, greedy approaches that play the best available arm fail. Instead, for  adapting to unknown future contexts, a combination of randomized arm selection and selective round skipping (meaning, not play any arm in some rounds)  is required for achieving optimal competitive guarantees. This technique, that ensures sufficient future arm availability, has been noted in \cite{DSSX18} and \cite{CHMS10,AHL12}. 

Prior work in the full-information case where the mean rewards are known \cite{DSSX18}, devises a randomized LP rounding algorithm that is based on round skipping. Critically, the round skipping probabilities are time-dependent and computed offline given the LP solution (see Section \ref{sec:oracle}).
%for more on this
The skip probabilities however cannot be precomputed in a bandit setting, thus requiring some form of online learning.

To address the challenges of a bandit setting, a natural idea is to use a (dynamic) LP. This LP would use upper confidence bound (UCB) values (that vary over time) in place of the true mean values that would be available in the full information setting (as in \cite{AN14,SS18}). 
% The (sample dependent) solutions to the LP can then be used to compute the skip probabilities. 
This strategy, however, creates a significant technical hurdle: the LP is now a function of the trajectory, and the availability state of the system depends on the dynamically changing LP solution several steps into the future. This correlates past and future decisions and, thus, the techniques for analyzing the impact of skipping rounds to arm availability can no longer be applied.

The LP using UCB values has a further challenge: An action derived from the LP might not be available in a particular round (due to blocking); thus no action would be taken leading to no new sample of reward, and thus, no evolution of the information state (maintained by the bandit to learn the environment).

\subsection{Our contribution}
\begin{itemize}
\item We develop an efficient time-oblivious bandit algorithm that achieves $\mathcal{O}\left(\tfrac{k m (k+m) \log(T)}{\Delta}\right)$ regret bound, for $k$ arms, $m$ contexts and $T$ time steps, where $\Delta$ is the difference between the optimal and the best suboptimal extreme point solution of the LP. This requires two key technical innovations:
\begin{itemize}
\item {\em Delayed Exploitation}. At each time $t$, our algorithm uses the UCB from the (past) time $(t-M_t)$, where $M_t = \Theta(\log(t))$, for computing a new solution to the LP. Introducing this delay is crucial -- it ensures that the dynamics of the underlying Markov chain over the interval $[t-M_t, t]$ have mixed, and decorrelates the UCB from each arm's availability at time $t$. We believe that this technique might be of independent interest.
\item {\em LP Convergence under Blocking}. We leverage techniques from combinatorial bandits~\citep{CWYW16, WC17} and combine them with an {\em opportunistic subsampling} scheme, in order to ensure a sufficient rate of new samples associated with suboptimal LP solutions.
\end{itemize}
\item For the full-information case, we prove an unconditional hardness of $\frac{d_{\max}}{2d_{\max}-1}$, where $d_{\max}$ is the maximum blocking time, establishing that our algorithm (and the one in \cite{DSSX18}) achieves the optimal competitive guarantee. This improves on the $0.823$-hardness result of \cite{DSSX18}.

\item As a byproduct of our work, we improve on \cite{DSSX18}, in the special case where the blocking times are deterministic and time-independent. Specifically, our algorithm (a) does not require knowledge of $T$, (b) involves a (smaller) LP that can be optimized via fast combinatorial methods and (c) leads to a slightly improved competitive guarantee (asymptotically) for finite blocking times.

\item Although our work focuses on the theoretical aspects of the problem, we provide simulations of our algorithm on synthetic instances in Section \ref{simulations}.
\end{itemize}

\subsection{Related work} 
From the advent of stochastic MAB~\citep{T33} and later \citep{LR85}, decades of research in stochastic MAB have culminated in a rich body of research (c.f. \citep{BCB12, LS18}). Focusing on directions which are relevant to our work, we first note that our problem differs from {\em contextual bandits} as in~\citep{LZ08, BLLR11, AHKL14}. Although, these works face the challenge of arbitrarily many contexts, they do not handle blocking.

Our problem lies in the space of stochastic {\em non-stationary} bandits, where the reward distributions (states) of the arms can change over time. Two important threads in this area are: {\em rested bandits}~\citep{Gittins79, TL12, CDKMY17}, where the arm state (hence, reward distribution) changes only when the arm is played, and {\em restless bandits}~\citep{Whittle88, TL12}, where the state changes at each time, independently of when the arm is pulled. Our problem differs from these settings (and from {\em sleeping bandits} \cite{KNMS10}), as our reward distributions change in a very special manner, both during arm playing (becoming blocked) and not playing (i.i.d. context and becoming available). Our problem also falls into the class of {\em controlled Markov Decision Processes}~\citep{Alt99} with unknown parameters. However, the exponentially large state space ($\mathcal{O}(d_{\max}^k)$), makes this approach highly space and time consuming, and the finite time regret of known algorithms~\citep{AO07, TB08,GOA19} non-admissible.

In recent works~\citep{KI18, BSSS19, CCB19, PBG19}, the reward distribution changes are determined by some fixed special functions. Our setting belongs to this line of work, as blocking can be translated w.l.o.g.\ into deterministically zero reward. However, our problem differs from the above, as the optimal algorithm in hindsight must adapt to random context realizations. The models in~\cite{GKSS07,PBG19} also assume stochastic side information and arm delays, but consider different notions of regret, comparing to our work. 

From an algorithmic side, the full-information case of our problem has been studied in~\cite{DSSX18}, in the context of {\em online bipartite matching} with stochastic arrivals and reusable nodes (see also~\cite{JKK17} for an interesting, yet unrelated to ours, combination of matching and learning). In addition, the non-contextual case of our problem~\cite{BSSS19} is related to the literature on {\em periodic scheduling} \cite{HMRTV89,BNBNS02,SST09}. 

The idea of combining UCB~\cite{ACBF02} and LP formulations also appears in {\em bandits with knapsacks} \cite{BKS18,SS18,AN14,ADL16}. Our problem differs from this model (and from {\em bandits with budgets}~\cite{Sliv13,CJS15}), as we assume both resource consumption and budget renewal (i.e., arm availability) that depend on the player's actions. Due to blocking, our problem differs from {\em combinatorial bandits} and {\em semi-bandits}~\citep{CTPL15,CWY13, CWYW16, KWAEE14, KWAS15}. However, we draw from the techniques in \citep{WC17} for analyzing the regret of our LP-based algorithm.
\section{Problem definition} \label{sec:definition}

\paragraph{Model.} 
Let $\A$ be a set of $k$ {\em arms} (or {\em actions}), $\C$ be a set of $m$ {\em contexts} and $T \in \mathbb{N}_+$ be the time horizon of our problem. At every round $t \in \{1, 2, \dots, T\}$, a context $j \in \C$ is sampled by {\em nature} with probability $f_j$ (such that $\sum_{j \in \C} f_j = 1$). The {\em player} has prior knowledge of context distribution $\{f_j\}_{j \in \C}$, while she observes the realization of each context at the beginning of the corresponding round, before making any decision on the next action. When arm $i \in \A$ is pulled at round $t$ under context $j \in \C$, the player receives a {\em reward} $X_{i,j,t}, \forall t \in \{1,2, \dots, T\}$. We assume that the (context and arm dependent) rewards $\{X_{i,j,t}\}_{t \in [T]}$ are i.i.d.\ random variables with mean $\mu_{i,j}$ and bounded support in $[0,1]$. In the {\em blocking} bandits setting, each arm is in addition associated with a {\em delay} $d_i \in \mathbb{N}$, indicating the fact that, once the arm $i$ is played on some round $t$, the arm becomes unavailable for the next $d_i-1$ rounds (in addition to round $t$), namely, in the interval $\{t, \dots, t+d_i-1\}$. The player is unaware of the time horizon, but has a priori knowledge of the arm delays. A specific problem instance $I$ is defined by the tuple $(\A, \C, \{d_i\}_{\forall i \in \A}, \{f_j\}_{\forall j \in \C}, \{X_{i,j,t}\}_{\forall i,j,t})$, with each element as defined above. 

We refer the reader to Appendix \ref{appendix:notation} for additional technical notation.

\paragraph{Online algorithms.}
In our setting, an {\em online algorithm} is a strategy according to which, at every round $t$, the player observes the context of the round, and chooses to play one of the available arms (or skip the round). Specifically, the decisions of an online algorithm depend only on the observed context of each round and the availability state of the system.
We are interested in constructing an online algorithm $\pi$, that maximizes the {\em expected cumulative reward} over the randomness of the nature and of the algorithm itself, in the case of a randomized algorithm. Let $A^{\pi}_t \in \A \cup~\emptyset$ be the arm played by algorithm $\pi$ at time $t$, $C_t$ is the context of the round and $\mathcal{R}_{N,\pi}$ is the randomness due to the contexts/rewards realization and the possible random bits of $\pi$.  For any instance $I$ and time horizon $T$, the expected reward can be expressed as follows:
\begin{align*}
\Rew^{\pi}_I(T) = \Ex{\mathcal{R}_{N,\pi}}{\sum_{t \in [T]} \sum_{j \in \C} \sum_{i \in \A} X_{i,j,t} \event{A^{\pi}_t = i, C_t = j}}. \end{align*}

\paragraph{Oracle.}
In order to characterize an optimal online algorithm, one way is to formulate it as a Markov Decision Process (MDP) on a state space of size $\mathcal{O}(d_{\max}^{k})$, which is exponential in the number of arms. Instead, we take a different route by comparing our algorithms with an an offline {\em oracle}, an optimal (offline) algorithm that has a priori knowledge of the realizations of the contexts of all rounds and infinite computational power (a.k.a. {\em optimal clairvoyant algorithm}). It is clear that the expected reward of the {\em oracle}, denoted by $\Rew^{*}_I(T)$, upper bounds the reward of any online algorithm. 

\paragraph{Competitive ratio.}
The {\em competitive ratio}, $\rho^{\pi}(T)$, of an algorithm $\pi$ for $T$ time steps is defined as the (worst case over the problem instance) ratio between the expected reward collected by $\pi$ and the expected reward of the oracle, and is a standard notion in the field of online algorithms \footnote{Formally, the competitive ratio is defined as
$\rho^{\pi}(T) = \inf_{I} \frac{\Rew^{\pi}_I(T)}{\Rew^{*}_I(T)}$.}. 
An algorithm $\pi$ is called $\alpha$-{\em competitive} if there exists some $\alpha \in (0,1]$ such that $\rho^{\pi}(T)\geq \alpha, \forall T \in \mathbb{N}_{+}$. Thus, an $\alpha$-competitive algorithm achieves at least $\alpha \cdot \Rew^{*}_I(T)$ expected reward.

\paragraph{Approximate regret.}
Let $\pi^*$ be the oracle. Note that, for any finite $T$, and due to the finiteness of the number of contexts and actions, such an algorithm is well-defined. The {\em $\alpha$-regret} of an algorithm $\pi$ is the difference between $\alpha$ times the expected reward of an optimal online policy\footnote{In fact, we use a stronger notion of $\alpha$-regret by assuming that the optimal online algorithm is clairvoyant.} and the reward collected by $\pi$, for $\alpha \in (0,1]$, namely,
\begin{align*}
    \alpha\Reg^{\pi}_I(T) = \alpha \cdot \Rew_I^*(T) - \Rew_I^{\pi}(T).    
\end{align*}
The notion of $\alpha$-regret is widely accepted in the combinatorial bandits literature~\citep{CWYW16,WC17}, for problems where an efficient 
algorithm does not exist, even for the case where the mean rewards $\{\mu_{i,j}\}_{\forall i,j}$ are known a priori, thus leading inevitably to linear regret in the standard definition.
\section{The full-information problem} \label{sec:oracle}
We begin by considering the {\em full-information} (non-bandit) variant of the problem, where the mean rewards $\{\mu_{i,j}\}_{i \in \A,j \in \C}$ are known to the player a priori. Note that in both variants, the distribution of contexts $\{f_j\}_{j \in \C}$ and the delays $\{d_i\}_{i \in \A}$ are known to the player, but the time horizon is unknown. This case of our problem has been also studied in \cite{DSSX18}, in the setting where the delays can be stochastic and time-dependent, but the time horizon is known. 

\paragraph{LP upper bound.} 
Our first step towards proving the competitive ratio of our algorithm is to upper bound the reward of an optimal clairvoyant policy, $\Rew^{*}_I(T)$, which uses an optimal schedule of arms for each context realization. Consider the following linear program:

\begin{figure}[H]
\vspace*{-0.5em}
\begin{minipage}[t]{1.0\textwidth}
\begin{equation}
\textbf{maximize: } \sum_{i \in \A} \sum_{j \in \C} \mu_{i,j} z_{i,j} ~~~\textbf{ s.t. } \tag{\textbf{LP}} \label{lp:LP}
\end{equation}
\end{minipage}
\vspace*{-1.5em}

\begin{minipage}[t]{0.33\textwidth}
\begin{align}
    \sum_{j \in \C} z_{i,j} \leq \frac{1}{d_i}, \forall i \in \A \tag{\textbf{C1}} \label{flp:window}
\end{align}
\end{minipage}
\begin{minipage}[t]{0.33\textwidth}
\begin{align}
    \sum_{i \in \A} z_{i,j} \leq f_j, \forall j \in \C \tag{\textbf{C2}} \label{flp:conditional}
\end{align}
\end{minipage}
\begin{minipage}[t]{0.33\textwidth}
\begin{align}
    z_{i,j} \geq 0, \forall i \in \A, \forall j \in \C. \notag 
\end{align}
\end{minipage}
\end{figure}

In \eqref{lp:LP}, each variable $z_{i,j}$ can be thought of as the (fluidized) average rate of playing arm $i$ under context $j$. Intuitively, constraints \eqref{flp:window} indicate the fact that each arm $i \in \A$ can be pulled at most once every $d_i$ steps, due to the blocking constraints. Similarly, constraints \eqref{flp:conditional} suggest that playing (any arm) under context $j \in \C$ happens with probability at most $f_j$. As we show in the proof of Theorem \ref{online:theorem:competitive}, \eqref{lp:LP} provides an (approximate) upper bound to the expected reward collected by an optimal clairvoyant policy (when we multiply its objective value by $T$), and this approximation becomes tighter as $T$ increases. Finally, we remark that, as opposed to the LP used in \cite{DSSX18}: (a) We do not require knowledge of the time horizon $T$, in order to compute an optimal solution to \eqref{lp:LP}, and (b) its structural simplicity allows the efficient computation of an optimal extreme point solution, using fast combinatorial methods (see Appendix \ref{appendix:discuss:lps}).

\paragraph{Online randomized rounding.} Our algorithm, \oracle, rounds an optimal solution to \eqref{lp:LP} in an online randomized manner (as in \cite{DSSX18}, but for a different LP), and serves as a basis for the bandit algorithm we design in the next section (see Appendix \ref{appendix:online:pseudo} for a pseudocode):

\oracle : The algorithm initially computes an optimal solution, $\{z^*_{i,j}\}_{i,j}$, to \eqref{lp:LP}. At any round $t$, and after observing the context $j_t \in \C$ of the round, the algorithm {\em samples} an arm, based on the marginal distribution $\{z^*_{i,j_t}/{f_{j_t}}\}_{i\in \A}$. At this phase, any arm can be sampled, independently of its availability state. If no arm is sampled (because $\sum_{i \in \A} z^*_{i,j_t}/{f_{j_t}} < 1$), the round is skipped and no arm is played. Let $i_t \in \A$ be the sampled arm of this phase. If the arm $i_t$ is available, the algorithm plays the arm with probability $\beta_{i_t,t}$ (formally defined shortly)-- otherwise, the round is skipped.

For any arm $i\in \A$ and round $t$, we set $\beta_{i,t} = \min \{1, \tfrac{d_i}{2d_i -1}\tfrac{1}{q_{i,t}}\}$, where $q_{i,t}$ is the a priori probability of $i$ being available at time $t$ (i.e., before observing any context realization). The value of $q_{i,t}$, can be recursively computed as follows: 
\begin{align}
q_{i,1} = 1 \text{ and }q_{i,t+1} = q_{i,t}(1 - \beta_{i,t} \sum_{j \in \C} z^*_{i,j}) + \event{t\geq d_i} q_{i,t-d_i+1} \beta_{i,t-d_i+1}\sum_{j \in \C} z^*_{i,j}. \label{online:exante:formula} 
\end{align}
In the above algorithm, the arm sampling at the beginning of each round, ensures that, on average, each arm-context pair, $(i,j)$, is selected a $z^*_{i,j}$-fraction of time. Moreover, $\{\beta_{i, t}\}_{\forall i,t}$ correspond to the {\em non-skipping} probabilities-- their role is to ensure a constant rate of arm availability over time. The technique of precomputing these probabilities as a function of the expected arm availability has been proven useful for achieving optimal competitive guarantees in various online optimization settings (see, e.g., \cite{DSSX18,CHMS10,AHL12}), where other approaches (such as greedy LP rounding) fail. 

In the following theorem, we provide the competitive guarantee of our algorithm \oracle . Due to space constraints and the partial overlapping with \cite{DSSX18}, its proof has been moved to Appendix \ref{appendix:online}.

\begin{restatable}{theorem}{restateTheoremOracle}\label{online:theorem:competitive}
For any $T$, the competitive ratio of \oracle against any optimal clairvoyant algorithm is at least $\frac{d_{\max}}{2d_{\max} - 1} \left(1 - \frac{d_{\max}-1}{d_{\max}-1 + T}\right)$, where $d_{\max} = \max_{i \in \A} d_i$.
\end{restatable}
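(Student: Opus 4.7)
The plan is to sandwich $\Rew^{\oracle}_I(T)$ between $T \cdot \text{LP}^*$ (from below, up to a multiplicative $d_{\max}/(2d_{\max}-1)$ loss) and $\Rew^*_I(T)$ between $(T+d_{\max}-1)\cdot \text{LP}^*$ (from above), where $\text{LP}^*$ is the optimal value of \eqref{lp:LP}. Dividing these two estimates yields the stated ratio, since $\frac{T}{T+d_{\max}-1} = 1 - \frac{d_{\max}-1}{T+d_{\max}-1}$.

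\paragraph{Upper bound on the oracle.} First, I would show $\Rew^*_I(T) \leq (T+d_{\max}-1)\cdot \text{LP}^*$ by constructing a feasible LP solution from the oracle's actions. Let $n^*_{i,j}$ denote the expected number of times the oracle plays arm $i$ under context $j$ over $T$ rounds. The blocking constraint implies $\sum_j n^*_{i,j} \leq \lceil T/d_i \rceil \leq (T + d_i - 1)/d_i \leq (T+d_{\max}-1)/d_i$, while the i.i.d.\ context arrivals give $\sum_i n^*_{i,j} \leq T f_j \leq (T+d_{\max}-1) f_j$. Setting $z_{i,j} = n^*_{i,j}/(T+d_{\max}-1)$ yields a feasible point of \eqref{lp:LP}, and therefore $\Rew^*_I(T) = \sum_{i,j} \mu_{i,j} n^*_{i,j} \leq (T+d_{\max}-1) \cdot \text{LP}^*$.

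\paragraph{Lower bound on the algorithm.} Next, I would compute the per-round expected reward of \oracle. Because contexts are i.i.d., the availability of arm $i$ at round $t$ (determined by the history $j_1,\ldots,j_{t-1}$) is independent of $j_t$, so conditioning on $j_t = j$ the probability of actually playing $i$ is $\frac{z^*_{i,j}}{f_j}\,\beta_{i,t}\,q_{i,t}$. Summing gives the expected reward at round $t$ equal to $\sum_{i,j}\mu_{i,j}\, z^*_{i,j}\,\beta_{i,t}\,q_{i,t}$. The central lemma I need is that $q_{i,t} \geq \frac{d_i}{2d_i-1}$ for every $t \geq 1$, which by definition of $\beta_{i,t}$ forces $\beta_{i,t}\,q_{i,t} = \frac{d_i}{2d_i-1}$ exactly. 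I would prove this by induction on $t$: at $t=1$, $q_{i,1}=1$; as long as the inductive hypothesis holds up to $t$, one has $\beta_{i,s} q_{i,s} = \frac{d_i}{2d_i-1}$ for $s \leq t$, so plugging into recursion \eqref{online:exante:formula} and using $\sum_j z^*_{i,j} \leq 1/d_i$ from \eqref{flp:window} yields $q_{i,t+1} \geq q_{i,t} - \frac{1}{2d_i-1}$ while $t<d_i$, so $q_{i,d_i}\geq 1 - (d_i-1)/(2d_i-1) = d_i/(2d_i-1)$; for $t \geq d_i$ the loss and renewal terms in \eqref{online:exante:formula} cancel exactly and $q_{i,t+1} = q_{i,t}$, preserving the bound. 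Consequently $\Rew^{\oracle}_I(T) = T\sum_{i,j}\mu_{i,j}\, z^*_{i,j}\,\frac{d_i}{2d_i-1} \geq \frac{d_{\max}}{2d_{\max}-1}\, T\cdot \text{LP}^*$, where in the last step I use the monotonicity of $d\mapsto d/(2d-1)$ to pull out the worst factor.

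\paragraph{Main obstacle.} The combinatorial upper bound on the oracle and the independence of $j_t$ from arm availability are straightforward; the technical heart is the inductive verification that $q_{i,t}\geq \frac{d_i}{2d_i-1}$ holds uniformly in $t$, for which the particular choice of $\beta_{i,t}$ and the LP constraint \eqref{flp:window} must interact precisely so that each unit of mass leaked out of the availability process is replenished $d_i$ rounds later. Once the two bounds are in hand, taking the ratio $\Rew^{\oracle}_I(T)/\Rew^*_I(T)$ immediately gives $\frac{d_{\max}}{2d_{\max}-1}\cdot\frac{T}{T+d_{\max}-1}$, completing the proof.
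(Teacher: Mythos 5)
Your proposal is correct and follows essentially the same route as the paper: the clairvoyant optimum is upper-bounded by $(T+d_{\max}-1)$ times the LP value via averaging its play frequencies into a feasible LP point, and the algorithm is shown to play each pair $(i,j)$ with probability exactly $\tfrac{d_i}{2d_i-1}z^*_{i,j}$ per round because $q_{i,t}\geq \tfrac{d_i}{2d_i-1}$ holds for all $t$. The only cosmetic difference is in the key induction: you argue directly on the recursion \eqref{online:exante:formula} (showing $q_{i,t}$ loses at most $\tfrac{1}{2d_i-1}$ per step for $t<d_i$ and is then stationary), whereas the paper's Lemma \ref{lemma:oracle:exact} reaches the same bound via a union bound over plays in the preceding $d_i-1$ rounds; both are valid.
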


%\paragraph{Proof sketch of Lemma~\ref{lemma:overview:upperbound}.}

%\subsection{Online Non-Blocking Rounding: Oracle-CBB}
%We propose the online algorithm \oracle (see, Algo.~\ref{alg:oracle} in the Appendix). The execution of \oracle can be separated into two main phases: the {\em initialization} phase, and the {\em online} phase.

%This completes the description of the algorithm.

%\subsection{Analysis of Oracle-CBB} 

%The combination of Lemmas \ref{lemma:overview:upperbound} and \ref{lemma:oracle:exact} leads to the proof of Theorem \ref{online:theorem:competitive}. All the proofs of this section are deferred to Appendix~\ref{appendix:online} due to lack of space.
\section{The bandit problem} \label{section:ucbregret}
In the bandit setting of our problem, where the mean rewards $\{\mu_{i,j}\}_{\forall i,j}$ are initially unknown, we design a bandit variant of \oracle, that attempts to learn the mean values of the distributions $\{X_{i,j,t}\}_{\forall t}$ for all $i \in \A, j \in \C$, while collecting the maximum possible reward. Our objective is to achieve an $\alpha$-regret bound growing as $\mathcal{O}(\log(T))$, for $\alpha = \frac{d_{\max}}{2 d_{\max}-1}$. Due to space constraints, the proofs of this section are deferred to Appendix~\ref{appendix:omitted}.

\subsection{The bandit algorithm: \ucb}

Our algorithm, named \ucb, maintains UCB indices for all arm-context pairs, and uses them (in place of the actual means) to compute a new optimal solution to \eqref{lp:LP} at each round. Given this solution, the algorithm samples an arm in a similar way as \oracle. We expect that, as the time progresses, the LP solution computed using the UCB estimates will converge to the optimal solution of \eqref{lp:LP} and, thus, the two algorithms will gradually operate in an similar manner.

However, as the UCB indices are intrinsically linked with arm sampling, the future arm availability and, thus, the sequence of LP solutions become correlated across time. This makes the precomputation of non-skipping probabilities, $\{\beta_{i,j}\}_{i,j}$, as before, no longer possible. In order to disentangle these dependencies, we introduce the novel technique of {\em delayed exploitation}, where at each round, \ucb uses UCB estimates from relatively far in the past. 
This ensures that the extreme points used in 
the meantime, 
%this interval (from this past round up to the current round)
are fixed and unaffected by the online rounding and reward realizations in the entire duration. Using this fixed sequence of extreme points, we {\em adaptively} compute non-skipping probabilities that strike the right balance between skipping and availability.

We now outline the new elements of \ucb (which we denote by $\pit$), comparing to \oracle.

\paragraph{Dynamic LP.} As opposed to the case of \oracle, where the mean rewards are initially unknown, our bandit algorithm solves at each time $t \in [T]$ a linear program $\eqref{lp:LP}(t)$. This LP has the same constraints as \eqref{lp:LP}, but uses UCB estimations, $\{\bar{\mu}_{i,j}(t)\}_{i,j}$, in place of the actual means in the objective. Following the standard UCB paradigm, the above estimations are defined as
\begin{align}\label{eq:regret:ucbupdate}
    \bar{\mu}_{i,j}(t) = \min \left\{ \hat{\mu}_{i,j,T_{i,j}(t)} + \sqrt{\frac{3 \ln{(t)}}{2 T_{i,j}(t)}}, 1\right\}, \forall i\in \A, j \in \C.
\end{align}
In the above formula, $T_{i,j}(t)$ denotes the number of times arm $i$ is played under context $j$ up to (and excluding) time $t$, and $\hat{\mu}_{i,j,T_{i,j}(t)}$ denotes the empirical estimate of $\mu_{i,j}$, using $T_{i,j}(t)$ i.i.d. samples.  

\paragraph{Delayed Exploitation.}
In order to decouple the UCB estimates and, thus, the extreme point choices, from the arm availability state of the system, our algorithm, at any round $t$, uses the UCB indices from several rounds in the past.
For any $t\in [T]$, let $Z(t) = \{z_{i,j}(t)\}_{i,j}$ be optimal extreme point solution to $\eqref{lp:LP}(t)$, i.e., using the indices $\{\bar{\mu}_{i,j}(t)\}_{i,j}$ in place of the actual mean rewards. Moreover, let $Z(0)$ be an arbitrary extreme point of \eqref{lp:LP}. For any $t \in [T]$, we fix $M_t = \Theta(\log t)$, in a way that there is a unique integer $T_c \geq 1$, such that $t \geq M_t + 1$ if and only if $t \geq T_c$ (see Appendix \ref{appendix:criticaltime}).

At any $t \in [T]$, and after observing the context $j_t \in \C$ of the round, our algorithm samples arms according to the marginal distribution $\{z_{i,j_t}(t - M_t)/f_{j_t}\}_{i \in \A}$, namely, using the solution of $\eqref{lp:LP}(t-M_t)$. In the case where $t - M_t \leq 0$, the algorithm samples arms according to the marginal distribution  $\{z_{i,j_t}(0)/f_{j_t}\}_{i \in \A}$, based on the initial extreme point $Z(0)$.

\paragraph{Conditional Skipping.}
In \ucb the non-skipping probabilities of each round $t \in [T]$, $\{\beta_{i,t}\}_{\forall i}$, now depend on the sequence of solutions of \eqref{lp:LP} up to time $t$, that are used for sampling arms. We define by $H_t$ the history up to time $t$ for any $t\geq 1$, which includes all the context realizations, pulling of arms, and reward realizations of played arms. For every arm $i \in \A$ and time $t$, the non-skipping probability is defined as $\beta_{i,t} = \min \{1, \tfrac{d_i}{2d_i -1}\tfrac{1}{q_{i,t}(H_{t-M_t})}\}$, where $q_{i,t}(H_{t-M_t})$ now corresponds to the probability of $i$ being available at time $t$, conditioned on the history up to time $H_{t-M_t}$.

For $t < T_c$, where the extreme point $Z(0)$ is used at every round until $t$, the probability $q_{i,t}(H_{0})$, for any $i \in \A$ can be recursively computed similarly as in the full-information case (using the recursive equation \eqref{online:exante:formula}, where every $z^*_{i,j}$ is replaced with $z_{i,j}(0)$ for any $i \in \A,j\in \C$).

For $t \geq T_c$, the value $q_{i,t}(H_{t-M_t})$ is the probability that arm $i$ is available at time $t$, conditioned on $H_{t-M_t}$. By definition of $T_c$, for any $\tau \in [t-M_{t}, t]$, it is the case that $\tau - M_{\tau} \leq t - M_t$ and, thus, $H_{\tau-M_{\tau}} \subseteq H_{t-M_{t}}$. This implies that all the extreme points in the trajectory of $\eqref{lp:LP}(\tau - M_{\tau})$ for $\tau \in [t-M_t,t]$, as well as the involved non-skipping probabilities $\{\beta_{i,\tau}\}_{i \in \A}$ are deterministic and, thus, computable, conditioned on $H_{t-M_t}$. The computation of $q_{i,t}(H_{t-M_t})$ can be done recursively similarly to \eqref{online:exante:formula}. However, the extreme point solutions depend on arm mean estimates that vary over time, thus requiring a more involved recursion (see Appendix \ref{appendix:regret:compexante} for more details). Our choice of $M_t = \Theta(\log t)$ is large enough to guarantee sufficient decorrelation of the extreme point choices and the future arm availability, but also small enough to incur a small additive loss in the regret bound.

The above changes are summarized in Algorithm \ref{alg:ucb}. In Appendix \ref{appendix:ucb:exantepseudo}, we provide a routine, called \textsc{compq}(i,t,H), for the computation of $q_{i,t}(H_{t-M_t})$.

\begin{algorithm}[h]
\SetAlgoLined
\DontPrintSemicolon
Set $\bar\mu_{i,j}(0) \mathtt{\gets} 1$ for all $i \in \A, j \in \C$ and compute an initial solution $Z(0)$ to \eqref{lp:LP}.\;
 \For{$t = 1, 2, \dots$}{
  Set $M \gets \lfloor2\log_{c_1}(t)\rfloor  + 2\cdot d_{\max} + 8$, where $c_1= e^2/(e^2-1)$.\;
  \textbf{if} $t \leq M$ \textbf{then} Set $M = t$.\;
  Compute solution $Z(t-M) = \{z_{i,j}\}_{i\in\A, j \in \C}$ to $\eqref{lp:LP}(t-M)$.\;
  Observe context $j_t \in \C$ and sample arm $i_t \in \A$ with probability $z_{i_t,j_t}/f_{j_t}$.\;
  \If{$i_t \neq \emptyset$ \textbf{and} $i_t$ is available}{
   Set $q_{i_t,t}(H_{t-M}) \mathtt{\gets} \textsc{compq}(i_t,t, H_{t-M})$ and $\beta_{i_t,t} \mathtt{\gets} \min\{1 , \frac{d_i}{2d_i-1} \frac{1}{q_{i_t,t}(H_{t-M})}\}$.\;
   \textbf{if} $u \leq \beta_{i_t,t}$, for $u \sim U[0,1]$ \textbf{then} Play $i_t$.\;}
Update the UCB indices according to Eq. \eqref{eq:regret:ucbupdate}.\;
   
 }
 \caption{\ucb}
\label{alg:ucb}
\end{algorithm}

\subsection{Analysis of the $\alpha$-regret}
We define the family of extreme point solutions $Z = \{z^{Z}_{i,j}\}_{i,j}$ of \eqref{lp:LP} as $\extr$. We note that, as $\eqref{lp:LP}(t)$ varies from \eqref{lp:LP} only in the objective, the {\em family} of extreme points remains fixed and known to the player. We denote by $Z^* = \{z^*_{i,j}\}_{\forall i,j}$ any optimal extreme point of \eqref{lp:LP} with respect to the mean values $\{\mu_{i,t}\}_{\forall i,t}$, and we denote by $\extrsub$ the set of suboptimal extreme points. We now define the relevant gaps of our problem, by specializing the corresponding definitions of~\cite{WC17}, in the case where the family of feasible solutions coincides with the extreme points solutions of \eqref{lp:LP}. As we discuss in Appendix \ref{appendix:discuss:gaps}, the following suboptimality gaps are complex functions of the means, $\{\mu_{i,j}\}_{i\in \A,j \in \C}$, arm delays, $\{d_i\}_{i \in \A}$, and context distribution, $\{f_j\}_{j \in \C}$.

\begin{definition}[Gaps~\cite{WC17}]
For any extreme point $Z \in \extrsub$ the suboptimality gap is $\Delta_Z = \sum_{i \in \A} \sum_{j \in \C} \mu_{i,j} (z^*_{i,j} - z^Z_{i,j})$ and $\Delta_{\max} = \sup_{Z \in \extr} \Delta_Z$. 
For any arm-context pair $(i,j)$, we define $\Delta^{i,j}_{\min} = \inf_{Z \in \extrsub, z^Z_{i,j}>0} \Delta_Z$, i.e., the minimum $\Delta_Z$ over all $Z \in \extrsub$ such that $z^Z_{i,j}>0$.
\end{definition}

The first step of our analysis is to show that delayed exploitation, indeed, ensures that the dynamics of the underlying Markov Chain (MC) over the interval $[t-M_t, t]$ have mixed. This weakens the dependence between online rounding and extreme point choices and, thus, decorrelates the UCB from arm availability at time $t$. Let $F^{\pit}_{i,t}$ be the event that arm $i$ is available at time $t$. Using techniques from \emph{non-homogeneous MC coupling}, we prove the above weakening formally in the following lemma. 

\begin{restatable}{lemma}{restateLemmaMixing}\label{lemma:regret:mixing}
For any arm $i \in \A$ and rounds $t, t' \in [T]$ such that $0 < t - t' < d_i$ and $t\geq T_c$, we have:
\begin{align*}
  \frac{\Pro{F^{\pit}_{i,t'} | H_{t-M_t}}}{\Pro{F^{\pit}_{i,t'} | H_{t'-M_{t'}}}} \leq 1 + c_0 \cdot c_1^{-M_t}, \text{  for  }c_0 = e \left(\frac{e^2}{e^2-1}\right)^{2 d_{max}} \text{ and  } c_1 = \frac{e^2}{e^2-1}.
\end{align*}
\end{restatable}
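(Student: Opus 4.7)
The plan is to view arm $i$'s blocking status in the window $(t-M_t,\,t']$ as a non-homogeneous Markov chain and show that its dependence on the initial state decays geometrically in $M_t$.

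First, by the $T_c$-property applied at $\tau = t'$ (together with $0 < t-t' < d_i < M_t$), we have $t'-M_{t'} \le t-M_t$ and thus $H_{t'-M_{t'}} \subseteq H_{t-M_t}$. The tower property then gives
\[
P\bigl(F^{\pit}_{i,t'} \mid H_{t'-M_{t'}}\bigr) \;=\; E\!\left[P\bigl(F^{\pit}_{i,t'} \mid H_{t-M_t}\bigr) \;\Big|\; H_{t'-M_{t'}}\right].
\]
Second---and this is the crucial observation---for every $\tau \in (t-M_t,\,t']$ we have $\tau - M_\tau \le t'-M_{t'}$: for $\tau \ge t'-M_{t'}$ this follows from the $T_c$-property at $\tau = t'$, while for $\tau < t'-M_{t'}$ it is immediate since $M_\tau \ge 0$. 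Consequently the extreme point $Z(\tau-M_\tau)$, the conditional availability $q_{i,\tau}(H_{\tau-M_\tau})$, the non-skipping probability $\beta_{i,\tau}$, and hence the effective pulling rate $p_\tau = \beta_{i,\tau}\sum_j z_{i,j}(\tau-M_\tau)$ are all deterministic functions of $H_{t'-M_{t'}}$ alone. So both conditional laws in the lemma describe the \emph{same} non-homogeneous MC on states $\{0,\dots,d_i-1\}$---where the state encodes remaining blocking time: $b>0$ decrements deterministically, and $b=0$ jumps to $d_i-1$ with probability $p_\tau$ (else stays)---and differ \emph{only} in the initial distribution at time $t-M_t$: a point mass on some $b^{\ast} \in \{0,\dots,d_i-1\}$ under $H_{t-M_t}$ versus a marginal distribution $\rho$ over states under $H_{t'-M_{t'}}$.

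Third, writing $P(F^{\pit}_{i,t'} \mid H_{t-M_t}) = g(b^{\ast})$ and $P(F^{\pit}_{i,t'} \mid H_{t'-M_{t'}}) = \sum_b \rho(b)\, g(b)$, where $g(b)$ is the probability the chain reaches state $0$ at time $t'$ starting from $b$ at time $t-M_t$, the lemma reduces to bounding the oscillation of $g(\cdot)$ across states. For this I invoke a contraction analysis of the linear recursion $q_{\tau+1} = q_\tau(1-p_\tau) + q_{\tau-d_i+1}\,p_{\tau-d_i+1}$: the LP-feasibility bound \eqref{flp:window}, $\sum_j z_{i,j} \le 1/d_i$, together with the explicit formula $\beta_{i,\tau} = \min\{1,\tfrac{d_i}{2d_i-1}\tfrac{1}{q_{i,\tau}(H_{\tau-M_\tau})}\}$, keeps $p_\tau \in [0,1/d_i]$; a spectral-gap computation for the associated $d_i \times d_i$ transition matrix (uniform over this range) yields a per-step contraction factor of $1/c_1 = (e^2-1)/e^2$. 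Iterating over $M_t - (t-t')$ steps and absorbing the initial $2 d_{\max}$-step warm-up into the prefactor gives $|g(b) - g(b')| \le c_0 c_1^{-M_t}$ for any $b, b'$, with $c_0 = e\cdot c_1^{2 d_{\max}}$. Finally, the stationary state-$0$ probability for any such rate schedule is at least $d_i/(2 d_i - 1) \ge 1/2$, providing a universal lower bound on $P(F^{\pit}_{i,t'} \mid H_{t'-M_{t'}})$ that converts the additive oscillation bound into the advertised multiplicative bound $1 + c_0 c_1^{-M_t}$.

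The main technical obstacle is the uniform spectral-gap estimate $1/c_1 = 1 - 1/e^2$ for the non-homogeneous transition matrix when $p_\tau \in [0, 1/d_i]$ varies adversarially with $\tau$; this is where the specific constant $c_1 = e^2/(e^2-1)$ is pinned down. The prefactor $c_0 = e\cdot c_1^{2 d_{\max}}$ absorbs the transient behavior during the first $2 d_{\max}$ steps before the geometric decay takes full effect.
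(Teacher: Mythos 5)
Your skeleton matches the paper's: represent arm $i$'s blocking status as a non-homogeneous Markov chain on $\{0,\dots,d_i-1\}$ whose transition rates $p_\tau=\beta_{i,\tau}\sum_j z_{i,j}(\tau-M_\tau)$ are deterministic given $H_{t'-M_{t'}}$ for all $\tau\le t'$ (this is the paper's Fact on delayed exploitation), reduce the ratio to the chain's sensitivity to its condition at time $t-M_t$, show that sensitivity decays geometrically over the $\approx M_t-2d_{\max}$ intervening steps, and divide by a uniform lower bound on the denominator. Your decomposition into $g(b^*)$ versus $\sum_b\rho(b)g(b)$ is a legitimate reformulation of the paper's two copies of the chain started at $\nu(t-M_t)$ and $\nu(t'-M_{t'})$.

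The genuine gap is in how you establish the decay. You assert a ``per-step contraction factor of $1/c_1=1-1/e^2$'' via a spectral-gap computation for the $d_i\times d_i$ transition matrix, uniformly over $p_\tau\in[0,1/d_i]$. No such one-step contraction exists: two point masses at distinct blocked states $b,b'>0$ move \emph{deterministically} to $b-1,b'-1$ with zero overlap, so the one-step Dobrushin coefficient of each $\mathbf{P}_\tau$ is $1$, and for a non-homogeneous product $\prod_\tau \mathbf{P}_\tau$ a per-matrix eigenvalue bound does not compose into a norm bound anyway (at $p_\tau=0$ the chain does not mix at all from state $0$). The rate $1-1/e^2$ is not a spectral quantity of any single $\mathbf{P}_\tau$; it arises only from a multi-step minorization/coupling argument, which is exactly what the paper supplies: after a burn-in of at most $d_i$ steps, \emph{each} copy of the chain occupies state $0$ with probability at least $(1-1/d_i)^{d_i-1}\ge 1/e$ at \emph{every} subsequent time (because the arm is sampled with probability at most $1/d_i$ per round by constraint \eqref{flp:window}), so two independent copies meet at state $0$ — and hence couple, Doeblin-style — with probability at least $1/e^2$ at each step. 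This trajectory-level argument is the heart of the lemma and is what your sketch replaces with an unsupported (and, as stated, false) one-step contraction claim. A secondary, minor issue: your closing lower bound $\Pro{F^{\pit}_{i,t'}\mid H_{t'-M_{t'}}}\ge d_i/(2d_i-1)$ is not established in the bandit setting (the exact induction of the full-information Lemma~\ref{lemma:oracle:exact} does not carry over when the extreme points vary); the paper instead uses the unconditional bound $1/e$ obtained from the same availability argument, which is where the factor $e$ in $c_0$ actually comes from.
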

\paragraph{Proof sketch.}
The key idea of the proof is to link the quantities $\Pro{F^{\pit}_{i,t'} | H_{t-M_t}}$ and $\Pro{F^{\pit}_{i,t'} | H_{t'-M_{t'}}}$ to the evolution of a fast-mixing non-homogeneous MC. Let us fix an arbitrary run of the \ucb algorithm upto time $t - M_t$, which fixes the sequence of extreme points in the run as $z_{ij}(\tau - M_{\tau})$, and the skipping probabilities as  $\beta_{\tau}$ for $1 \leq \tau \leq t$ (see  Appendix~\ref{appendix:regret:compexante} for details). For this run and any fixed arm $i$, we construct the non-homogeneous MC with state space $\{0, 1, \dots, d_i-1\}$, where each state represents the number of remaining rounds until the arm becomes available. At time $\tau\geq 1$, the MC transitions from state $0$ to state $(d_i-1)$ w.p. $\beta_{i,\tau}\sum_{j}z_{i,j}(\tau-M_{\tau})$, and from state $d>0$ to state $(d-1)$ w.p. $1$. 
Let $\nu(\tau)$, be the first time on or after $\tau$ when arm $i$ becomes available.  We show that $\Pro{F^{\pit}_{i,t'} | H_{s-M_s}}$ denotes the  probability that an independent copy of the above MC which starts from state $0$ (available) at time $\nu(s- M_s)$, namely $\mathcal{X}_s$, is available at time $t'$. As the two independent MCs $\{\mathcal{X}_s, s=t,t'\}$ evolve on the same non-homogeneous MC, using ideas from coupling we show at time $t'$ the $L1$ distance between their distributions decays exponentially with $M_{t}$. Specifically, we construct a Doeblin coupling~\cite{L02} of the two MCs, where at each time  $\tau \geq (t-M_t+d_i)$ w.p. at least $1/e^2$, the two MC meet at state $0$, thus are coupling exponentially fast. \QED

As we show below, Lemma \ref{lemma:regret:mixing} allows us to relate $\alpha\Reg^{\pit}_I(T)$ to the suboptimality gaps of the sequence of LP solutions used by \ucb. This comes with an additive $\Theta(\log(T) \Delta_{\max})$ cost in the regret.

\begin{restatable}{lemma}{restateLemmaMapping}\label{lemma:regret:mapping}
For the $\alpha$-regret of \ucb, for $\alpha = \frac{d_{\max}}{2d_{\max} - 1}$ and $M = \Theta(\log T + d_{\max})$, we have
\begin{align*}
\alpha\Reg^{\pit}_I(T) \leq \Ex{\mathcal{R}_{N,\pit}}{\sum^{T-M}_{t = 1} \sum_{i \in \A} \sum_{j \in \C} \mu_{i,j} \left(z^*_{i,j} - z_{i,j}(t)\right)} +\frac{1}{3}\ln(T)\Delta_{\max} + 6 d_{\max} + 71.
\end{align*}
\end{restatable}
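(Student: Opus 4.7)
The plan is to write $\alpha\Reg^{\pit}_I(T) = \alpha\Rew^*_I(T) - \Rew^{\pit}_I(T)$ and bound each term separately. Reusing the LP-relaxation argument from the proof of Theorem~\ref{online:theorem:competitive} gives $\alpha\Rew^*_I(T) \le \alpha T \sum_{i,j}\mu_{i,j} z^*_{i,j} + O(d_{\max})$. The harder direction is to lower-bound $\Rew^{\pit}_I(T)$ in terms of the LP values of the time-varying solutions \ucb actually uses.

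The key step is a per-round identity. Let $R_t$ denote the reward at round $t$. I would show that for every $t \ge T_c$,
\begin{align*}
\E{R_t \mid H_{t-M_t}} \;=\; \sum_{i,j} \mu_{i,j}\, z_{i,j}(t-M_t)\, q_{i,t}(H_{t-M_t})\, \beta_{i,t} \;=\; \alpha \sum_{i,j} \mu_{i,j}\, z_{i,j}(t-M_t).
\end{align*}
The first equality is a direct conditional-expectation computation: given $H_{t-M_t}$, every extreme point $Z(\tau-M_\tau)$ and non-skipping probability $\beta_{i,\tau}$ with $\tau \in [t-M_t, t]$ is deterministic (because $\tau - M_\tau \le t - M_t$, hence $H_{\tau-M_\tau} \subseteq H_{t-M_t}$), and $F^{\pit}_{i,t}$ is conditionally independent of $(i_t, j_t)$ given $H_{t-M_t}$. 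The second equality uses the definition $\beta_{i,t} = \min\{1, \alpha/q_{i,t}(H_{t-M_t})\}$ together with the invariant $q_{i,t}(H_{t-M_t}) \ge \alpha$, which I would establish by induction on $\tau$ using the recursion analogous to~\eqref{online:exante:formula} and the per-$\tau$ constraint~\eqref{flp:window} applied to $Z(\tau-M_\tau)$. For $t < T_c$ the algorithm uses $Z(0)$ throughout, and the same identity holds with $z_{i,j}(0)$; the total contribution of these early rounds is at most $(T_c-1)\Delta_{\max} = O(d_{\max})\Delta_{\max}$.

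Combining the oracle upper bound with the per-round identity and taking overall expectations gives
\begin{align*}
\alpha\Reg^{\pit}_I(T) \;\le\; \alpha \sum_{t=T_c}^{T} \E{\Delta_{Z(t-M_t)}} + O(d_{\max})\,\Delta_{\max} + O(d_{\max}),
\end{align*}
where $\Delta_{Z(s)} = \sum_{i,j}\mu_{i,j}(z^*_{i,j}-z_{i,j}(s)) \ge 0$ by optimality of $Z^*$. To re-index $t \mapsto s = t-M_t$, note that $\phi(t) = t-M_t$ is non-decreasing on $[T_c, T]$ with jumps in $\{0,1\}$ and image in $[1, T-M_T]$, so at most $M_T - M_{T_c} = O(\log T)$ values of $s$ occur as duplicate preimages, each adding at most $\Delta_{\max}$. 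Using $\alpha \le 1$ and carefully tracking all boundary losses yields the stated bound: the $\tfrac13\ln(T)\Delta_{\max}$ term comes from the duplicate preimages, while $6 d_{\max} + 71$ absorbs the initial-phase contribution $(T_c - 1)\Delta_{\max}$ and the end-of-horizon correction from the LP-relaxation argument.

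The main obstacle is establishing the invariant $q_{i,t}(H_{t-M_t}) \ge \alpha$ in the bandit setting. In \oracle the invariant is maintained under a single fixed LP solution $Z^*$, whereas in \ucb the sequence $\{Z(\tau-M_\tau)\}$ varies over time and is itself random (deterministic only once $H_{t-M_t}$ is fixed). I expect the induction to follow the same template as for \oracle, but with the time-varying $\beta_{i,\tau}$ and $z_{i,j}(\tau - M_\tau)$ the algebra for the non-homogeneous Markov chain on $\{0,1,\dots,d_i-1\}$ is considerably more delicate, and one has to invoke the constraint $\sum_j z_{i,j}(\tau-M_\tau) \le 1/d_i$ separately at every $\tau$ to preserve $q \ge \alpha$ at each step.
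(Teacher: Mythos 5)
Your overall decomposition (oracle upper bound via the LP, per-round lower bound on the reward of \ucb, re-indexing $t\mapsto t-M_t$ with $O(\log T)$ duplicate preimages, and absorbing the initial phase $[1,T_c)$ and the horizon correction into the constants) matches the paper's proof. However, there is a genuine gap at the heart of your argument: the claimed per-round identity
$\Ex{}{R_t\mid H_{t-M_t}}=\sum_{i,j}\mu_{i,j}z_{i,j}(t-M_t)\,q_{i,t}(H_{t-M_t})\,\beta_{i,t}=\alpha\sum_{i,j}\mu_{i,j}z_{i,j}(t-M_t)$,
resting on the invariant $q_{i,t}(H_{t-M_t})\geq \tfrac{d_i}{2d_i-1}$, does not hold exactly in the bandit setting and cannot be established by the induction you sketch. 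In the full-information case the induction closes because the union bound
$\Pro{\neg F^{\pi}_{i,t}}\leq\sum_{t'\in[t-d_i+1,t-1]}\sum_{j'}z^*_{i,j'}\beta_{i,t'}\Pro{F^{\pi}_{i,t'}}$
pairs each $\beta_{i,t'}$ with the \emph{same} availability probability it was defined against, so each summand is exactly $\tfrac{d_i}{2d_i-1}z^*_{i,j'}$. In \ucb, the quantity you must control is $\Pro{\neg F^{\pit}_{i,t}\mid H_{t-M_t}}$, whose union bound involves $\beta_{i,t'}\cdot\Pro{F^{\pit}_{i,t'}\mid H_{t-M_t}}$, whereas $\beta_{i,t'}$ is defined against $\Pro{F^{\pit}_{i,t'}\mid H_{t'-M_{t'}}}$ --- a \emph{different} conditioning. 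The ratio of these two conditional probabilities is not $1$, and a priori nothing in constraint \eqref{flp:window} controls it. This mismatch is precisely what Lemma \ref{lemma:regret:mixing} (the non-homogeneous Markov chain Doeblin coupling) is for: it shows the ratio is at most $1+c_0c_1^{-M_t}$, which yields only the approximate bound $q_{i,t}(H_{t-M_t})\geq\tfrac{d_i}{2d_i-1}-\tfrac{d_i-1}{2d_i-1}c_0c_1^{-M_t}$ in the case $\beta_{i,t}=1$. Summing the resulting per-round losses over $t$ uses the choice $M_t\geq\log_{c_1}(c_0t^2)$ to get a convergent series contributing $\pi^2/6$ to the additive constant.

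Your write-up treats this as routine algebra ("invoke the constraint $\sum_j z_{i,j}(\tau-M_\tau)\leq 1/d_i$ separately at every $\tau$"), but that constraint alone is not enough; without the coupling estimate the invariant can fail and the exact identity is simply false. A symptom of the gap is that your accounting arrives at the paper's final bound while never producing the $O(1)$ mixing error that the bound must absorb. To repair the argument, replace the exact identity by the two-case analysis (whether $\beta_{i,t}<1$ or $\beta_{i,t}=1$), invoke Lemma \ref{lemma:regret:mixing} in the second case, and carry the $\sum_t c_0c_1^{-M_t}\leq\pi^2/6$ error into the constants. (A minor additional point: the per-arm product is $\tfrac{d_i}{2d_i-1}$, not $\tfrac{d_{\max}}{2d_{\max}-1}$; the inequality $\tfrac{d_{\max}}{2d_{\max}-1}\leq\tfrac{d_i}{2d_i-1}$ is needed to pass to the uniform $\alpha$.)
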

\paragraph{Proof sketch.}
Starting from the definition of $\alpha\Reg^{\pit}_I(T)$:  We upper bound $\alpha\cdot\Rew^{*}_I(T)$ using Theorem \ref{online:theorem:competitive}, while we incur regret in four distinct ways. 
(a) We incorporate the $\left(1- \Theta(T^{-1})\right)$-multiplicative loss as a $\Theta(d_{\max})$ additive term in the regret.
(b) We upper bound the total regret during time $1$ to $T_c$ by $(\max_{ij}\mu_{ij}) T_c = \Theta(d_{\max})$.
(c) We separate the rounds $t\geq T_c$, when $M_t$ is increased (and, thus, the same UCB values are used more than once). This happens $\Theta(\log(T))$ times, adding another $\Theta(\log T \Delta_{\max})$ term to the regret.
(d) For the rest of the ``synchronized'' rounds $t \geq T_c$ (i.e., where each one uses strictly updated UCB estimates), using Lemma \ref{lemma:regret:mixing}, we show that $i\in \A$ is played under $j \in \C$ with probability ``close'' to $\frac{d_i}{2d_i-1} z_{i,j}(t-M_t)$, where the total approximation loss leads to an additive $\Theta(1)$ term in the regret.
\QED

By Lemma \ref{lemma:regret:mapping}, we can see that \ucb accumulates only constant regret in expectation, once all the extreme points of $\extrsub$ are eliminated with high probability. For this to happen, we need enough samples from each of the arm-context pairs in the support of any $Z \in \extrsub$ (i.e., ${\rm supp}(Z) = \{(i,j)~| z^Z_{i,j}>0\}$). Once the algorithm computes a point $Z \in \extrsub$ (as a solution of $\eqref{lp:LP}(t)$), each pair $(i,j) \in {\rm supp}(Z)$ is played with probability $z^Z_{i,j} > 0$, assuming there is no blocking or skipping. Leveraging this observation, we draw from the techniques in combinatorial bandits with {\em probabilistically triggered arms} \cite{CWYW16,WC17}\footnote{The papers \cite{CWYW16, WC17} capture a much more general setting, which we omit for brevity.}. In this direction, following the paradigm of \cite{WC17}, we define the following subfamily of extreme points called {\em triggering probability} (TP) groups:
\begin{definition}[TP groups\,\cite{WC17}]
For any pair $(i,j) \in \A \times \C$ and integer $l\geq 1$, we define the TP group $\extr_{i,j,l} = \{Z \in \extr~|~2^{-l} < z^Z_{i,j} \leq 2^{-l+1}\}$, where $\{\extr_{i,j,l}\}_{l \geq 1}$ forms a partition of $\{Z \mathtt{\in} \extr~|~ z^Z_{i,j} > 0\}$.
\end{definition}

The regret analysis relies on the following counting argument (known in literature as {suboptimality charging}) -- now standard in the combinatorial bandits literature \cite{KWAS15,CWYW16, WC17}: For each TP group $\extr_{i,j,l}$, we associate a counter $N_{i,j,l}$. The counters are all initialized to $0$ and are updated as follows: At every round $t$, where the algorithm computes the extreme point solution $Z(t)$, we increase by one every counter $N_{i,j,l}$, such that $Z(t) \in \extr_{i,j,l}$. We denote by $N_{i,j,l}(t)$ the value of the counter at the beginning of round $t$. 

\paragraph{Opportunistic subsampling.} In the absence of blocking, it can be shown~\citep{WC17} that at any time $t$ and TP group $\extr_{i,j,l}$, we have $ T_{i,j}(t) \geq  \tfrac{1}{3} 2^{-l} N_{i,j, l}(t)$ with probability $1- O(1/t^3)$. This guarantees that by sampling arm-context pairs frequently enough, the algorithm learns to avoid all the points in $\extrsub$ with high probability. However, no such conclusion can be drawn in our situation, where arm blocking can potentially preclude information gain. Specifically, the naive approach of subsampling the counter increases every $d_i$ rounds, can only guarantee that $ T_{i,j}(t) \geq O(\tfrac{2^{-l}}{d_i} N_{i,j, l}(t))$ with high probability, thus, leading to a $\Theta(\sqrt{d_{\max}})$ multiplicative loss in the regret. We address the above issue via a novel {\em opportunistic subsampling} scheme, which guarantees that, even in the presence of strong local temporal correlations, we still obtain a constant fraction (independent of $d_i$) of independent samples with high probability.

\begin{restatable}{lemma}{restateLemmaSubsampling}\label{lemma:regret:counter-to-samples}
For any time $t\in [T]$, TP group $\extr_{i,j,l}$ and $ \mathcal{O}(2^l\log(t)) \leq s \leq t-1$, we have:
$$\Pro{N_{i,j, l}(t) = s, T_{i,j}(t) \leq  \tfrac{1}{24 e} 2^{-l} N_{i,j, l}(t)}\leq \tfrac{1}{t^3}.$$
\end{restatable}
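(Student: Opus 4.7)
The plan is to fix the TP group $\extr_{i,j,l}$, condition on the counter value $N_{i,j,l}(t) = s$, and lower bound the number of actual plays of $(i,j)$ at the $s$ counter-increment rounds. Let $t_1 < \cdots < t_s$ be these rounds---each an $H_{\tau - M_\tau}$-measurable stopping time, since $Z(\tau - M_\tau)$ depends only on the UCBs at time $\tau - M_\tau$---and write $Y_k \mathrel{:=} \event{(i,j)\text{ played at }t_k}$. Since $T_{i,j}(t) \geq \sum_{k=1}^s Y_k$, it suffices to prove $\Pro{\sum_k Y_k \leq s/(24e\cdot 2^l)} \leq t^{-3}$.

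First I would establish a per-round conditional lower bound. Conditioning on $H_{t_k - M_{t_k}}$ freezes $Z(t_k - M_{t_k}) \in \extr_{i,j,l}$ (so $z^{Z}_{i,j} > 2^{-l}$) together with $q_{i,t_k}(H_{t_k - M_{t_k}})$ and $\beta_{i,t_k}(H_{t_k - M_{t_k}})$. The design of $\beta$ in \ucb (mirroring \oracle) enforces $q_{i,t_k}\,\beta_{i,t_k} = d_i/(2d_i-1) \geq 1/2$ whenever the clip $\beta \leq 1$ is not active, so by decomposing the round's randomness into context $\times$ sample $\times$ availability $\times$ non-skip one obtains $\E{Y_k \mid H_{t_k - M_{t_k}}} = z^{Z}_{i,j}\cdot q_{i,t_k}\cdot \beta_{i,t_k} > 2^{-l}/2$.

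The $Y_k$'s are strongly correlated, however: playing $(i,j)$ at $t_k$ blocks arm $i$ for the next $d_i-1$ rounds. Naively retaining only counter-increment rounds spaced by at least $d_i$ would cost a factor $d_i$ (the source of the $\sqrt{d_{\max}}$ loss mentioned in the text). The opportunistic subsampling step instead constructs a sequence of auxiliary \emph{independent} Bernoulli trials $\{B_k\}$ of mean $\Omega(2^{-l})$, coupled to the algorithm so that $B_k = 1 \Rightarrow Y_k = 1$ off a rare event. The coupling exploits two resources: the iid skip coins $\{U_{t_k}\}$, which are independent of everything else, and the Doeblin coupling of the non-homogeneous availability chain from Lemma \ref{lemma:regret:mixing}, which at each round has probability $\gtrsim 1/e^2$ of meeting the ``available'' state and thus renewing---independently of $d_i$. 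Counting only rounds in which such a renewal coincides with a favorable skip coin produces $\Omega(s)$ effectively independent trials rather than $\Omega(s/d_i)$; the $1/e$ in the final constant traces back to this per-step meeting probability.

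The conclusion then follows from a standard multiplicative Chernoff bound on the independent $\{B_k\}$'s: $\Pro{\sum_k B_k \leq s/(24e\cdot 2^l)} \leq \exp\!\bigl(-\Omega(s/2^l)\bigr)$, which is at most $t^{-3}$ once $s \geq C\cdot 2^l\log t$ for a sufficiently large constant $C$---exactly the $\mathcal{O}(2^l\log t)$ hypothesis. The coupling $\sum_k Y_k \geq \sum_k B_k$ transfers the bound to the $Y_k$'s. The main technical hurdle will be the coupling construction in the third paragraph: extracting $\Omega(s)$ truly independent trials from the blocking-correlated sequence while preserving $\Omega(2^{-l})$ per-trial success probability. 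Lemma \ref{lemma:regret:mixing} is what makes this possible, by furnishing an $\Omega(1)$ effective renewal rate per round that is insensitive to the blocking horizon $d_i$.
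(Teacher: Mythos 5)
Your reduction (fixing the counter-increment rounds $t_1<\cdots<t_s$, lower-bounding $T_{i,j}(t)$ by $\sum_k Y_k$, and the per-increment bound $\E{Y_k \mid H_{t_k-M_{t_k}}}\geq \tfrac{d_i}{2d_i-1}2^{-l}$) matches the paper's setup, but the step you yourself flag as the ``main technical hurdle'' is exactly where the argument has a genuine gap, and the construction you sketch for it cannot work as stated. Note first that if the conditional bound $\E{Y_k\mid Y_1,\dots,Y_{k-1}}\geq 2^{-l}/2$ held, the dependent Chernoff bound would finish with no subsampling at all; the whole difficulty is that it fails (the conditional mean is $0$ whenever a play occurred within the last $d_i-1$ rounds). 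Your fix — independent Bernoulli trials $B_k$ of mean $\Omega(2^{-l})$, one per counter increment, coupled so that $B_k=1\Rightarrow Y_k=1$ off a rare event — is structurally impossible: if $t_{k+1}-t_k<d_i$ then $Y_k$ and $Y_{k+1}$ cannot both equal $1$, so the event $\{B_k=B_{k+1}=1\}$, which has probability $\Theta(2^{-2l})$ by independence, must be charged entirely to coupling failure; summed over $s=\Theta(2^l\log t)$ increments this is $\Omega(2^{-l}\log t)$, far exceeding the target $t^{-3}$. You also attribute the $\Omega(1)$ renewal rate to the Doeblin coupling of Lemma \ref{lemma:regret:mixing}; that lemma serves a different purpose (decorrelating availability from the history used to set $\beta_{i,t}$, feeding into Lemma \ref{lemma:regret:mapping}) and is not the mechanism here.

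The paper's resolution keeps the number of trials at $\lfloor s/(2d_i-1)\rfloor$ — one per block of $2d_i-1$ consecutive counter increments — but makes each trial succeed with probability $\Omega\bigl((2d_i-1)2^{-l}\bigr)$, i.e.\ scaling \emph{with} $d_i$, so the product is still $\Omega(s\,2^{-l})$ and the $d_i$ factors cancel. Concretely, each play is split into an ``attempt'' $Y_k'$ (sampled, not skipped, context $j$; probability at least $\tfrac{d_i}{2d_i-1}2^{-l}$ per increment) and the availability event; within each block the designated trial is the first attempted increment among the \emph{last} $d_i$ increments of the block. This forces designated times in consecutive blocks to be at least $d_i$ apart, whence the arm is available at a designated time with probability at least $(1-1/d_i)^{d_i-1}\geq 1/e$ — an elementary consequence of constraint \eqref{flp:window} (the arm is sampled at rate at most $1/d_i$), not of the Markov-chain coupling — while some attempt occurs among the $d_i+1$ eligible increments of a block with probability at least $\tfrac{d_i(d_i+1)}{2(2d_i-1)}2^{-l}$. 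The resulting block variables are still not independent; the proof closes with the Chernoff bound for dependent variables (Theorem \ref{lemma:regret:chernoff}), which needs only a conditional-mean lower bound. To repair your write-up, replace the independent-trials coupling with this block construction (or an equivalent one whose per-trial mean grows linearly in $d_i$) and the independence-based Chernoff with the martingale version.
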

\paragraph{Proof sketch.}
Due to blocking, there is no uniform lower bound for playing a pair $(i,j)$ each time $N_{i,j,l}(t)$ is increased. Therefore, we subsample the increases of $N_{i,j, l}(t)$ in a way that: (a) the subsampled instances of increases are at least $d_i$ rounds apart, and (b) the subsampled sequence captures a constant fraction (independent of $d_i$) of non-skipped rounds of the original sequence. The two properties ensure that, in the subsampled sequence, the number of times a pair $(i,j)$ is played concentrates around its mean. For a TP group $\extr_{i,j,l}$, we consider blocks of $(2d_i -1)$ contiguous counter increases. From each block we obtain one sample in the first $d_i$ counter increases, opportunistically picking a non-skipped round if there is one. By construction, the samples remain $d_i$ rounds apart, ensuring property (a). Also, we show there is at least one non-skipped round per block with probability at least $\tfrac{(2d_i-1)}{8e}2^{-l}$, ensuring  property (b). \QED

As we observe, the small size of \eqref{lp:LP} implies that all its extreme points are {\em sparse}. This makes it less sensitive to the error in the estimates; which, in turn, leads to tighter regret bounds (see Theorem \ref{theorem:regret:bound}).
\begin{restatable}{lemma}{restateRegretSparse}\label{lemma:regret:sparse}
For any  $Z \in \extr$,  $|\text{supp}(Z)| \leq k + m$. 
\end{restatable}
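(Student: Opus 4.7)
The plan is to apply the standard linear-algebraic characterization of extreme points of a polyhedron and exploit the fact that the LP has only $k+m$ non-trivial (i.e., non-negativity) constraints.

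The LP has $km$ decision variables $\{z_{i,j}\}_{i \in \A, j \in \C}$, so a point $Z$ is an extreme point of the feasible region iff there exist $km$ linearly independent active constraints at $Z$. The constraints come in three groups: the $k$ budget constraints \eqref{flp:window}, the $m$ context constraints \eqref{flp:conditional}, and the $km$ non-negativity constraints $z_{i,j}\geq 0$.

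First I would fix an extreme point $Z\in\extr$ and let $S=\mathrm{supp}(Z)$. For every pair $(i,j)\notin S$, we have $z^Z_{i,j}=0$, so the non-negativity constraint $z_{i,j}\geq 0$ is active at $Z$; this yields $km-|S|$ active constraints which are trivially linearly independent since each one involves a distinct standard basis vector. To complete a system of $km$ linearly independent active constraints characterizing $Z$, we need at least $|S|$ additional linearly independent active constraints, and these must come from \eqref{flp:window} and \eqref{flp:conditional}. Since the total number of such constraints is $k+m$, we obtain $|S|\leq k+m$, as desired.

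There is essentially no technical obstacle here, only a bookkeeping check: the $km-|S|$ non-negativity constraints for indices in the complement of $S$ are linearly independent of any subset of the $k+m$ row-sum and column-sum constraints (indeed, augmenting any such subset by these coordinate hyperplanes preserves linear independence, because the coordinate hyperplanes span the space of variables that are forced to zero, while the row/column constraints act on the remaining $|S|$ coordinates). Equivalently, one can introduce slack variables $s_i$ for \eqref{flp:window} and $t_j$ for \eqref{flp:conditional}, obtaining an LP in standard form with $km+k+m$ variables and $k+m$ equality constraints, for which any basic feasible solution has at most $k+m$ positive components in total; in particular at most $k+m$ of the $z_{i,j}$ are positive, giving $|\mathrm{supp}(Z)|\leq k+m$.
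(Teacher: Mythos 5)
Your proof is correct and follows essentially the same route as the paper: both arguments count the $km$ linearly independent constraints that must be tight at an extreme point, observe that at most $k+m$ of them can come from \eqref{flp:window} and \eqref{flp:conditional}, and conclude that at least $km-(k+m)$ variables are forced to zero. The slack-variable/basic-feasible-solution remark at the end is an equivalent restatement of the same fact.
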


By combining Lemmas~\ref{lemma:regret:mapping},~\ref{lemma:regret:counter-to-samples} and~\ref{lemma:regret:sparse}, along with suboptimality charging arguments of~\citep{WC17} (as described above), we provide our final regret upper bound in the following theorem.

\begin{restatable}{theorem}{restateTheoremFinalRegret}\label{theorem:regret:bound}
The $\alpha$-regret of \ucb for $\alpha = \frac{d_{\max}}{2d_{\max}-1}$, can be upper bounded as
\begin{align*}
\hspace{-1.5em}\alpha\Reg_I^{\pit}(T)\leq \sum_{i \in \A}\sum_{j \in \C} \frac{C\left(k+m\right) \log{(T)}}{\Delta^{i,j}_{\min}} + \frac{\pi^2}{6}\sum_{i\in\A}\sum_{j\in \C} \log{\left(\frac{2\left(k+m\right)}{\Delta^{i,j}_{\min}}\right)} \Delta_{\max} + 6\cdot d_{\max},
\end{align*}
where $C>0$ is some universal constant.
\end{restatable}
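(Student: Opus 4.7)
The plan rests on stringing together the three preceding lemmas through the suboptimality-charging framework of \cite{CWYW16, WC17}. First, Lemma \ref{lemma:regret:mapping} reduces the $\alpha$-regret to $\E{\sum_{Z \in \extrsub} n_Z(T)\Delta_Z}$ (where $n_Z(T)$ counts how often extreme point $Z$ is chosen by \ucb up to time $T$), plus the additive residual $\tfrac{1}{3}\log(T)\Delta_{\max}+O(d_{\max})$. So the remaining task is to bound the counter sum.

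Next, introduce the standard UCB concentration event $\mathcal{E}_t$ that $|\bar\mu_{i,j}(t)-\mu_{i,j}|\leq \sqrt{3\log(t)/(2T_{i,j}(t))}$ for all $(i,j)$; Hoeffding plus a union bound gives $\Pro{\neg\mathcal{E}_t}\leq 2km/t^2$, so concentration failures contribute only a constant to the expected regret, absorbed into the final bound. On $\mathcal{E}_t$, the UCB-optimality of $Z(t)$ relative to $Z^*$ under the linear LP objective implies
\[
\Delta_{Z(t)} \leq \sum_{(i,j)\in \mathrm{supp}(Z(t))} 2\sqrt{\tfrac{3\log(t)}{2 T_{i,j}(t)}}\,z^{Z(t)}_{i,j},
\]
and by the sparsity bound of Lemma \ref{lemma:regret:sparse} only $\leq k+m$ summands are non-zero. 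This is exactly the TPM-bounded-smoothness inequality of \cite{WC17} with smoothness constant $B=1$ and triggered-arm count $K=k+m$, which is what generates the single factor $(k+m)$ (rather than $(k+m)^2$) in the stated bound.

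With this inequality in hand, the plan is to follow the TP-group decomposition of \cite{WC17}: for $Z(t)\in \extr_{i,j,l}$, combining the above with $z^{Z(t)}_{i,j}\in (2^{-l},2^{-l+1}]$ and $\Delta_{Z(t)}\geq \Delta^{i,j}_{\min}$ forces, on $\mathcal{E}_t$, that the ``leading'' pair in $\mathrm{supp}(Z(t))$ has only $O(4^l(k+m)^2\log(t)/(\Delta^{i,j}_{\min})^2)$ plays so far. Lemma \ref{lemma:regret:counter-to-samples} then translates this into a cap on the counter $N_{i,j,l}(T)$ once it passes the threshold $s_0=O(2^l\log t)$, with failure probability $1/t^3$. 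A geometric sum over $l\leq l_{\max}=O(\log(2(k+m)/\Delta^{i,j}_{\min}))$ (beyond which the counter cap would exceed $T$) produces the leading per-pair term $O((k+m)\log(T)/\Delta^{i,j}_{\min})$, while the sub-threshold warm-up and the $\sum_t 1/t^3$ failure tails together yield the $(\pi^2/6)\log(2(k+m)/\Delta^{i,j}_{\min})\Delta_{\max}$ residual. Summing over $(i,j)$ and adding the $O(d_{\max})$ residual from Lemma \ref{lemma:regret:mapping} closes out the bound.

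The main obstacle is precisely the blocking-adjusted version of this argument: classical CUCB uses $z^{Z(t)}_{i,j}$ as a per-round lower bound on the probability of sampling $(i,j)$, which fails here because the sampled arm may be blocked or skipped by the conditional-skipping step. Lemma \ref{lemma:regret:counter-to-samples} is the non-trivial substitute, but since it only becomes effective past the threshold $s_0=O(2^l\log t)$, aligning the sub-threshold contribution with the stated $(\pi^2/6)\log(\cdot)\Delta_{\max}$ term — without giving back the sparsity factor $k+m$ to the delayed-exploitation lag $M=\Theta(\log T)$ already paid for in Lemma \ref{lemma:regret:mapping} — is the most delicate piece of the book-keeping.
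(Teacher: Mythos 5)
Your proposal follows essentially the same route as the paper's proof: reduce to $\E{\sum_t \Delta_{Z(t)}}$ via Lemma \ref{lemma:regret:mapping}, condition on the UCB concentration ("nice sampling") and counter-to-samples ("nice triggering") events, and run the TP-group suboptimality-charging decomposition of \cite{WC17} with the sparsity bound $|\mathrm{supp}(Z)|\leq k+m$ supplying the $(k+m)$ factor and the triggering-failure tails supplying the $\tfrac{\pi^2}{6}\log(2(k+m)/\Delta^{i,j}_{\min})\Delta_{\max}$ term. The plan is correct and matches the paper's argument in both structure and the provenance of each term in the bound.
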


\section{Hardness of the online problem}\label{sec:hardness}

The NP-hardness of the full-information CBB problem follows by~\cite{SST09,BSSS19}, even in the non-contextual (offline) setting~\cite{BSSS19}. In the following theorem, we provide unconditional hardness for the contextual case of our problem (see Appendix~\ref{appendix:hardness} for the proof). This result implies that the competitive guarantee of \oracle is (asymptotically) optimal, even for the single arm case. Moreover, since the construction in our proof involves deterministic rewards, the theorem also implies the optimality of the algorithm in~\cite{DSSX18}, thus, improving on the $0.823$-hardness presented in that work.
\begin{restatable}{theorem}{hardnesscompetitive}\label{hardness:thm:competitive}
For (asymptotic) competitive ratio of the full-information CBB problem, it holds: 
$$\lim_{T \rightarrow +\infty} \sup_{\pi} \rho^{\pi}(T) \leq  \frac{d_{\max}}{2d_{\max}-1}.$$
\end{restatable}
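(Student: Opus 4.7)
My plan is to exhibit a one-parameter family of instances $\{I_p\}_{p\in(0,1/d)}$, with $d:=d_{\max}$, on which every online algorithm's ratio against the clairvoyant oracle approaches $d/(2d-1)$ as $T\to\infty$ and $p\to 0$. The instance $I_p$ has a single arm with blocking delay $d$ and two contexts with deterministic rewards: context $A$ with probability $f_A=p$ and reward $1$, and context $B$ with probability $f_B=1-p$ and reward $r_p:=\frac{(d-1)p}{1+(d-1)p}$. The value $r_p$ is the Bellman ``indifference'' reward for the induced single-arm MDP, chosen so that every online action is simultaneously optimal, while the oracle is still able to realize the LP upper bound to leading order in $p$.

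First I would upper bound the expected reward of any online policy by modeling the single-arm problem as an average-reward MDP with state $s\in\{0,1,\dots,d-1\}$, where $s=0$ denotes ``available'' and $s>0$ means $s$ rounds until availability. Blocked states transition deterministically, giving $h(s)=h(0)-sJ$ for the bias $h$ and gain $J$. The Bellman equation at $s=0$ reads
$$J+h(0)\;=\;p\max\!\bigl(1+h(0)-(d-1)J,\,h(0)\bigr)+(1-p)\max\!\bigl(r_p+h(0)-(d-1)J,\,h(0)\bigr).$$
Since $J\le 1/d<1/(d-1)$, the $A$-branch always chooses ``play''. In the $B$-branch, the two arguments coincide exactly when $r_p=(d-1)J$, which yields $J^*=p/(1+(d-1)p)$. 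Hence \emph{every} policy is average-reward optimal, and $\sup_\pi \Rew^\pi_{I_p}(T)\le pT/(1+(d-1)p)+O(d)$, where the $O(d)$ term bounds the transient bias of the finite-state ergodic MDP.

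Next I would lower bound the oracle via a specific clairvoyant strategy $\sigma$. Let $T_1<T_2<\dots$ be the $A$-arrival times, set $\tau_1=T_1$ and $\tau_{k+1}=\min\{T_m : T_m\ge\tau_k+d\}$; strategy $\sigma$ plays the arm at each $\tau_k$ (reward $1$) and additionally at $\tau_k+d,\tau_k+2d,\dots$ for as long as the blocking constraint permits before $\tau_{k+1}$ (each such play lies in a $B$-context by construction, collecting $r_p$). The cycle lengths $Y_k:=\tau_{k+1}-\tau_k$ are i.i.d.\ by the strong Markov property of the Bernoulli$(p)$ arrival process, and a direct computation gives $E[Y]=1/p+O(d)$ and $E[\lfloor Y/d\rfloor]=E[Y]/d+O(1)$. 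Combining these with $r_p=(d-1)p+O(p^2)$, the renewal-reward theorem yields
$$\lim_{T\to\infty}\frac{\Rew^\sigma_{I_p}(T)}{T}\;=\;\frac{1+r_p(E[\lfloor Y/d\rfloor]-1)}{E[Y]}\;=\;\frac{p(2d-1)}{d}-O(p^2 d).$$
Since $\Rew^*_{I_p}(T)\ge \Rew^\sigma_{I_p}(T)$, combining with the online upper bound gives, for every $\pi$ and every sufficiently small $p$, $\limsup_T \Rew^\pi_{I_p}(T)/\Rew^*_{I_p}(T)\le \tfrac{d}{2d-1}(1+O(p))$; taking the supremum over $\pi$ (the right-hand side is independent of $\pi$) and then letting $p\to 0$ proves the claim.

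The principal obstacle I anticipate is the rigorous execution of the oracle step: translating the infinite-horizon renewal rate into a finite-time lower bound on $\Rew^*_{I_p}(T)$ requires controlling the partial cycles before $\tau_1$ and after the last $\tau_K\le T$, which together contribute $O(1/p)=o(T)$ and therefore wash out after division by $T$, but the bookkeeping is delicate. The Bellman step is a mechanical computation on a one-dimensional state space, and the final limit is first-order algebra in $p$.
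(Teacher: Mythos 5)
Your proposal is correct and follows essentially the same route as the paper's proof: a single effective arm of delay $d_{\max}$, a rare context of probability $p$ carrying the dominant reward and a common context whose reward is tuned near the Bellman indifference threshold $(d-1)J$, an online upper bound of $\approx p/(1+(d-1)p)$ obtained from the stationary analysis of the $d$-state availability Markov chain, and a clairvoyant lower bound of $\approx p(2d-1)/d$ obtained by playing at (suitably spaced) rare-context arrivals and filling the gaps with plays every $d$ rounds, followed by the limit $p\to 0$. The only substantive differences are cosmetic: you set the common-context reward exactly at the indifference value (making all online policies optimal) where the paper sets it just below the threshold so that the optimizer is $(q_1,q_2)=(1,0)$, and your clairvoyant bound uses a renewal-reward argument over i.i.d.\ cycles where the paper uses a block-partition strategy with block length $d\lceil 1/\sqrt{\epsilon}\rceil$; both yield the same asymptotics.
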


\section{Simulations} \label{simulations}
We simulate the \ucb algorithm for $60$ sample paths and $10k$ iterations on different instances, and report the mean, $25\%$ and $75\%$ trajectories of cumulative $\alpha$-regret. The $\alpha$-regret is defined empirically, using the solution of the LP as an upper bound on the optimal average reward  

In addition, we report three other quantities:

(i) The empirical probability that the LP solution causes round skipping. Recall at any time $t$ and having observed context $j_t \in \C$, \ucb samples arms using the extreme point $\{z_{j_t i}([t-M_t]^+): i \in \A \}$, and may return no arm if $\frac{1}{f_{j_t}}\sum_{i\in \A} z_{j_t i}([t-M_t]^+) < 1$. We denote this time-series by {\em lp skip} in the figures.

(ii) The empirical probability that the adaptive skipping technique actually skips a round to ensure future availability, even after an arm is sampled using the extreme point. We denote this time-series by {\em skip} in the figures.

(iii) The empirical blocking probability, namely, the time-average number of attempts to play an arm that fail due to blocking. We call it {\em block} in the figures. 

\paragraph{UCB Greedy:} We compare our algorithm with a UCB Greedy algorithm that plays the available arm which has the highest UCB index, given the observed context $j_t \in \C$, namely, $i^{g}_t = \arg\max_{i\in \A \text{ s.t. }F_{i,t}} \bar{\mu}_{i,j_t}(t)$, where $j_t$ is the context and $F_{i,t}$ is the event that any arm $i\in \A$ is available at time $t$. We do not use delayed exploitation for this algorithm, since there is no adaptive rounding, unlike \ucb. For this algorithm {\em lp skip} and {\em skip} both equal $0$ by construction, whereas {\em blocking} may occur.

\begin{figure}[h]
    \centering
\begin{subfigure}{0.3\textwidth}
    \includegraphics[width = 1\textwidth]{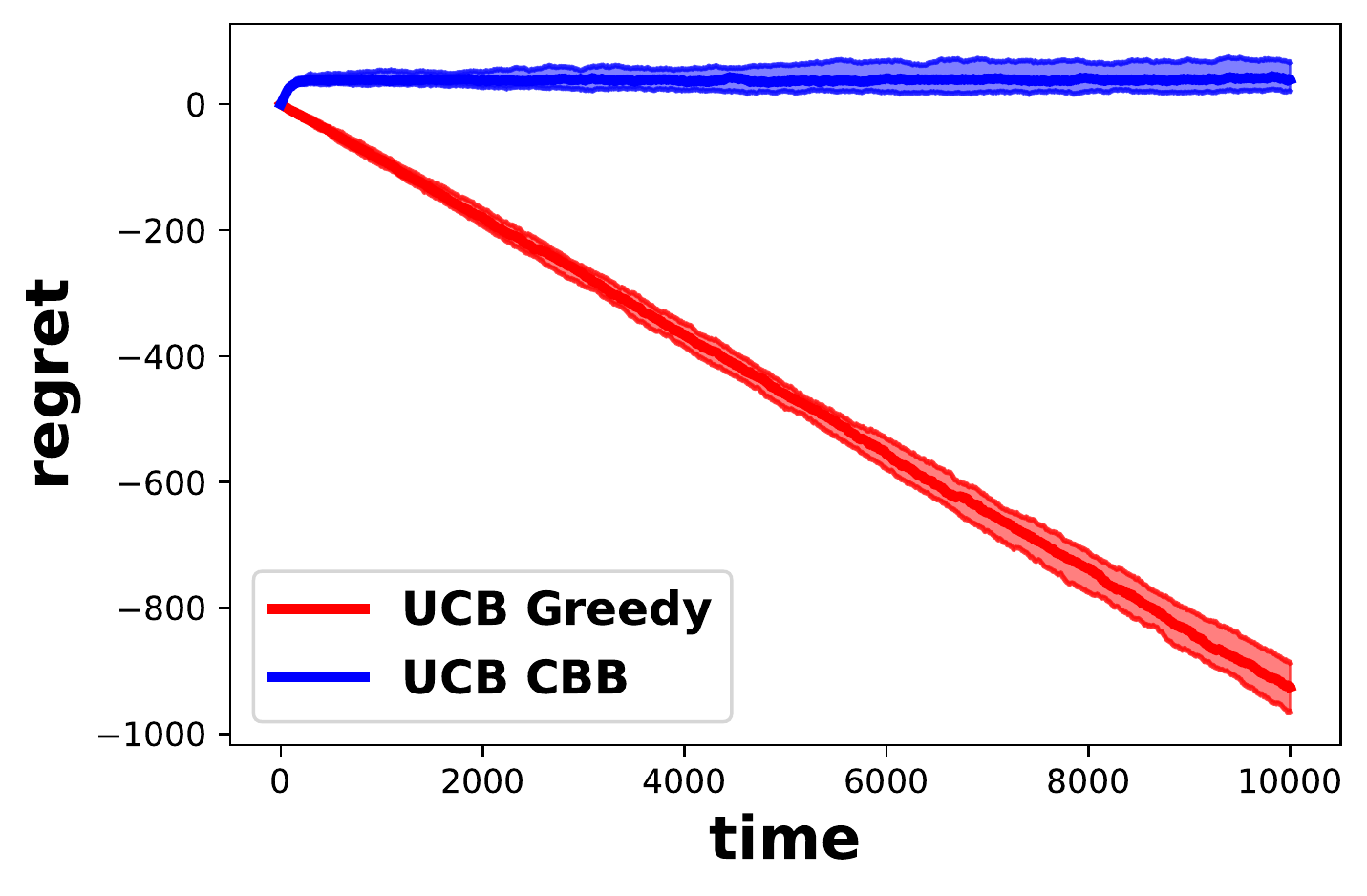}
    \caption{\small Cumulative Regret, $gap\mathtt{=}0.4$}
    \label{fig:integral_1}
\end{subfigure}
\begin{subfigure}{0.68\textwidth}
    \includegraphics[width = 1\textwidth]{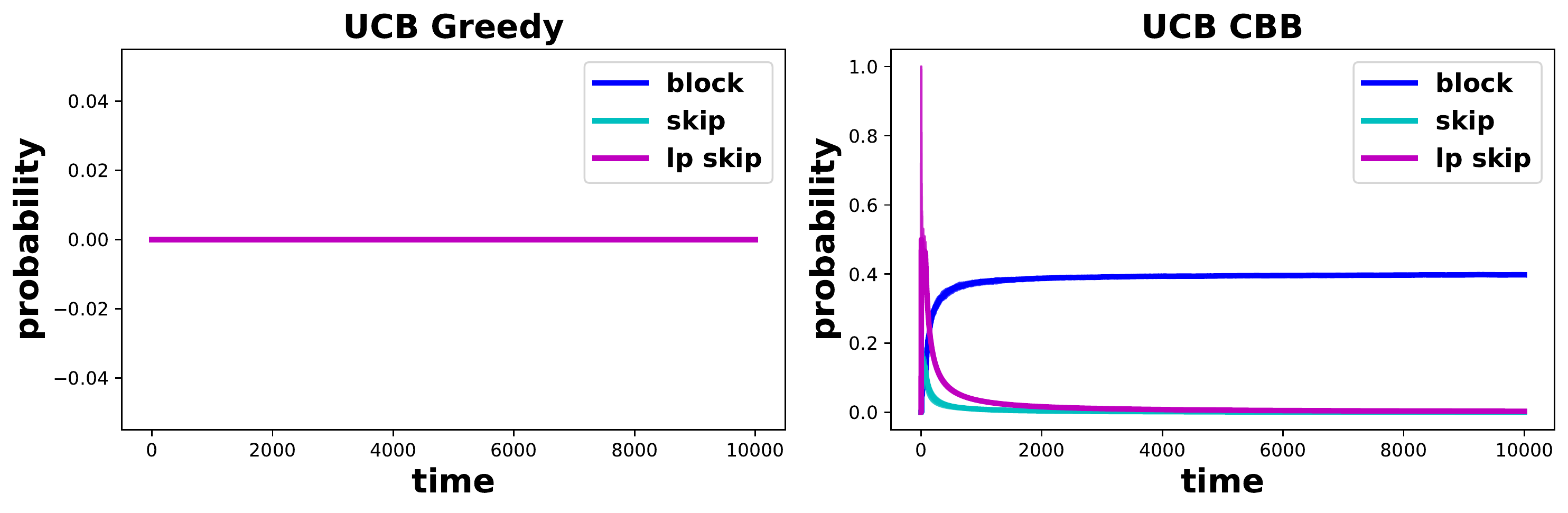}
    \caption{\small LP skipping, skipping, and blocking, $gap\mathtt{=}0.4$}
    \label{fig:integral_1b}
\end{subfigure}

\begin{subfigure}{0.3\textwidth}
    \includegraphics[width = 1\textwidth]{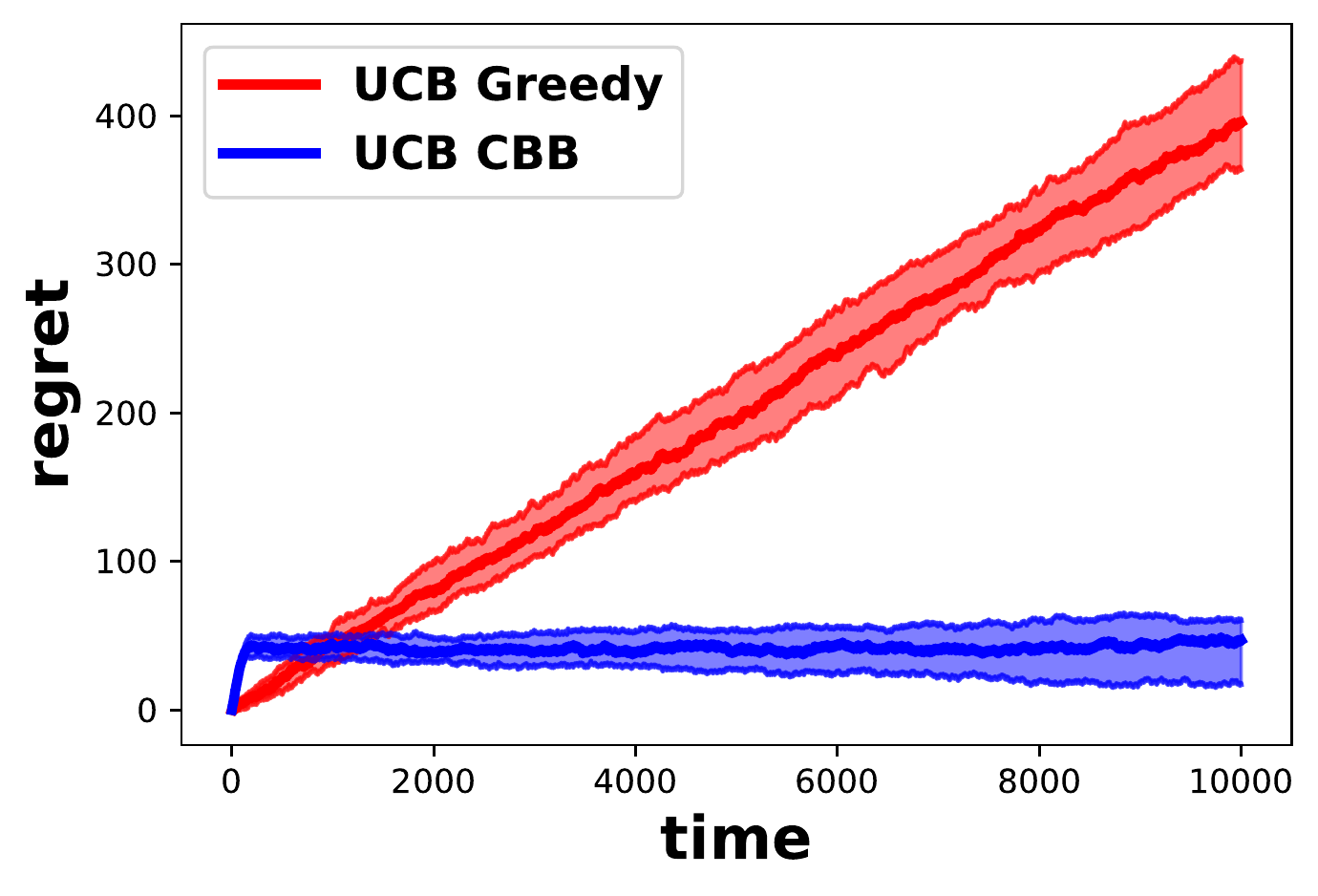}
    \caption{\small Cumulative Regret, $gap\mathtt{=}0.6$}
    \label{fig:integral_2}
\end{subfigure}
\begin{subfigure}{0.68\textwidth}
    \includegraphics[width = 1\textwidth]{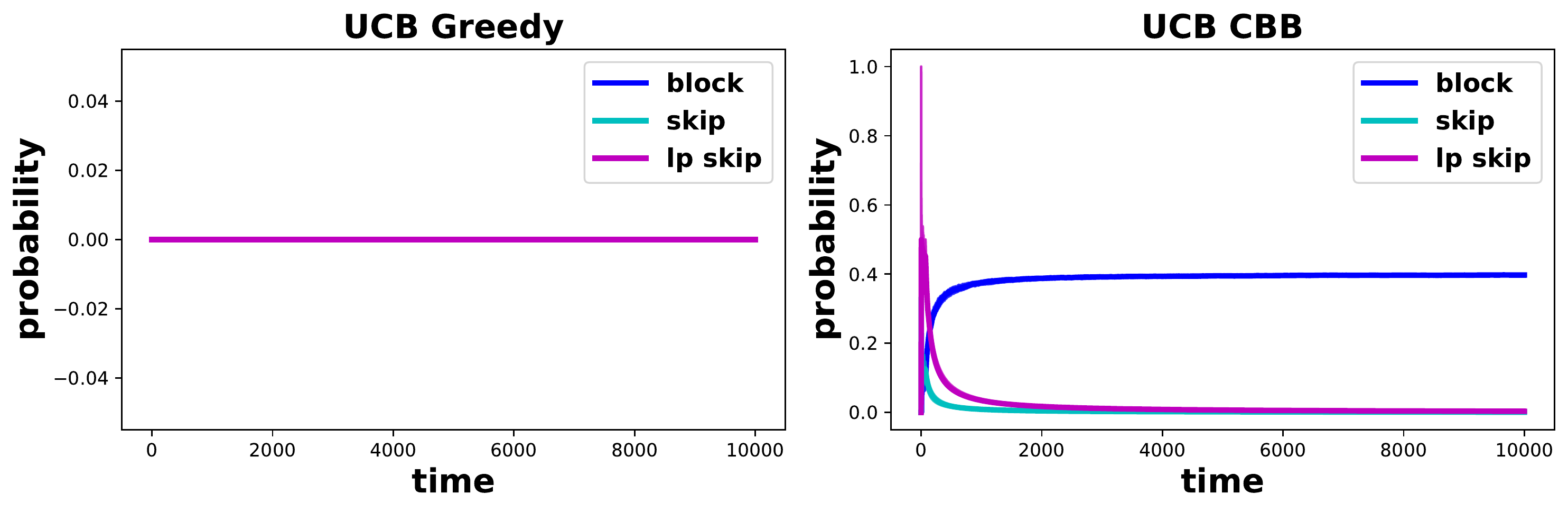}
    \caption{\small LP skipping, skipping, and blocking, $gap\mathtt{=}0.6$}
    \label{fig:integral_2b}
\end{subfigure}

\begin{subfigure}{0.3\textwidth}
    \includegraphics[width = 1\textwidth]{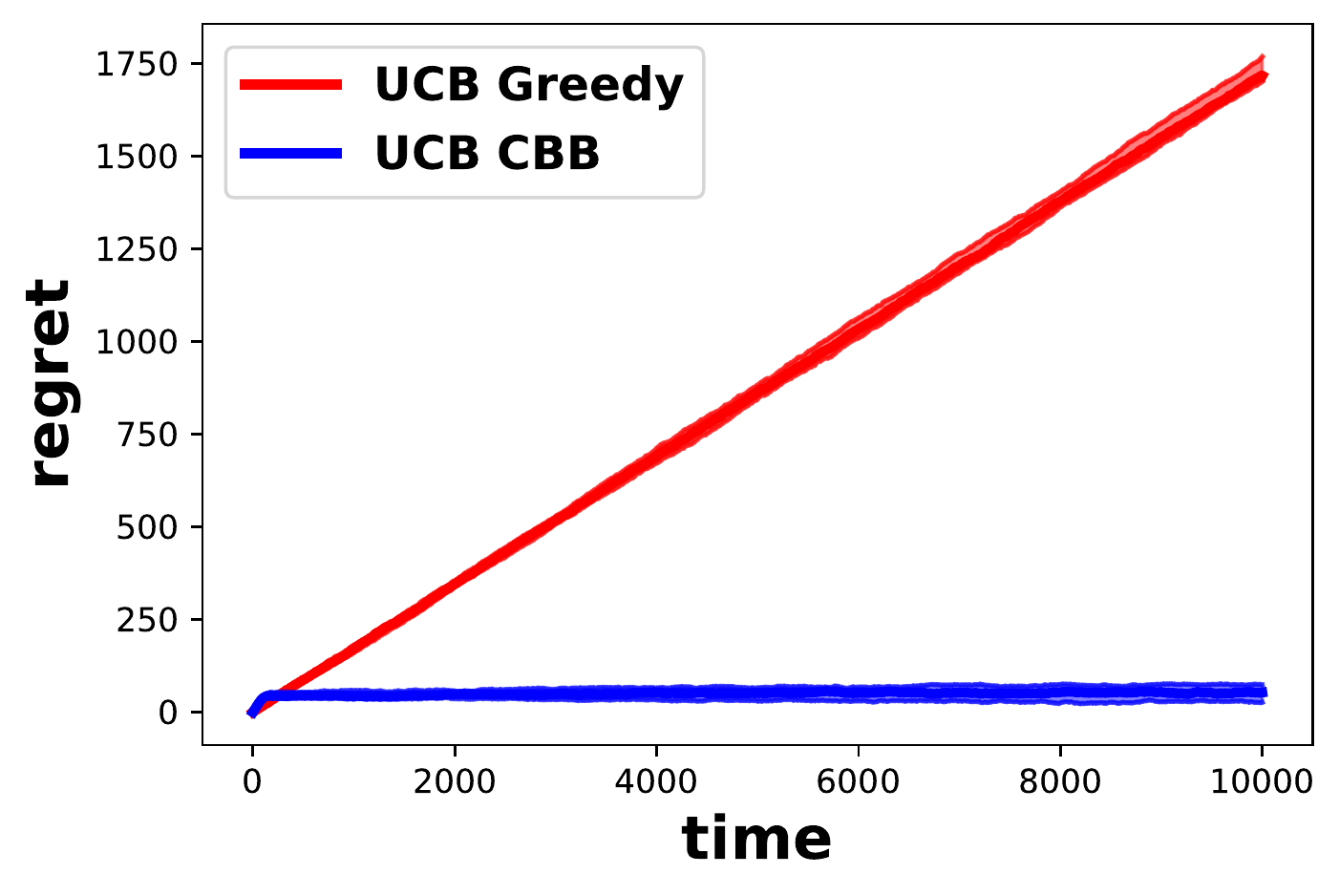}
    \caption{\small Cumulative Regret, $gap\mathtt{=}0.8$}
    \label{fig:integral_3}
\end{subfigure}
\begin{subfigure}{0.68\textwidth}
    \includegraphics[width = 1\textwidth]{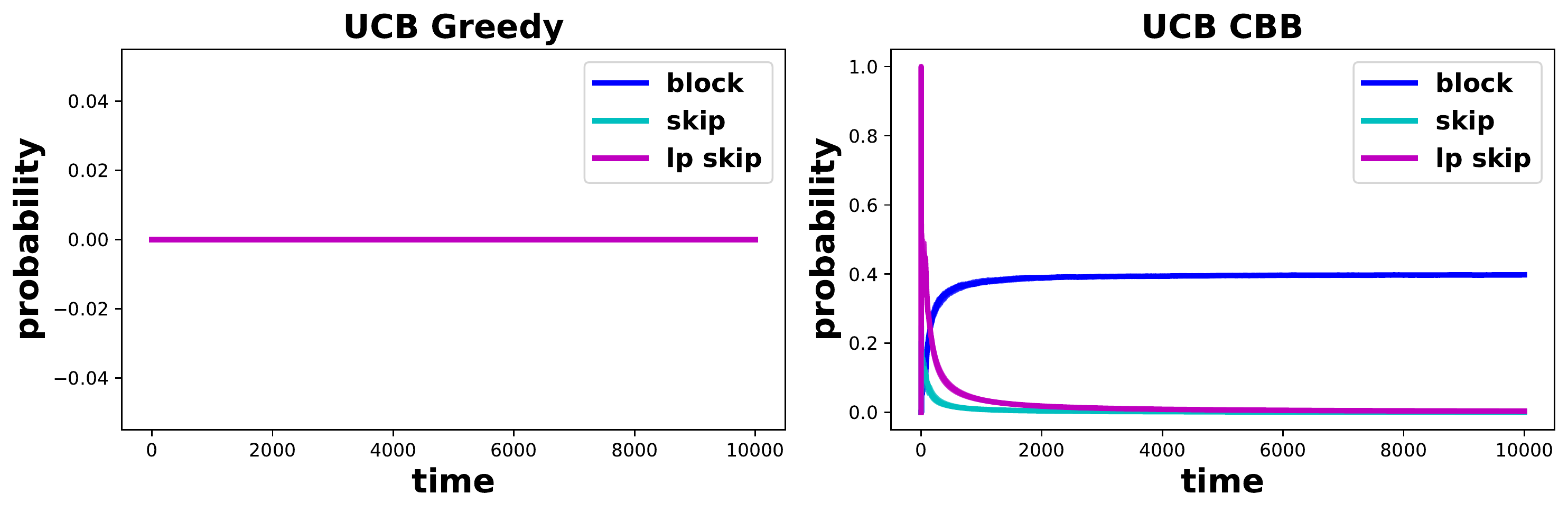}
    \caption{\small LP skipping, skipping, and blocking, $gap\mathtt{=}0.8$}
    \label{fig:integral_3b}
\end{subfigure}
\caption{\small Integral instance on $3$ arms with delay $3$ each, and $3$ equiprobable contexts with varying gap.}
\label{fig:integral}
\end{figure}

\paragraph{Integral Instances:} In this class, we consider $3$ arms each of delay $3$, and $3$ contexts that appear with equal probability.  For Figure~\ref{fig:integral} we have the $i$-th arm having a reward $0.9$ for the $i$-th context for all $i\in [3]$. Whereas, all the remaining arm-context pairs have mean $(0.9- gap)$ with $gap = 0.4$ for Figure~\ref{fig:integral_1},~\ref{fig:integral_1b}, $gap = 0.6$ for Figure~\ref{fig:integral_2},~\ref{fig:integral_2b}, and $gap = 0.8$ for Figure~\ref{fig:integral_3},~\ref{fig:integral_3b}. The rewards are generated by Bernoulli distributions. 

In these cases, the \eqref{lp:LP} admits a solution whose support yields a matching between arms and contexts, where arm $i$ is matched to context $i$ for $i\in [3]$. As a result, the marginal probabilities used by \ucb for sampling arms are integral. We see the \ucb algorithm has a $0.6$-Regret that grows logarithimically for all instances. Whereas, for the UCB Greedy algorithm the $0.6$-Regret is positive linear for $gap = 0.8$, and  $0.6$; but is negative linear for $gap = 0.4$. The Greedy algorithm beats the \ucb algorithm in the cumulative regret for $gap = 0.4$, as the effect of choosing the optimal matching in \ucb is countered by the effect of adaptively skipping at a rate $\tfrac{2}{5}$. On the other end, for $gap = 0.8, 0.6$ the \ucb algorithm performs better in the cumulative regret as the effect of choosing the optimal matching outweighs the effect of adaptive skipping. We note that this instance is dense, as $\sum_i \tfrac{1}{d_i} = 1$. Therefore, it is natural that the Greedy performs better when facing instances of smaller gaps among the rewards. In all the cases, the UCB Greedy algorithm incurs no blocking, whereas the \ucb algorithm converges to an empirical blocking rate of $\tfrac{2}{5}$.

\paragraph{Non-Integral Instances:} In this class, we consider three instances. The first two instances have $3$ arms and $3$ contexts, whereas the third has $10$ arms and $10$ contexts.

\begin{figure}[h]
    \centering
\begin{subfigure}{0.3\textwidth}
    \includegraphics[width = 1\textwidth]{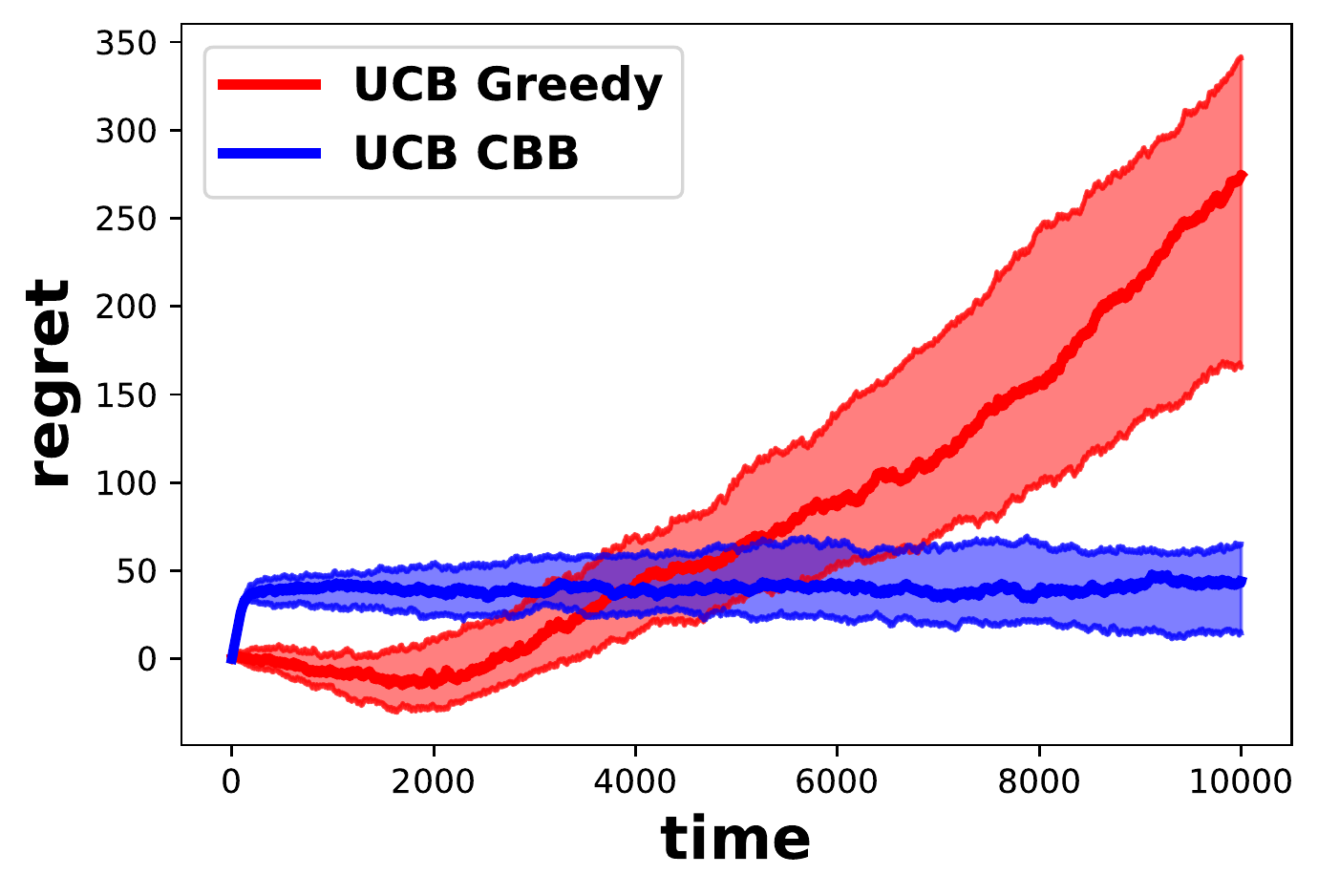}
    \caption{\small Cumulative Regret}
    \label{fig:3x3}
\end{subfigure}
\begin{subfigure}{0.68\textwidth}
    \includegraphics[width = 1\textwidth]{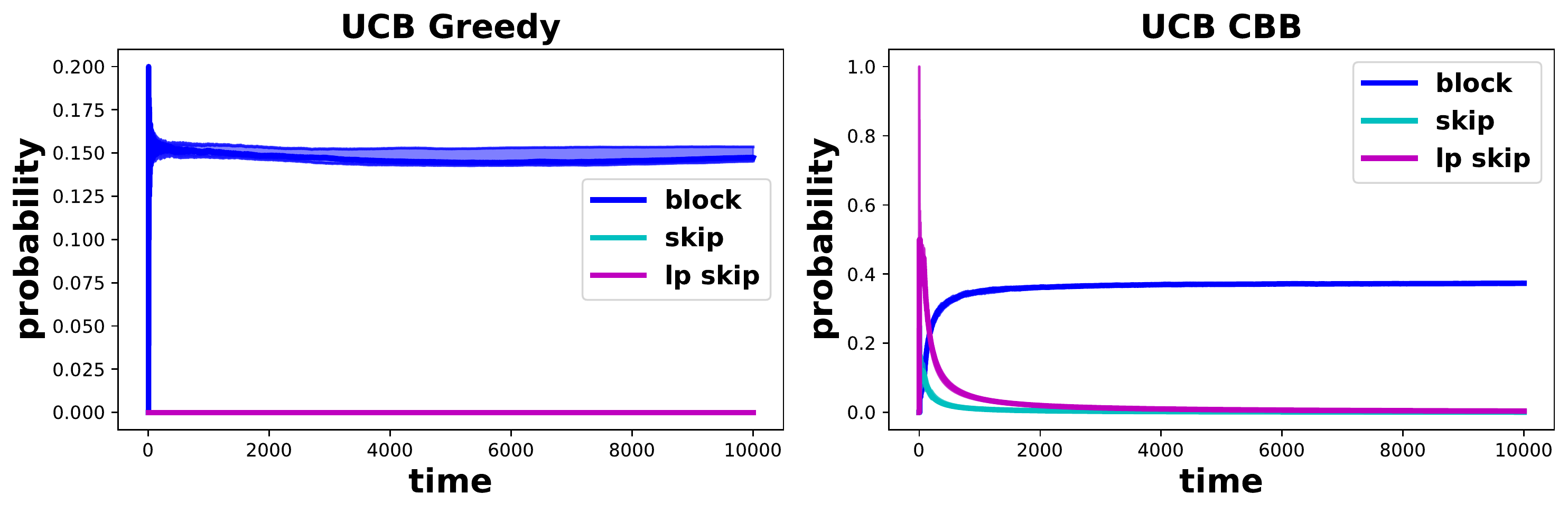}
    \caption{\small LP skipping, skipping, and blocking}
    \label{fig:3x30b}
\end{subfigure}
\caption{\small Non-integral instances with $3$ arms and $3$ contexts. The delays of the arms are either $2$, $3$, and $6$. The contexts are equiprobable. The best arm per context has arm-mean $0.9$, whereas all other arm-context pairs have means $0.3$.}
\label{fig:non_integral_3x3}
\end{figure}

In the first instance with $3$ arms and $3$ contexts, for each context $i\in [3]$ arm $i$ has mean reward $0.9$, whereas all other arm-context pairs have mean $0.3$. The contexts are equi-probable, whereas the arms have delays $2$, $3$ and $6$. For this instance, Figure~\ref{fig:non_integral_3x3} shows that the $\alpha$-regret is logarithmic for the \ucb algorithm and positive linear for UCB Greedy algorithm. We also observe the convergence in blocking probability for both algorithms in the same figure. We note that the UCB Greedy algorithm also incurs blocking for this dense ($\sum_i \tfrac{1}{d_i} = 1$) instance.

\begin{figure}[h]
    \centering
\begin{subfigure}{0.3\textwidth}
    \includegraphics[width = 1\textwidth]{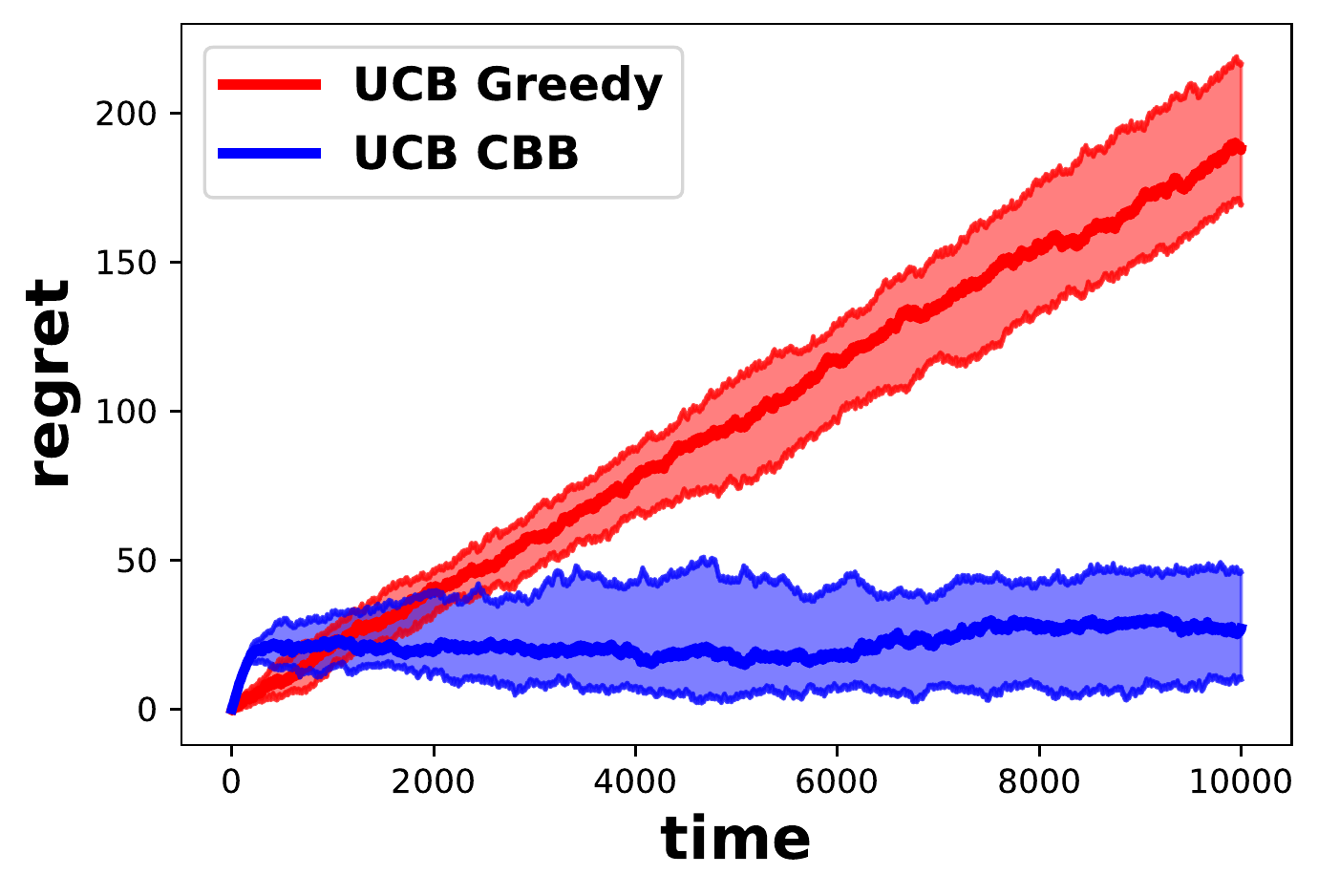}
    \caption{\small Cumulative Regret}
    \label{fig:3x3_D6}
\end{subfigure}
\begin{subfigure}{0.68\textwidth}
    \includegraphics[width = 1\textwidth]{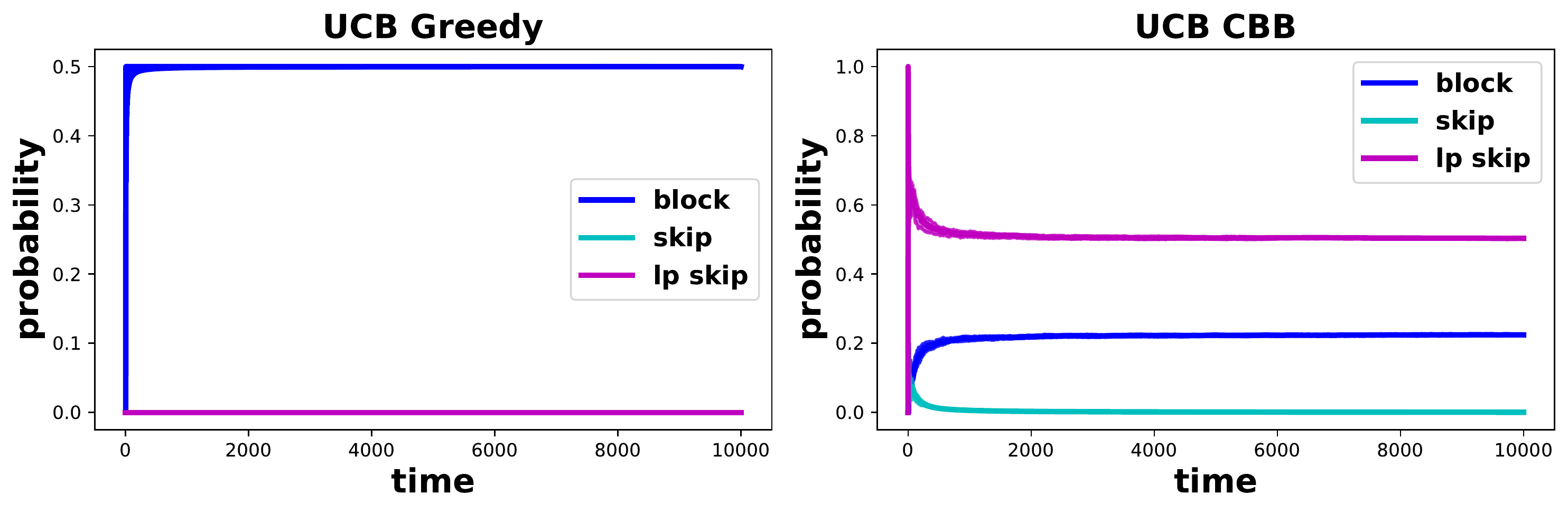}
    \caption{\small LP skipping, skipping, and blocking}
    \label{fig:3x3_D6b}
\end{subfigure}

\caption{\small Non-integral instances with $3$ arms and $3$ contexts. All the arms have delay $6$. The context probabilities are selected randomly. The best arm per context has mean in $[0.5, 0.9]$, whereas all other arm-context pairs have means in $[0, 0.3]$.}
\label{fig:non_integral_3x3_D6}
\end{figure}

In the second instance with $3$ arms and $3$ contexts, for each context $i\in [3]$ arm $i$ has mean u.a.r. $[0.5,0.9]$, whereas all other arm-context pairs have mean u.a.r. $[0,0.3]$. The context probabilities are again chosen randomly on a simplex. All the arms have delay equal to $6$. We note that this instance is non-dense, i.e. $\sum_i \tfrac{1}{d_i} = 1/2 < 1$. For this instance, Figure~\ref{fig:non_integral_3x3_D6} shows a logarithmic $\alpha$-regret for the \ucb algorithm and a linear regret for UCB Greedy. Both algorithms converge to non-zero probability of blocking. We note that the UCB Greedy algorithm incurs $0.5$ blocking for this non-dense instance, as compared to $0.22$ blocking in \ucb. This happens as \ucb conserves arm $i$ for context $i$, which can be seen through high {\em lp block} for \ucb and low regret. UCB Greedy, on the other hand, myopically plays the best available arm at each time slot, incurring high blocking and high regret.

\begin{figure}[H]
    \centering
\begin{subfigure}{0.3\textwidth}
    \includegraphics[width = 1\textwidth]{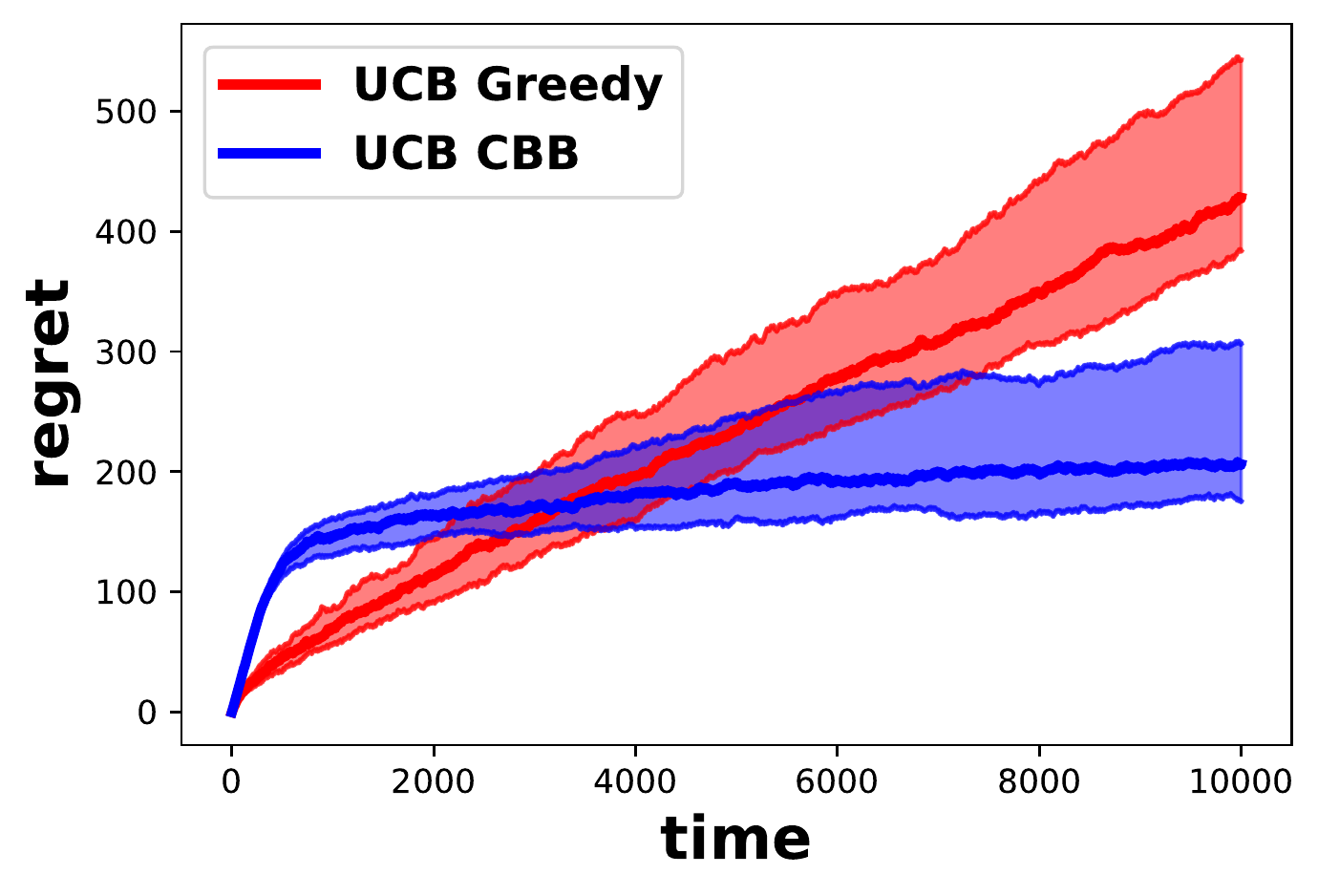}
    \caption{\small Cumulative Regret}
    \label{fig:10x10}
\end{subfigure}
\begin{subfigure}{0.68\textwidth}
    \includegraphics[width = 1\textwidth]{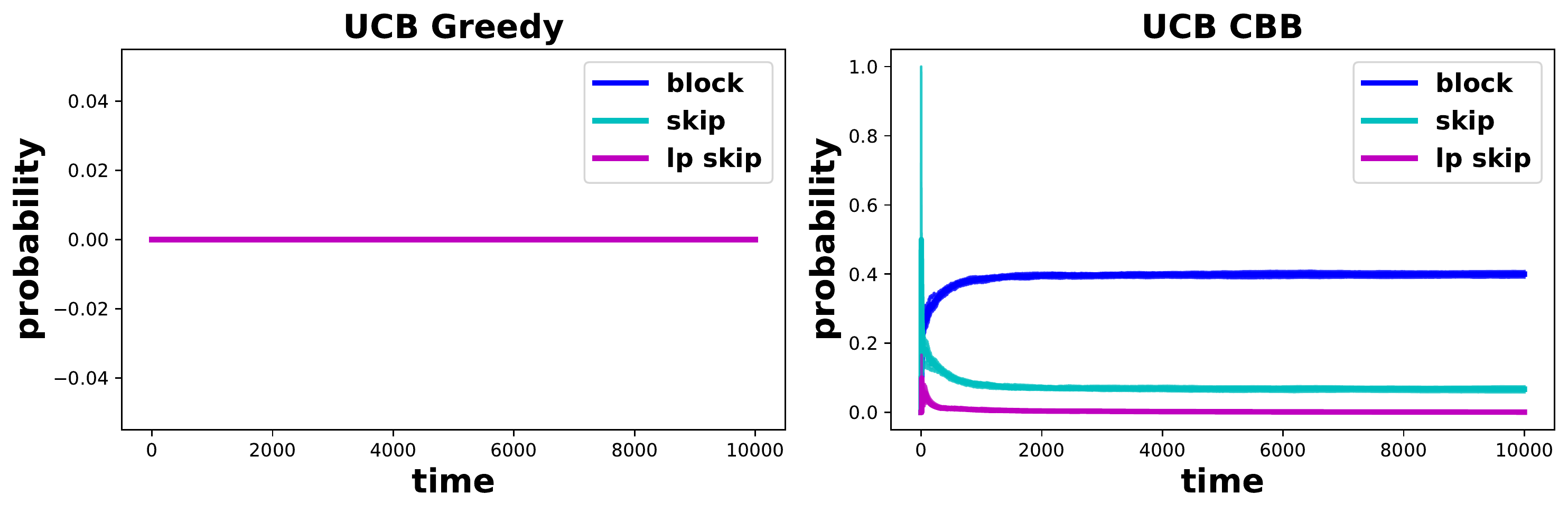}
    \caption{\small LP skipping, skipping, and blocking}
    \label{fig:10x10b}
\end{subfigure}
\caption{\small Non-integral instances with $10$ arms and $10$ contexts. The delays of the arms are either $8$ or $9$ generated randomly. The context probabilities are selected randomly. The best arm per context has mean $0.9$, whereas all other arm-context pairs have means in $[0, 0.3]$.}
\label{fig:non_integral_10x10}
\end{figure}

In the last instance with $10$ arms and $10$ contexts, for each context $i\in [10]$ arm $i$ has mean $0.9$, whereas all other arm-context pairs have mean chosen uniformly at random  (u.a.r.) from $[0, 0.3]$.  The context probabilities are chosen randomly from the $10$-D simplex. The arm delays are chosen randomly from $8$ and $9$ with equal probability. For this instance, Figure~\ref{fig:non_integral_10x10} shows similar trends as the non-integral instance with $3$ arms and $3$ contexts. Here, we observe that for \ucb algorithm the adaptive skipping converges to a non zero value ($0.08$, approx.), which plays a crucial part in balancing the instantaneous reward and the future availability. The UCB Greedy algorithm does not incur blocking, since the fact that $\sum_i \tfrac{1}{d_i} > 1$ ensures that at least one arm is always available.

\bibliographystyle{plainnat}
\bibliography{ref}

\newpage

\appendix
\section{Technical notation} \label{appendix:notation}
For any event $\mathcal{E}$, we denote by $\event{\mathcal{E}} \in \{0,1\}$ the indicator variable that takes the value of $1$ if $\mathcal{E}$ occurs and $0$, otherwise. For any number $n \in \mathbb{N}$, we define $[n] = \{1,2, \dots, n\}$ and for any integer $r \in \mathbb{Z}$, we define $[r]^+:= \max\{r,0\}$. Moreover, we use the notation $t \in [a, b]$ (for $a \leq b$) for some time index $t$, in lieu of $t \in [T] \cap \{a,a+1. \dots, b-1, b\}$. Unless otherwise noted, we use the indices $i$ or $i'$ to refer to arms, $j$ or $j'$ to refer to contexts and $t$, $t'$ or $\tau$ to refer to time. We use $\log(\cdot)$ for the logarithm of base $2$ and $\ln(\cdot)$ for the natural logarithm. Let $A^{\pi}_t \in \A \cup \{\emptyset\}$ be the arm played by some algorithm $\pi$ at time $t$ and let $F^{\pi}_{i,t}$ be the event that arm $i$ is {\em free} (i.e. not blocked) at time $t$ for some algorithm $\pi$. We denote by $C_t \in \C$ (or simply $j_t \in \C$) the observed context of round $t$. For a given instance $I$, let $d_{\max} = \max_{i \in \A}\{d_i\}$ be the maximum delay of an arm. In this reading, expectations can be taken over the randomness of the nature, including the sampling of contexts (denoted by $\mathcal{R}_C$) and the arm rewards (denoted by $\mathcal{R}_X$), as well as the random bits of the corresponding algorithm (denoted by $\mathcal{R}_{\pi}$ for an algorithm $\pi$). We denote by $\mathcal{R}_{N,\pi}$ the randomness generated by the combination of the aforementioned factors.
\section{Omitted pseudocodes}
\subsection{Pseudocode of algorithm \oracle} \label{appendix:online:pseudo}

\begin{algorithm}[H]
\SetAlgoLined
\DontPrintSemicolon
Compute an optimal solution $\{z^*_{i,j}\}_{\forall i,j}$ to \eqref{lp:LP}.\;
Initialize the non-skipping probabilities: $q_{i,1} \gets 1$, $\beta_{i,1} \gets \frac{d_i}{2d_i-1}$, $\forall i \in \A$.\;
\For{$t = 1, 2, \dots$}{
    Observe context $j_t \in \C$. \;
    Generate $u,v\sim U[0,1]$. \;
    Sample arm $i_t$ such that $u \in\left[\sum_{i'=1}^{i-1} \frac{z^*_{i',j_t}}{f_{j_t}}, \sum_{i'=1}^{i} \frac{z^*_{i',j_t}}{f_{j_t}}\right)$ (assuming a fixed arm order).\;
  \eIf{$i_t \neq \emptyset$ \textbf{and} $i_t$ is available \textbf{and} $v \leq \beta_{i_t,t}$}{
   Play arm $i_t$.\;
   }
   {
    Skip the round without playing any arm.\;
   }\;
   \For{$i \in \A$ \textbf{such that} $d_i \geq 2$}{
    $q_{i,t+1} \gets q_{i,t}\left(1 - \beta_{i,t} \sum_{j \in \C} z^*_{i,j}\right) + \event{t\geq d_i} q_{i,t-d_i+1} \beta_{i,t-d_i+1}\sum_{j \in \C} z^*_{i,j}$.\;
    $\beta_{i,t+1} \gets \min\{1 , \frac{d_i}{2d_i-1} \frac{1}{q_{i,t+1}}\}$.\;
   }
 }
 \caption{\oracle}
\label{alg:oracle}
\end{algorithm}

\subsection{Computation of the conditional non-skipping probability, \textsc{compq} $(i, t, H_{t-M_t})$} \label{appendix:ucb:exantepseudo}

\begin{algorithm}[H]
\SetAlgoLined
\DontPrintSemicolon
\If{$(i, t)$ in Cache}{
\textbf{return} Cache$[(i, t)]$ // Global Cache\;
}
Let $Z(t')$ be the solution of \eqref{lp:LP}$(t')~\forall t' \in [T]$ and $Z(0) = Z(\tau)$ $\forall \tau \leq 0$ be an initial solution. \;
Set $t_{0} \gets $ the first time on or after $\max\{1, t-M_t\}$, when arm $i$ becomes available. \;
Set $q_{i,t_0} \gets 1$. \;
\For{$t' = t_0, \dots, t-1$}{
    $t'' \gets t'-d_i+1$. \;
    $\beta_{i,\tau} \gets \min\{1, \frac{d_i}{2d_i-1}\frac{1}{\textsc{compq}~(i,\tau,H_{\tau-M_{\tau}})}\}$ for $\tau \in \{t', t''\}$. \;
    $q_{i,t'+1} \gets q_{i,t'}\left(1 - \beta_{i,t'}\sum_{j \in \C} z_{i,j}(t'-M_{t'})\right) + \event{t''\geq t_0 } q_{i,t''} \beta_{i,t''} \sum_{j \in \C} z_{i,j}(t''-M_{t''})$.\;
}
Cache$[(i, t)] =  q_{i,t}$. // Memorization \;
Remove all $(i',t')$ s.t. $t' < t - M_t$ from Cache. //Garbage Collection \;
\textbf{return}$~q_{i,t}$. \;
 \caption{\textsc{compq}$(i, t, H_{t-M_t})$}
\label{alg:exante}
\end{algorithm}

\section{Discussions}
\subsection{Optimizing over \eqref{lp:LP} using combinatorial methods} \label{appendix:discuss:lps}
The linear formulation \eqref{lp:LP} contains $k \cdot m$ variables and $ k\cdot m+k + m$ constraints (including the non-negativity constraints). 

From a practical perspective, an optimal extreme point solution to \eqref{lp:LP} can be computed efficiently using fast combinatorial methods. Indeed, every instance of the \eqref{lp:LP} can be transformed into an instance of the well-studied \textsc{maximum weighted flow} problem and solved by standard techniques such as cycle canceling \citep{GT89} or fast implementations of the dual simplex method for network polytopes \citep{OPT93}. 

We now describe the reduction: We consider a node $i$ for every arm $i \in \A$ and a node $j$ for every context $j \in \C$. We define two additional nodes: a source node $s$ and a sink node $t$. For each variable $z_{i,j}$, we associate an edge $(i,j)$ of capacity $c_{i,j} = +\infty$ and weight $w_{i,j} = \mu_{i,j}$. In addition, for each node $i \in \A$, we consider an edge $(s,i)$ of weight $w_{s,i} = 0$ and capacity $c_{s,i} = 1/d_i$, while for each node $j \in \C$, we consider an edge $(j,t)$ of weight $w_{j,t} = 0$ and capacity $c_{j,t} = f_j$. It is not hard to verify that the optimal solution to \eqref{lp:LP} coincides with a flow of maximum weight in the aforementioned network.

\subsection{Suboptimality gaps} \label{appendix:discuss:gaps}
In general, the suboptimality gaps, $\Delta^{i,j}_{\min}$, of the LP are complex functions of the means, $\{\mu_{i,j}\}_{i,j}$, arm delays, $\{d_i\}_i$, and context distribution, $\{f_j\}_j$. This fact should not be surprising-- it is the combination of all these parameters that determines how an optimal (or near-optimal) solution must behave.

Interestingly, when applied to the standard MAB\footnote{The standard MAB regret lower bound is $\mathcal{O}(k\cdot \frac{\log(T)}{\Delta})$, where $\Delta$ is the minimum gap between two arms.} problem~\citep{LR85} (i.e., single context and unit delays), the gap $\Delta^{i,j}_{\min}$ for $i>1$, matches the standard notion of gap $\Delta_i = \mu_{1,j} - \mu_{i,j}$, where $i=1$ is the arm of highest mean reward (and $j$ the unique context). 

As another example of suboptimality gaps, consider the following structured instance: Let $k>2$ arms and $m = k$ contexts. All the arms have equal delay $d_i = k, \forall i \in \A$ and all contexts appear with equal probability $f_j = \frac{1}{k}, \forall j \in \C$. We assume that $\mu_{i,j} = \Delta >0$, if $i=j$, and $\mu_{i,j} = 0$, otherwise. In the above instance, it is not hard to verify that the variables $\{z_{i,j}\}_{i,j}$ in any extreme point solution of \eqref{lp:LP} take values in $\{0,1/k\}$. Moreover, the support of the optimal extreme point solution corresponds to a maximum bipartite matching (w.r.t. the edge weights $\{\mu_{i,j}\}_{i,j}$) in the underlying bipartite graph consisting of arm (left) and context (right) nodes. 

Let $M \subset [k] \times [k]$ be the maximum matching in the above bipartite graph with respect to the mean values. Moreover, we define $M_{i,j} \subset [k] \times [k]$ for any $i \neq j$ to be a maximal matching in the above graph that necessarily contains the edge $(i,j)$ of $\mu_{i,j}=0$ (which corresponds to a matching of $k-2$ edges). In addition, we define $M_{i,i} = M \setminus (i,i)$, namely, the maximum matching with the edge $(i,i)$ removed. Using the above definitions, we can see that the optimal solution to \eqref{lp:LP} can be expressed as $\sum_{(i,j) \in M} \Delta z^*_{i,j} = k \Delta \frac{1}{k} = \Delta$. It is not hard to verify that the suboptimality gap of any pair $(i,j)$ with $i \neq j$ can be expressed as
\begin{align*}
\Delta^{i,j}_{\min} = \Delta - \sum_{(i',j')\in M_{i,j}} \Delta \frac{1}{k} = \Delta - \frac{k-2}{k}\Delta = \frac{2}{k}\Delta. 
\end{align*}
Finally, for the suboptimality gap of any pair $(i,i)$, we have 
\begin{align*}
\Delta^{i,i}_{\min} = \Delta - \sum_{(i',j')\in M_{i,i}} \Delta \frac{1}{k} = \Delta - \frac{k-1}{k}\Delta = \frac{1}{k}\Delta. 
\end{align*}

\subsection{Difference in $\alpha$-regret definition} \label{appendix:discuss:rergetdef}
We note that in Definition 5 in \cite{CWYW16}, a super-arm (which is analogous to an extreme point of \eqref{lp:LP} in our paper) is defined as {\em bad}, if the reward from this super arm is less than $\alpha$ times the reward of an optimal super arm. However, in our case an extreme point is {\em bad} if its reward is less than $1$ times (not $\tfrac{d_{\max}}{2d_{\max}-1}$ times) the optimal solution of the LP \eqref{lp:LP}. This difference is present in our paper, as we require solving the LP \eqref{lp:LP} {\em optimally with probability} $1$ at each time slot, in order to ensure a $\tfrac{d_{\max}}{2d_{\max}-1}$-approximation algorithm. This is in contrast with the combinatorial bandits literature~\cite{WC17,CWYW16}, where in each time slot the oracle provides an $\alpha$-approximate solution to the combinatorial problem with probability at least $\beta$, for $\alpha, \beta \in (0,1]$. Our approximation loss comes from the online rounding, rather than from the LP solution at each time slot.
\section{Concentration inequalities} \label{appendix:concentration}
In this section, we outline the standard concentration results that we use in our proofs.
%\begin{theorem}[Multiplicative Chernoff Bound]\footnote{A proof of this standard concentration result can be found in \citep{MU17}.}\label{appendix:concentration:standardchernoff}
%Let $X_1, \dots, X_n$ be independent identically distributed random variables taking values in $\{0,1\}$ and of mean value $\Ex{ }{X_i} = \mu, \forall i \in [n]$. Let $Y = X_1 + \dots + X_n$. Then for all $0 < \delta \leq 1$,
%\begin{align*}
%    \Pro{Y \geq \left(1+\delta\right) \mu n} \leq e^{-\frac{\delta^2 \mu n}{3}}.
%\end{align*}
%\end{theorem}

\begin{theorem}[Hoeffding's Inequality]\footnote{This is a standard concentration result and the statement can be found, e.g., in \cite{LS18}}\label{appendix:concentration:hoeffding}
Let $X_1, \dots, X_n$ be independent identically distributed random variables with common support in $[0,1]$ and mean $\mu$. Let $Y = X_1 + \dots + X_n$. Then for all $\delta \geq 0$,
\begin{align*}
    \Pro{|Y-n\mu| \geq \delta} \leq 2 e^{-2\delta^2/n}.
\end{align*}
\end{theorem}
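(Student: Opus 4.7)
The plan is to apply the classical Chernoff--Hoeffding method: exponentiate, apply Markov's inequality, factorize the moment generating function using independence, bound each factor via Hoeffding's lemma, and then optimize the exponential tilt parameter. I would handle the upper and lower tails separately and combine them by a union bound to obtain the factor of $2$ in the stated inequality.

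First I would fix $\lambda > 0$ and write
\begin{align*}
\Pro{Y - n\mu \geq \delta} = \Pro{e^{\lambda(Y-n\mu)} \geq e^{\lambda\delta}} \leq e^{-\lambda\delta}\,\E{e^{\lambda(Y-n\mu)}},
\end{align*}
where the last inequality is Markov. Since the $X_i$ are independent, the moment generating function factorizes: $\E{e^{\lambda(Y-n\mu)}} = \prod_{i=1}^{n} \E{e^{\lambda(X_i - \mu)}}$. Next I would invoke Hoeffding's lemma, which states that any mean-zero random variable $Z$ supported on $[a,b]$ satisfies $\E{e^{\lambda Z}} \leq \exp(\lambda^2(b-a)^2/8)$. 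Applied with $Z_i = X_i - \mu$, whose support has length at most $1$, this gives $\E{e^{\lambda(X_i - \mu)}} \leq e^{\lambda^2/8}$, and therefore
\begin{align*}
\Pro{Y - n\mu \geq \delta} \leq \exp\!\left(-\lambda\delta + \tfrac{n\lambda^2}{8}\right).
\end{align*}
Minimizing the right-hand side over $\lambda > 0$ is a one-variable calculus problem; the optimum is at $\lambda^{*} = 4\delta/n$, which yields $\Pro{Y - n\mu \geq \delta} \leq e^{-2\delta^2/n}$. Repeating the identical argument with $-X_i$ in place of $X_i$ (still i.i.d., supported in $[-1,0]$, with mean $-\mu$) produces the symmetric bound $\Pro{Y - n\mu \leq -\delta} \leq e^{-2\delta^2/n}$. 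A union bound over the two tails completes the proof.

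The main obstacle is Hoeffding's lemma itself, specifically getting the sharp constant $(b-a)^2/8$. I would prove it by first using convexity of $x \mapsto e^{\lambda x}$ to write, for $z \in [a,b]$, $e^{\lambda z} \leq \tfrac{b-z}{b-a}e^{\lambda a} + \tfrac{z-a}{b-a}e^{\lambda b}$, and then taking expectations (using $\E{Z}=0$) to reduce to bounding $\psi(\lambda) := \ln\!\bigl(\tfrac{b}{b-a}e^{\lambda a} - \tfrac{a}{b-a}e^{\lambda b}\bigr)$. A second-order Taylor expansion around $\lambda = 0$ together with the observation that $\psi''(\lambda)$ is the variance of a tilted two-point distribution on $\{a,b\}$ --- hence at most $(b-a)^2/4$ --- yields $\psi(\lambda) \leq \lambda^2(b-a)^2/8$, which is the lemma. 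Everything else in the argument (Markov, independence-based factorization, Lagrangian optimization, union bound) is routine.
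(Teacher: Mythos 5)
Your proof is the standard Chernoff--Hoeffding argument and it is correct: the Markov step, the factorization by independence, Hoeffding's lemma with the sharp constant $(b-a)^2/8$, the optimization at $\lambda^* = 4\delta/n$, and the union bound over the two tails all check out. The paper does not prove this statement at all --- it is cited as a standard result from the literature --- so there is nothing to compare against; your write-up is exactly the textbook proof that the citation points to.
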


\begin{theorem}[Multiplicative Chernoff Bound]\footnote{The result is a combination of Theorem 4.5 and Exercise 4.7 in \citep{MU17}, in the case where the $\{X_i\}_{i \in [n]}$ are independent. The authors in \citep{WC17,CWYW16} describe a slight modification that directly proves the statement.} \label{lemma:regret:chernoff}
Let $X_1, \dots, X_n$ be Bernoulli random variables taking values from $\{0,1\}$, and $\Ex{ }{X_t|X_{t-1}, \dots, X_{1}} \geq \mu$ for every $t \leq n$. Let $Y = X_1 + \dots + X_n$. Then, for all $0 < \delta < 1$,
\begin{align*}
\Pro{Y \leq (1-\delta)n\mu} \leq e^{- \frac{\delta^2 n \mu}{2}}.
\end{align*}
\end{theorem}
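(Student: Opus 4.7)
The plan is to apply the exponential Markov (Chernoff) trick in its lower-tail form, pushing the conditional lower bound $\E{X_t \mid X_{t-1},\ldots,X_1} \geq \mu$ through an iterated expectation in order to bound the moment generating function despite the lack of independence. Specifically, for any $t > 0$, Markov's inequality applied to $e^{-tY}$ gives
\begin{align*}
\Pro{Y \leq (1-\delta)n\mu} = \Pro{e^{-tY} \geq e^{-t(1-\delta)n\mu}} \leq e^{t(1-\delta)n\mu}\,\E{e^{-tY}}.
\end{align*}
The crucial step is bounding $\E{e^{-tY}}$. I would peel off the factors one at a time via the tower property: writing $\E{e^{-tY}} = \E{e^{-t(X_1 + \cdots + X_{n-1})}\,\E{e^{-tX_n} \mid X_1,\ldots,X_{n-1}}}$, and using that for a $\{0,1\}$-valued random variable $X$, the conditional MGF equals $1 - \E{X\mid \cdot}(1-e^{-t})$. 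Since $t>0$ makes $1-e^{-t}>0$, the hypothesis $\E{X_n \mid \cdot}\geq \mu$ yields $\E{e^{-tX_n}\mid \cdot} \leq 1 - \mu(1-e^{-t}) \leq e^{-\mu(1-e^{-t})}$, where the last step uses $1+x \leq e^x$. This bound is deterministic, so iterating this argument for $X_{n-1}, X_{n-2}, \ldots, X_1$ gives $\E{e^{-tY}} \leq e^{-n\mu(1-e^{-t})}$.

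Next, I would optimize over $t$ by taking $t = \ln\frac{1}{1-\delta}$, which is legitimate for $\delta \in (0,1)$ and makes $e^{-t} = 1-\delta$. Substituting back produces the standard multiplicative form
\begin{align*}
\Pro{Y \leq (1-\delta)n\mu} \leq \left(\frac{e^{-\delta}}{(1-\delta)^{1-\delta}}\right)^{n\mu}.
\end{align*}
To convert this into the cleaner exponent $-\delta^2 n\mu/2$ stated in the theorem, I would invoke the elementary calculus inequality $(1-\delta)\ln(1-\delta) \geq -\delta + \delta^2/2$ valid on $[0,1)$. This may be verified by setting $f(\delta) = (1-\delta)\ln(1-\delta) + \delta - \delta^2/2$ and checking $f(0)=0$ together with $f'(\delta) = -\ln(1-\delta) - \delta \geq 0$ on $[0,1)$, since $-\ln(1-\delta) \geq \delta$. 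Taking $n\mu$ times this inequality and exponentiating converts the bracketed expression into $e^{-\delta^2/2}$, which yields the claim.

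There is no substantial obstacle: the result is textbook, and the only subtlety is ensuring that the MGF bound for each Bernoulli depends only on the pointwise conditional mean, so that the lower bound $\mu$ on each $\E{X_t \mid \cdot}$ propagates cleanly through the tower despite the absence of independence. Once that is in place, the rest is a standard optimization of $t$ and a one-line calculus estimate.
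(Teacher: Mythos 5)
Your proof is correct, and the key step — observing that the conditional MGF bound $\E{e^{-tX_n}\mid X_1,\ldots,X_{n-1}} \leq e^{-\mu(1-e^{-t})}$ is a deterministic constant that can be peeled off iteratively via the tower property — is exactly the "slight modification" of the independent-case Chernoff argument that the paper attributes to \citep{WC17,CWYW16}; the paper itself gives no proof, only this citation to Theorem 4.5 and Exercise 4.7 of \citep{MU17}. The remaining optimization at $t=\ln\frac{1}{1-\delta}$ and the estimate $(1-\delta)\ln(1-\delta)\geq -\delta+\delta^2/2$ are the standard textbook steps and check out.
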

\section{Full-information problem and competitive analysis: omitted proofs}\label{appendix:online}

\subsection{Proof of Theorem \ref{online:theorem:competitive}}
\label{appendix:theorem:competitive}
We now prove a lower bound on the competitive guarantee of \oracle, against any optimal clairvoyant algorithm. The proofs of the lemmas we use in the proof of the following theorem are also contained in this section of the Appendix.

\restateTheoremOracle*
\begin{proof}
The first step in our analysis is to show that the optimal solution of \eqref{lp:LP}, denoted by $\Rew_I^{LP}$ yields a $\left(1-\frac{d_{\max}-1}{d_{\max}-1 + T}\right)$-approximate upper bound to the maximum (average) expected reward collected by any (clairvoyant) algorithm, denoted by $\Rew^{*}_I(T)$. Note that, since $\Rew_I^{LP}$ represents an upper bound on the average collected reward, we multiply it with $T$, in order to compare it with $\Rew^{*}_I(T)$. Finally, we emphasize that the multiplicative approximation of the upper bound asymptotically goes to $1$ as $T$ increases.
\begin{restatable}{lemma}{restateOverviewUpperbound}\label{lemma:overview:upperbound}
For any time horizon $T$, we have 
\begin{align*}
T \cdot \Rew_I^{LP} \geq \left(1 - \frac{d_{\max}-1}{d_{\max}-1 + T}\right)\Rew^*_I(T). 
\end{align*}
\end{restatable}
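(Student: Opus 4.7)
The plan is to produce, from the oracle's (stochastic) behavior, a feasible point of \eqref{lp:LP} whose objective is essentially $\Rew^*_I(T)/T$, after which a mild rescaling absorbs the integrality / boundary effect that gives the $\tfrac{d_{\max}-1}{d_{\max}-1+T}$ loss. Concretely, letting $\pi^*$ be the optimal clairvoyant algorithm and defining
\begin{align*}
  y_{i,j} \;=\; \frac{1}{T}\sum_{t\in[T]} \Pro{A^{\pi^*}_t = i,\; C_t = j}, \qquad \forall i\in\A,\; j\in\C,
\end{align*}
I would first show that the LP objective evaluated at $\{y_{i,j}\}$ equals $\Rew^*_I(T)/T$. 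This uses the fact that the oracle is clairvoyant only in the \emph{contexts}, so $A^{\pi^*}_t$ is measurable with respect to $C_{1:T}$ and hence independent of the i.i.d.\ reward $X_{i,j,t}$; therefore $\Ex{}{X_{i,j,t}\event{A^{\pi^*}_t=i,C_t=j}} = \mu_{i,j}\Pro{A^{\pi^*}_t=i,C_t=j}$, and summing over $t,i,j$ gives the identity.

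Next I would check feasibility. Constraint \eqref{flp:conditional} holds exactly, since
\begin{align*}
  \sum_{i\in\A} y_{i,j} \;=\; \frac{1}{T}\sum_{t\in[T]} \Pro{A^{\pi^*}_t \neq \emptyset,\; C_t=j} \;\le\; \frac{1}{T}\sum_{t\in[T]} \Pro{C_t=j} \;=\; f_j.
\end{align*}
Constraint \eqref{flp:window} is the delicate one: because $\pi^*$ must respect the blocking delay $d_i$, any sample path can play arm $i$ at times that are pairwise at least $d_i$ apart, so the number of plays is at most $\lfloor (T-1)/d_i\rfloor+1 \le (T+d_i-1)/d_i$. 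Taking expectations gives only the relaxed bound
\begin{align*}
  \sum_{j\in\C} y_{i,j} \;\le\; \frac{T+d_i-1}{T\,d_i},
\end{align*}
which is not quite $1/d_i$.

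To fix this I would scale: set $z_{i,j} = c\, y_{i,j}$ with $c = T/(T+d_{\max}-1)$. Since $c(T+d_i-1)/(Td_i) \le c(T+d_{\max}-1)/(Td_i) = 1/d_i$ for every arm, the rescaled vector satisfies \eqref{flp:window}, while \eqref{flp:conditional} and non-negativity are preserved. Thus $\{z_{i,j}\}$ is LP-feasible with objective
\begin{align*}
  \sum_{i,j}\mu_{i,j} z_{i,j} \;=\; c\cdot\frac{\Rew^*_I(T)}{T},
\end{align*}
and therefore $T\cdot\Rew^{LP}_I \ge c\cdot\Rew^*_I(T) = \bigl(1-\tfrac{d_{\max}-1}{T+d_{\max}-1}\bigr)\Rew^*_I(T)$, which is the claim.

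The only mildly subtle step is the pathwise cap $\lfloor(T-1)/d_i\rfloor+1$ on the number of plays of arm $i$; getting the exact constant $d_{\max}-1$ (rather than $d_{\max}$) in the final bound depends on this tight count together with the maximization over arms in the choice of $c$. The rest is routine linearity of expectation and the observation that the oracle cannot use information about reward realizations, which lets the objective decouple as a linear form in the marginal rates $y_{i,j}$.
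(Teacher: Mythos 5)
Your proposal is correct and follows essentially the same route as the paper: build the empirical play-rate vector $y_{i,j}$ from the clairvoyant policy, show it satisfies \eqref{flp:conditional} exactly and \eqref{flp:window} up to a $\frac{T+d_i-1}{T\,d_i}$ slack, rescale by $T/(T+d_{\max}-1)$ to obtain a feasible point, and read off the bound from the objective. The only cosmetic difference is the counting step for \eqref{flp:window} — you bound the number of plays directly via the pairwise spacing $\lfloor (T-1)/d_i\rfloor + 1$, whereas the paper sums the sliding-window inequalities over $t$ and controls the boundary terms — but both yield the identical intermediate inequality.
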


We denote by $F^{\pi}_{i,t}$ the event that arm $i$ is available in time $t$, and by $A^{\pi}_t$ the arm played at time $t$, where $A^{\pi}_t \in \{\A \cup ~\emptyset\}$. Moreover, we denote the event of playing arm $i$ at context $j$ at time $t$ as $\event{A^{\pi}_t = i, C_t = j}$ for all $i\in \A$ and $j \in \C$. We fix a time horizon $T$, for the purpose of the analysis.

The \oracle algorithm at each time $t$ plays an arm $i$ if it is {\em (i)} sampled, {\em (ii)} available and {\em (iii)} not skipped. The sampling of arm $i$ under context $j \in \C$ happens with probability $z^*_{i,j}$ and the arm is not skipped with probability $\beta_{i,t}$, independently. Finally, the arm is played if it is available, which happens independently of sampling and skipping, with probability
$\Pro{F^{\pi}_{i,t}}$. The above analysis leads to a recursive characterization of $\Pro{F^{\pi}_{i,t}}$. Upon inspection, this is the same characterization as for $q_{i,t}$ given in Eq.~\eqref{online:exante:formula}. We formally summarize the above in the following lemma:

\begin{restatable}{lemma}{restateOracleExante}\label{lemma:oracle:exante}
At every round $t \in [T]$ and for any arm $i \in \A$ and context $j \in \C$, it is the case that $\Pro{A^{\pi}_t = i , C_t = j} = z^*_{i,j} \beta_{i,t} \Pro{F^{\pi}_{i,t}}$. Moreover, we have $q_{i,t} = \Pro{F^{\pi}_{i,t}}$, $\forall i\in \A, t \in [T]$.
\end{restatable}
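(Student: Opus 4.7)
The plan is to prove both claims together via induction on $t$, leveraging the key observation that the non-skipping probabilities $\beta_{i,t}$ are \emph{deterministic} scalars pre-computed from the recursion \eqref{online:exante:formula}, and not random variables depending on the trajectory. This is what makes the algebra clean and is what distinguishes the full-information algorithm \oracle from the bandit variant (where the later sections need the much heavier delayed exploitation and Markov-chain coupling machinery to achieve an analogous decoupling).

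For the first claim, I would unpack $\Pro{A^{\pi}_t = i,\,C_t = j}$ by conditioning on the three sources of randomness used by \oracle at round $t$: the i.i.d.\ context $C_t$, the fresh uniform $u$ driving arm sampling, and the fresh uniform $v$ driving the skipping decision. Since the context is i.i.d., it is independent of $H_{t-1}$ and hence of the availability event $F^{\pi}_{i,t}$ (which is $H_{t-1}$-measurable); $u$ and $v$ are fresh, mutually independent, and independent of $H_{t-1}$ and of $C_t$. Since $\beta_{i,t}$ is deterministic, I get the factorization
\begin{align*}
\Pro{A^{\pi}_t = i,\,C_t = j} \;=\; \Pro{C_t=j}\cdot\Pro{i_t=i\mid C_t=j}\cdot \Pro{F^{\pi}_{i,t}}\cdot\Pro{v\leq \beta_{i,t}} \;=\; f_j\cdot\tfrac{z^*_{i,j}}{f_j}\cdot\Pro{F^{\pi}_{i,t}}\cdot\beta_{i,t},
\end{align*}
which yields the stated identity.

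For the second claim, I would induct on $t$. The base case $t=1$ holds since no arm has been played, so $\Pro{F^{\pi}_{i,1}} = 1 = q_{i,1}$. For the inductive step, I would exploit the blocking semantics to write the ``blocked'' event at $t+1$ as the disjoint union $\{\neg F^{\pi}_{i,t+1}\} = \bigsqcup_{\tau=[t-d_i+2]^+}^{t}\{A^{\pi}_\tau = i\}$ (disjointness follows because a single play of arm $i$ blocks it for the next $d_i-1$ rounds, so two plays cannot both lie in a window of length $d_i-1$). Taking probabilities and subtracting the corresponding identity for $F^{\pi}_{i,t}$ gives the one-step recurrence
\begin{align*}
\Pro{F^{\pi}_{i,t+1}} \;=\; \Pro{F^{\pi}_{i,t}} \;-\; \Pro{A^{\pi}_t = i} \;+\; \event{t\geq d_i}\,\Pro{A^{\pi}_{t-d_i+1} = i}.
\end{align*}
Summing the first claim over $j \in \C$ gives $\Pro{A^{\pi}_\tau = i} = \beta_{i,\tau}\,\Pro{F^{\pi}_{i,\tau}}\sum_{j\in \C} z^*_{i,j}$, and plugging this in together with the inductive hypothesis $\Pro{F^{\pi}_{i,s}} = q_{i,s}$ for $s\leq t$ reproduces verbatim the recursion \eqref{online:exante:formula} for $q_{i,t+1}$, closing the induction.

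I do not anticipate a serious obstacle: this lemma is essentially a consistency check that the deterministic precomputation of $q_{i,t}$ via \eqref{online:exante:formula} matches the true a priori availability probability under \oracle. The only care needed is to confirm (i) that $\beta_{i,t}$ is indeed deterministic (it is, since $q_{i,t}$ depends only on $\{z^*_{i,j}\}$ and on $\{\beta_{i,\tau}\}_{\tau < t}$, which are unrolled from the base case), and (ii) disjointness of the play events within any window of length $d_i-1$, which is immediate from the blocking rule.
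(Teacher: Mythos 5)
Your proposal is correct and follows essentially the same route as the paper: the first identity is obtained by the same independence/factorization argument, and the second by induction on $t$ showing that $\Pro{F^{\pi}_{i,t}}$ satisfies recursion \eqref{online:exante:formula}. The only (cosmetic) difference is that you derive the one-step recurrence by writing $\neg F^{\pi}_{i,t+1}$ as a disjoint union of the play events in the preceding window of length $d_i-1$ and telescoping against the window for $\neg F^{\pi}_{i,t}$, whereas the paper cases on whether the arm was available at $t-1$ and at $t-d_i$; both yield the identical recurrence.
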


We observe that by design of the skipping mechanism $\beta_{i,t}$, the quantity $\beta_{i,t} \cdot \Pro{F^{\pi}_{i,t}}$ never exceeds $\tfrac{d_i}{2d_i -1}$. Leveraging this observation, we show that at every time $t \in [T]$, it is the case that $\Pro{F^{\pi}_{i,t}} \geq \tfrac{d_i}{2d_i -1}$. This allows us to completely characterize the behavior of the algorithm as it is shown in the following lemma: 
\begin{restatable}{lemma}{restateOracleExact}\label{lemma:oracle:exact}
At every round $t \in [T]$, the probability that \oracle plays an arm $i \in \A$ under context $j \in \C$ is exactly $\Pro{A^{\pi}_t = i , C_t = j} = \frac{d_i}{2d_i-1}z^*_{i,j}$.
\end{restatable}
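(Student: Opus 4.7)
The plan is to use Lemma~\ref{lemma:oracle:exante} as the starting point, which already gives $\Pro{A^{\pi}_t = i, C_t = j} = z^*_{i,j}\,\beta_{i,t}\,q_{i,t}$. Once this identity is in hand, the only thing left to check is that the product $\beta_{i,t}q_{i,t}$ is always exactly $\frac{d_i}{2d_i-1}$, which is equivalent to showing that the ``min'' in the definition of $\beta_{i,t}$ never binds at $1$, i.e., that $q_{i,t} \geq \frac{d_i}{2d_i-1}$ for every arm $i$ and every round $t$.

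I would establish the inequality $q_{i,t} \geq \frac{d_i}{2d_i-1}$ by induction on $t$, using the recursion~\eqref{online:exante:formula} together with LP constraint~\eqref{flp:window}. The base case $q_{i,1}=1$ is immediate. For the inductive step, fix an arm $i$, let $\sigma_i := \sum_{j\in\C} z^*_{i,j} \le 1/d_i$, and assume $q_{i,\tau} \geq \frac{d_i}{2d_i-1}$ for all $\tau \leq t$. The hypothesis makes every $\beta_{i,\tau}$ land on the non-trivial branch of the min, so $\beta_{i,\tau}q_{i,\tau} = \frac{d_i}{2d_i-1}$ for $\tau \le t$. Substituting into~\eqref{online:exante:formula} gives
\begin{align*}
q_{i,t+1} \;=\; q_{i,t} \;-\; \tfrac{d_i}{2d_i-1}\sigma_i \;+\; \event{t\ge d_i}\,\tfrac{d_i}{2d_i-1}\sigma_i.
\end{align*}

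The induction then splits into two cases. For $t \ge d_i$ the last two terms cancel, giving $q_{i,t+1} = q_{i,t} \geq \frac{d_i}{2d_i-1}$. For $t < d_i$, the indicator vanishes and the recursion telescopes back to $q_{i,1}=1$, yielding $q_{i,t+1} = 1 - t\cdot \frac{d_i}{2d_i-1}\sigma_i \ge 1 - (d_i-1)\cdot \frac{d_i}{2d_i-1}\cdot \frac{1}{d_i} = \frac{d_i}{2d_i-1}$, where the inequality uses $t \le d_i-1$ and $\sigma_i \le 1/d_i$ from~\eqref{flp:window}. This closes the induction, so $\beta_{i,t}q_{i,t} = \frac{d_i}{2d_i-1}$ at every round, and Lemma~\ref{lemma:oracle:exante} delivers $\Pro{A^{\pi}_t=i, C_t=j} = \frac{d_i}{2d_i-1}\,z^*_{i,j}$ as claimed.

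The only mildly delicate point is the transient regime $t<d_i$: there is no ``renewal'' term in the recursion yet, so $q_{i,t}$ is strictly decreasing, and one must verify it does not dip below the threshold before the first blocked arm re-enters circulation. This is exactly where the factor $\frac{d_i}{2d_i-1}$ (rather than anything larger) is chosen to make the worst-case telescoping balance out, and where~\eqref{flp:window} is used. Everything else is essentially bookkeeping on top of the identity from Lemma~\ref{lemma:oracle:exante}.
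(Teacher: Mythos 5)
Your proof is correct and follows essentially the same route as the paper: start from Lemma~\ref{lemma:oracle:exante}, then use induction together with constraint~\eqref{flp:window} to show the availability probability never drops below $\tfrac{d_i}{2d_i-1}$, so that $\beta_{i,t}q_{i,t}=\tfrac{d_i}{2d_i-1}$ at every round. The only cosmetic difference is that you telescope the recursion~\eqref{online:exante:formula} directly (obtaining $q_{i,t}=1-\min\{t-1,d_i-1\}\tfrac{d_i}{2d_i-1}\sum_j z^*_{i,j}$), whereas the paper splits on whether $\beta_{i,t}=1$ and, in that case, lower-bounds $\Pro{F^{\pi}_{i,t}}$ by a union bound over plays in the preceding $d_i-1$ rounds — the two computations are equivalent.
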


In order to complete the proof of the theorem, the expected cumulative reward collected by \oracle in $T$ time steps can be expressed as 
\begin{align}
\Rew_I^{\pi}(T) &= \Ex{\mathcal{R}_{N,\pi}}{\sum_{t \in [T]} \sum_{i \in \A} \sum_{j \in \C} X_{i,j,t} \event{{A^{\pi}_{t}=i , C_{t}=j}}} \nonumber\\
&= \Ex{\mathcal{R}_{C} \mathcal{R}_{\pi}}{\sum_{t \in [T]} \sum_{i \in \A} \sum_{j \in \C} \Ex{\mathcal{R}_{X}}{X_{i,j,t}} \event{{A^{\pi}_{t}=i , C_{t}=j}}} \label{eq:on:thm:1}\\
&= \Ex{\mathcal{R}_{C} \mathcal{R}_{\pi}}{\sum_{t \in [T]} \sum_{i \in \A} \sum_{j \in \C} \mu_{i,j} \event{{A^{\pi}_{t}=i , C_{t}=j}}} \nonumber\\
&= \sum_{t \in [T]} \sum_{i \in \A} \sum_{j \in \C} \mu_{i,j} \Ex{\mathcal{R}_{C} \mathcal{R}_{\pi}}{ \event{{A^{\pi}_{t}=i , C_{t}=j}}} \nonumber\\
&= T \sum_{i \in \A} \sum_{j \in \C} \mu_{i,j} \frac{d_i}{2d_i -1} z^*_{i,j} \label{eq:on:thm:2}\\
&\geq \frac{d_{\max}}{2d_{\max} -1} T \cdot \Rew_I^{LP} \nonumber\\
&\geq \frac{d_{\max}}{2d_{\max} -1} \left(1 - \frac{d_{\max}-1}{d_{\max}-1 + T}\right) \Rew_I^{*}(T), \label{eq:on:thm:3}
\end{align} 
where \eqref{eq:on:thm:1} follows by independence of $\{X_{i,j,t}\}_{i,j,t}$, \eqref{eq:on:thm:2} follows by Lemma \ref{lemma:oracle:exact} and \eqref{eq:on:thm:3} follows by Lemma \ref{lemma:overview:upperbound}.
\end{proof}

\subsection{Proof of Lemma \ref{lemma:overview:upperbound}} \label{appendix:overview:upperbound}
\restateOverviewUpperbound*

\begin{proof}
We denote by $\Sigma: [T] \rightarrow \C$ a fixed sequence of context realizations over $T$ rounds, where, at each time step $t \in [T]$, context $j \in \C$ appears independently with probability $f_j$. Let $\mathcal{S}$ be the family of all possible sequences. Given that the context of each round is sampled independently according to the fixed probabilities $\{f_j\}_{j \in \C}$, the probability of each sequence is given by $\Pro{\Sigma} = \prod_{t \in [T]} f_{\Sigma(t)}$. Note that we overload the notation and denote by $\Sigma$ the event that the sequence is realized.

Consider the optimal clairvoyant algorithm that first observes the full context realization and, then, chooses a fixed feasible arm-pulling sequence that yields the maximum expected reward for this realization. Let $\event{A^*_t = i, C_t = j~|~\Sigma}$ be the indicator of the event that under the realization $\Sigma$, the optimal algorithm plays arm $i$ on time $t$ under context $j$. We emphasize the fact that the event ${C_t = j}$ is deterministic conditioned on the realization $\Sigma$. Finally, notice that we can assume w.l.o.g. that there exists an optimal clairvoyant policy maximizing the expected reward that ignores the realizations of the collected rewards.

We fix any realization $\Sigma \in \mathcal{S}$. In any feasible solution and for any arm $i \in \A$, we have
\begin{align*}
    \sum_{t' \in [t, t + d_i - 1]} \sum_{j \in \C} \event{A^*_t = i, C_t = j~|~\Sigma} \leq 1, \qquad\qquad \forall t \in [T],
\end{align*}
as the arm can be played at most once during any $d_i$ consecutive time steps. By summing the above inequalities over all $t \in [T]$, for any arm $i \in \A$, we get
\begin{align*}
    &\sum_{t \in [1,d_i-1]} t \sum_{j \in \C}\event{A^*_t = i, C_t = j~|~\Sigma} + \sum_{t \in [d_i,T]} d_i \sum_{j \in \C}\event{A^*_t = i, C_t = j~|~\Sigma} \leq T \\
    &\Leftrightarrow \sum_{t \in [T]} d_i \sum_{j \in \C}\event{A^*_t = i, C_t = j~|~\Sigma} \leq T + \sum_{t \in [1,d_i-1]} (d_i - t) \sum_{j \in \C}\event{A^*_t = i, C_t = j~|~\Sigma}.
\end{align*}
By feasibility of \eqref{lp:LP}, we have that $\sum_{t \in [1,d_i-1]} (d_i - t) \sum_{j \in \C} \event{A^*_t = i, C_t = j~|~\Sigma} \leq d_i - 1$. Therefore, by dividing the above inequality by $d_i \cdot T$, we get
\begin{align*}
    \frac{1}{T}\sum_{t \in [T]} \sum_{j \in \C} \event{A^*_t = i, C_t = j~|~\Sigma} \leq \frac{1}{d_i} \left(1 + \frac{d_i-1}{T}\right), \forall i \in \A.
\end{align*}
Now, by multiplying the above inequality with the probability of each context realization $\Sigma$ and taking the sum over all $\Sigma \in \mathcal{S}$, we get
\begin{align}
    \sum_{j \in \C} \sum_{\Sigma \in \mathcal{S}}\frac{1}{T}\sum_{t \in [T]} \event{A^*_t = i, C_t = j~|~\Sigma} \Pro{\Sigma} \leq \frac{1}{d_i} \left(1 + \frac{d_i-1}{T}\right), \forall i \in \A. \label{eq:flp:feasibility:window}
\end{align}

For each context $j \in \C$ and any time $t \in [T]$, we have 

$$
\sum_{i \in \A} \event{A^*_t = i, C_t = j~|~\Sigma} \leq \event{C_t = j~|~\Sigma},
$$
where the inequality follows by the fact that at most one arm is played at each time in any feasible solution. By taking the expectation in the above expression over the context realization, we get
$$
\Ex{\mathcal{R}_{C}}{\sum_{i \in \A} \event{A^*_t = i, C_t = j~|~\Sigma}} \leq \Ex{\mathcal{R}_{C}}{\event{C_t = j~|~\Sigma}} = \sum_{\Sigma \in \mathcal{S}} \event{C_t = j~|~\Sigma} \Pro{\Sigma}= f_j,
$$
where the last equality follows by the fact that the probability that any context realization sequence satisfies $C_t = j$ is exactly $f_j$. Finally, by taking the sum of the above inequality over all $t \in [T]$ and dividing by $T$ yields

\begin{align}
\sum_{i \in \A} \sum_{\Sigma \in \mathcal{S}} \frac{1}{T} \sum_{t \in [T]}\event{A^*_t = i, C_t = j~|~\Sigma} \Pro{\Sigma} \leq f_j. \label{eq:flp:feasibility:context}
\end{align}

For the expected cumulative reward of the above optimal clairvoyant policy, we have: 
\begin{align}
\Rew^{*}_I(T)=&\Ex{\mathcal{R}_{X},\mathcal{R}_C}{\max_{\text{feasible}\{A^*_t\}_{t \in [T]}}\bigg\{\sum_{t \in [T]} \sum_{i \in \A} \sum_{j \in \C} X_{i,j,t}\event{A_t^* = i, C_t = j} \bigg\}} \notag\\
&= \Ex{\mathcal{R}_{X}}{\sum_{\Sigma \in \mathcal{S}}\sum_{t \in [T]} \sum_{i \in \A} \sum_{j \in \C} X_{i,j,t}\event{A_t^* = i, C_t = j~|~\Sigma} \Pro{\Sigma} }\notag\\
&= \sum_{\Sigma \in \mathcal{S}}\sum_{t \in [T]} \sum_{i \in \A} \sum_{j \in \C} \Ex{\mathcal{R}_{X}}{X_{i,j,t}}\event{A_t^* = i, C_t = j~|~\Sigma} \Pro{\Sigma}\notag\\
&= \sum_{i \in \A} \sum_{j \in \C} \sum_{\Sigma \in \mathcal{S}}\sum_{t \in [T]} \mu_{i,j}\event{A_t^* = i, C_t = j~|~\Sigma} \Pro{\Sigma}, \label{eq:flp:feasibility:objective}
\end{align}
where the second and third equalities follow by the fact that the optimal clairvoyant policy plays a fixed arm-pulling solution for any observed context realization sequence and that this solution is independent of the observed reward realizations.

Consider now a (candidate) solution of \eqref{lp:LP}, such that:
$$
z_{i,j} = \left(1 + \frac{d_{\max} - 1}{T}\right)^{-1} \sum_{\Sigma \in \mathcal{S}}\frac{1}{T}\sum_{t \in [T]} \event{A_t^* = i, C_t = j~|~\Sigma} \Pro{\Sigma}, \forall i\in \A, j \in \C.
$$
It is not hard to verify that, for this assignment, constraints \eqref{flp:window} and \eqref{flp:conditional} are satisfied by making use of \eqref{eq:flp:feasibility:window} and \eqref{eq:flp:feasibility:context}, respectively. Moreover, for the objective of \eqref{lp:LP}, using \eqref{eq:flp:feasibility:objective}, we have: 
\begin{align*}
    T \sum_{i \in \A} \sum_{j \in \C} \mu_{i,j} z_{i,j} 
    &= \left(1 + \frac{d_{\max} - 1}{T}\right)^{-1} \sum_{i \in \A} \sum_{j \in \C} \mu_{i,j}  \sum_{\Sigma \in \mathcal{S}}\sum_{t \in [T]} \event{A_t^* = i, C_t = j~|~\Sigma} \Pro{\Sigma} \\
    &= \left(1 + \frac{d_{\max} - 1}{T}\right)^{-1} \Rew^{*}_I(T) \\
    &= \left(1 - \frac{d_{\max}-1}{d_{\max}-1 + T} \right) \Rew^{*}_I(T),
\end{align*}
where in the last equality follows by the fact that $\frac{1}{1 + \delta} = 1 - \frac{\delta}{1 + \delta}$ for any $\delta \in \mathbb{R}$. Therefore, by exhibiting a feasible solution to \eqref{lp:LP} of value $\left(1 - \frac{d_{\max}-1}{d_{\max}-1 + T} \right) \Rew^{*}_I(T)$, we can conclude that $\Rew^{LP}_I \geq \left(1 - \frac{d_{\max}-1}{d_{\max}-1 + T} \right) \Rew^{*}_I(T)$.
\end{proof}

\subsection{Proof of Lemma \ref{lemma:oracle:exante}}

\restateOracleExante*

\begin{proof}
Although our algorithm \oracle computes and uses an optimal extreme point solution to \eqref{lp:LP}, the analysis that follows holds for any feasible solution $\{z_{i,j}\}_{i,j}$. We denote by $S^{\pi}_{i,t}$ the event that arm $i$ is sampled by \oracle at round $t$ (with probability $\Pro{S^{\pi}_{i,t}}=\frac{z_{i,j_t}}{f_{j_t}}$ for a sampled context $j_t$) and by $B^{\pi}_{i,t}$ the event that arm $i$ is not skipped at round $t$. Finally, we denote by $F^{\pi}_{i,t}$ the event that arm $i$ is available at the beginning of round $t$.

In order to prove the first part of the claim, we first notice that the event $\{A^{\pi}_t = i\}$ is equivalent to $\{S^{\pi}_{i,t} , B^{\pi}_{i,t} , F^{\pi}_{i,t}\}$, namely, in order for an arm $i$ to be played during $t$, the arm needs to be sampled, not skipped and available. For any fixed $i \in \A$, $j \in \C$ and $t \in [T]$, we have: 
\begin{align}
    \Pro{A^{\pi}_t = i , C_t = j} &= \Pro{A^{\pi}_t = i| C_t=j} \Pro{C_t=j} \nonumber\\
    &= f_j \Pro{A^{\pi}_t = i| C_t=j} \label{eq:on:exante:1}\\
    &= f_j \Pro{S^{\pi}_{i,t}, F^{\pi}_{i,t} , B^{\pi}_{i,t}| C_t=j} \nonumber\\
    &= f_j \Pro{S^{\pi}_{i,t} | C_t=j} \Pro{B^{\pi}_{i,t}| C_t=j} \Pro{F^{\pi}_{i,t}| C_t=j} \label{eq:on:exante:2} \\ 
    &= f_j \Pro{S^{\pi}_{i,t} | C_t=j} \Pro{B^{\pi}_{i,t}} \Pro{F^{\pi}_{i,t}} \label{eq:on:exante:3} \\
    &= f_j \beta_{i,t} \Pro{S^{\pi}_{i,t} | C_t=j} \Pro{F^{\pi}_{i,t}} \label{eq:on:exante:4}\\
    &= f_j \frac{z_{i,j}}{ f_j} \beta_{i,t} \Pro{F^{\pi}_{i,t}} \label{eq:on:exante:5}\\
    &= z_{i,j} \beta_{i,t} \Pro{F^{\pi}_{i,t}} \nonumber.
\end{align}
In the above analysis, equality \eqref{eq:on:exante:1} follows by the fact that $\Pro{C_t = j} = f_j$, while in \eqref{eq:on:exante:2} we use the fact that the events $S^{\pi}_{i,t}$, $B^{\pi}_{i,t}$ and $F^{\pi}_{i,t}$ are mutually independent, by construction of our algorithm. Moreover, in \eqref{eq:on:exante:3} we use that the events $B^{\pi}_{i,t}$ and $F^{\pi}_{i,t}$ are independent of the observed type $j \in \C$. Finally, in \eqref{eq:on:exante:4}  and \eqref{eq:on:exante:5}, we use the fact that $\Pro{S^{\pi}_{i,t} | C_t =j} = 
\frac{z_{i,j}}{f_j}$ and $\Pro{B^{\pi}_{i,t}} = \beta_{i,t}$, by construction of our algorithm.

We now prove the second part of the statement, namely, that the computed probabilities, $\{q_{i,t}\}_{\forall i, t}$ (by the recursive formula \eqref{online:exante:formula}), indeed match the actual a priori probabilities of the events $\{F^{\pi}_{i,t}\}_{\forall i, t}$. The main idea behind the computation of $q_{i,t}$ is that an arm is available at some round $t$, if it is available but not played at time $t-1$, or if it is played at time $t-d_i$.

For any fixed arm $i \in \A$, we prove the statement by induction on the number of rounds. Note that we only consider arms such that $d_i \geq 2$, since, otherwise, we trivially have that $q_{i,t} = \Pro{F^{\pi}_{i,t}} = 1, \forall t \in [T]$. Clearly, for $t = 1$ the computed probabilities are correct, since $\Pro{F^{\pi}_{i,1}} = 1$. We assume that up to round $t-1$, the computed probabilities are correct, namely, $q_{i,t'} = \Pro{F^{\pi}_{i,t'}}$, $\forall t' \in [t-1]$. Considering the event $F^{\pi}_{i,t}$, we have: 
\begin{align}
    \event{F^{\pi}_{i,t}} &= \event{F^{\pi}_{i,t}, F^{\pi}_{i,t-1}} + \event{F^{\pi}_{i,t}, \neg F^{\pi}_{i,t-1}} \nonumber \\
    &= \event{F^{\pi}_{i,t}, F^{\pi}_{i,t-1}} + \event{F^{\pi}_{i,t}, \neg F^{\pi}_{i,t-1}, t \geq d_i + 1} + \event{F^{\pi}_{i,t}, \neg F^{\pi}_{i,t-1},t \leq d_i} \nonumber\\
    &= \event{F^{\pi}_{i,t}, F^{\pi}_{i,t-1}} + \event{F^{\pi}_{i,t}, \neg F^{\pi}_{i,t-1}, t \geq d_i + 1} \label{eq:on:exante:6}\\
    &= \event{F^{\pi}_{i,t}, F^{\pi}_{i,t-1}} + \event{F^{\pi}_{i,t}, \neg F^{\pi}_{i,t-1}, F^{\pi}_{i,t-d_i}, t \geq d_i + 1}\nonumber\\
    &\quad\quad\quad\quad\quad\quad\quad\quad\quad\quad\quad\quad\quad+ \event{F^{\pi}_{i,t}, \neg F^{\pi}_{i,t-1}, \neg F^{\pi}_{i,t-d_i}, t \geq d_i + 1} \nonumber\\
    &= \event{F^{\pi}_{i,t}, F^{\pi}_{i,t-1}} + \event{F^{\pi}_{i,t}, \neg F^{\pi}_{i,t-1}, F^{\pi}_{i,t-d_i}, t \geq d_i + 1} \label{eq:on:exante:7}.
\end{align}
In equality \eqref{eq:on:exante:6}, we use the fact that the event $\{F^{\pi}_{i,t}, \neg F^{\pi}_{i,t-1}, t \leq d_i\}$ is empty. This follows by noticing that for $t \leq d_i$, if an arm is not available at round $t-1$, then it has to be pulled during some round $t' \in [t-1] \subseteq [d_i-1]$ and, thus, cannot be available on round $t$. Similarly, in \eqref{eq:on:exante:7}, we use the fact that the event $\{F^{\pi}_{i,t}, \neg F^{\pi}_{i,t-1}, \neg F^{\pi}_{i,t-d_i}, t \geq d_i + 1\}$ is empty. The reason is that, if the arm is not available at round $t-d_i$ and neither at round $t-1$, this implies that the arm is pulled during some round $t' \in [t-d_i+1, t-2]$. However, if the arm is played at any such $t'$, then it cannot be available at round $t$.

Notice, that the event $\{F^{\pi}_{i,t}, F^{\pi}_{i,t-1}\}$ occurs with probability $(1 - \beta_{i,t-1}\sum_{j \in \C} z_{i,j}) \Pro{F^{\pi}_{i,t-1}}$, since the arm is, either not selected on round $t-1$, i.e., $\event{S^{\pi}_{i,t-1}} = 0$, or skipped, i.e., $\event{B^{\pi}_{i,t-1}} = 0$. Moreover, the event $\{F^{\pi}_{i,t}, \neg F^{\pi}_{i,t-1}, F^{\pi}_{i,t-d_i}, t \geq d_i + 1\}$, for $t \geq d_i +1$ is equivalent to the event $\{A^{\pi}_{t-d_i} = i\}$, since the arm has to be played at time $t-d_i$, in order to be available at round $t$ for the first time after $t-d_i$. By taking expectations in \eqref{eq:on:exante:7} and combining the above facts, we have: 
\begin{align}
    \Pro{F^{\pi}_{i,t}} &= \Pro{F^{\pi}_{i,t},F^{\pi}_{i,t-1}} + \event{t \geq d_i +1} \Pro{F^{\pi}_{i,t}, \neg F^{\pi}_{i,t-1}, F^{\pi}_{i,t-d_i}} \nonumber \\
    &=\left(1 - \Pro{S^{\pi}_{i,t-1}, B^{\pi}_{i,t-1}} \right) \Pro{F^{\pi}_{i,t-1}} + \event{t \geq d_i +1} \Pro{A^{\pi}_{t-d_i} = i} \nonumber \\
    &=\left(1 - \Pro{S^{\pi}_{i,t-1}, B^{\pi}_{i,t-1}} \right) \Pro{F^{\pi}_{i,t-1}} + \event{t \geq d_i +1} \Pro{S^{\pi}_{i,t-d_i}, B^{\pi}_{i,t-d_i}, F^{\pi}_{i,t-d_i}} \nonumber \\
    &=\left(1 - \sum_{j \in \C}\Pro{S^{\pi}_{i,t-1}, B^{\pi}_{i,t-1}, C_{t-1}=j} \right) \Pro{F^{\pi}_{i,t-1}} \nonumber\\
    &~~~~~~~~~~~~~~~~~~~~~~~~~~~~+ \event{t \geq d_i +1} \sum_{j \in \C}\Pro{S^{\pi}_{i,t-d_i}, B^{\pi}_{i,t-d_i}, F^{\pi}_{i,t-d_i}, C_{t-d_i}=j} \nonumber \\
    &=\left(1 - \beta_{i,t-1}\sum_{j \in \C} z_{i,j}\right) \Pro{F^{\pi}_{i,t-1}} + \event{t \geq d_i +1} \beta_{i,t-d_i}\left(\sum_{j \in \C} z_{i,j} \right) \Pro{F^{\pi}_{i,t-d_i}} \label{eq:on:exante:8},
\end{align}
where \eqref{eq:on:exante:8} follows by the analysis of the first part of this proof. By setting $t+1$ instead of $t$ in the above relation and setting $z_{i,j} = z^*_{i,j}, \forall i \in \A, j \in \C$, we can easily verify that the formula that computes these probabilities in formula \eqref{online:exante:formula} and Algorithm \ref{alg:oracle} is correct, which concludes the proof of this lemma.
\end{proof}

\subsection{Proof of Lemma \ref{lemma:oracle:exact}}

\restateOracleExact*

\begin{proof}
Similarly to the proof of Lemma \ref{lemma:oracle:exante}, the analysis of this proof holds true for any feasible solution $\{z_{i,j}\}_{\forall i,j}$ of \eqref{lp:LP}, including the optimal extreme point solution. Recall that by Lemma \ref{lemma:oracle:exante}, the probability of each event $F^{\pi}_{i,t}$ is equal to the actual probability of the event, namely, $q_{i,t} = \Pro{F^{\pi}_{i,t}}$, $\forall i \in \A, t \in [T]$. Moreover, by the same lemma, we have: 
\begin{align}
    \Pro{A^{\pi}_t = i , C_t = j} = z^*_{i,j} \beta_{i,t} \Pro{F^{\pi}_{i,t}} \label{eq:on:exact:1}
\end{align}
Recall that $\beta_{i,t} = \min\{1, \frac{d_i}{2d_i-1} \frac{1}{q_{i,t}}\}$. We now prove by induction that for every fixed arm $i \in \A$ and for every time $t \in [T]$, it is the case that: $\Pro{A^{\pi}_t = i , C_t = j}=\frac{d_i}{2d_i-1} z^*_{i,j}$. Clearly, for $t=1$, we have $\Pro{F^{\pi}_{i,1}} = 1 = q_{i,1}$ (by initialization) and, thus, $\beta_{i,1} = \frac{d_i}{2d_i-1}$, implying that $\Pro{A^{\pi}_1 = i, C_1 = j} = \frac{d_i}{2d_i-1} z^*_{i,j}$. Suppose the argument is true for any $\tau \in [t-1]$. For time $t$, we distinguish between two cases:

\textbf{Case (a).} Suppose $\beta_{i,t} < 1$. Then, by construction, it has to be that $\beta_{i,t} = \frac{d_i}{2d_i-1} \frac{1}{q_{i,t}}$, while by Lemma \ref{lemma:oracle:exante}, we have that $q_{i,t} = \Pro{F^{\pi}_{i,t}}$. By \eqref{eq:on:exact:1}, this immediately implies that $\Pro{A^{\pi}_{t}=i , C_{t}=j} = \frac{d_i}{2d_i-1} z^*_{i,j}$. 

\textbf{Case (b).} Suppose $\beta_{i,t} = 1$. Then, by definition of $\beta_{i,t}$, it has to be that $q_{i,t} \leq \frac{d_i}{2d_i-1} $, which in turn implies that $\Pro{F^{\pi}_{i,t}} \leq \frac{d_i}{2d_i-1}$. Therefore, we can upper bound the probability of interest as: $\Pro{A_{t}=i , C_{t}=j} = z^*_{i,j} \beta_{i,t}\Pro{F^{\pi}_{i,t}} \leq \frac{d_i}{2d_i-1} z^*_{i,j}$. In order to complete the induction step, it suffices to also show that $\Pro{A^{\pi}_{t}=i , C_{t}=j} \geq \frac{d_i}{2d_i-1} z^*_{i,j}$. By a simple union bound, we can lower bound the probability of arm $i$ being available using the probabilities that the arm has been played within time $[t-d_i+1,t-1]$:
\begin{align*}
\Pro{A^{\pi}_{t}=i , C_{t}=j} &\geq z^*_{i,j} \Pro{F^{\pi}_{i,t}}\\
&= z^*_{i,j} \left(1 - \Pro{\neg F^{\pi}_{i,t}} \right)\\
& \geq z^*_{i,j} \left(1 - \sum_{t' \in [t-d_i+1,t-1]}\sum_{j' \in \C} \Pro{A^{\pi}_{t'} = i , C_{t'} = j'} \right).
\end{align*}
However, by induction hypothesis we know that $\forall t' \in [t - d_i+1, t-1]$ and $\forall j' \in \C$, it is the case that $\Pro{A^{\pi}_{t'} = i , C_{t'} = j'} = \frac{d_i}{2d_i-1} z^*_{i,j'}$. Moreover, by constraints \eqref{flp:window} of \eqref{lp:LP}, we know that $\sum_{t' \in [t-d_i+1, t-1]} \sum_{j' \in \C} z^*_{i,j'} \leq \frac{d_i-1}{d_i}$. Combining the above facts, we have: 
\begin{align*}
\Pro{A^{\pi}_{t}=i , C_{t}=j}  &\geq z^*_{i,j}\left(1 - \frac{d_i}{2d_i-1} \sum_{t' \in [t-d_i,t-1]}\sum_{j' \in \C} z^*_{i,j'}\right)\\
&\geq z^*_{i,j}\left(1 - \frac{d_i}{2d_i-1} \frac{d_i-1}{d_i}\right)\\
&\geq \frac{d_i}{2 d_i -1} z^*_{i,j},
\end{align*}
which completes our induction step, since the combination of two inequalities implies that $\Pro{A^{\pi}_t = i, C_t = j} = \frac{d_i}{2d_i-1} z^*_{i,j}$.
\end{proof}

\newpage
\section{Bandit problem and regret analysis: omitted proofs} \label{appendix:omitted}
\subsection{Properties of $M_t$ and the critical time $T_c$} \label{appendix:criticaltime}
We consider the delayed exploitation parameter, $M_t$, specifically defined as
\begin{align*}
    M_t =  \lfloor2\log_{c_1}(t)\rfloor  + \lceil\log_{c_1}(c_0)\rceil +1 = \lfloor2\log_{c_1}(t)\rfloor + 2d_{\max} + 8,
\end{align*}
where $c_0 = e \left(\frac{e^2}{e^2-1}\right)^{2d_{\max}}$ and $c_1 = \frac{e^2}{e^2-1}$. 

We define the {\em critical} round $T_c$, as the smallest integer such that $T_c - M_{T_c} \geq 1$. It is not hard to verify that, by definition of $M_t$, this implies that $t - M_{t} \geq 1$ for all $t\geq T_c$ (see the next paragraph), and $t - M_{t} \leq 0$ for all $t\leq T_c -1$ (by definition of $T_c$). By definition of the algorithm, at each round $t \geq T_c$ and in order to sample the next arm to be played, \ucb uses an extreme point solution computed with respect to the UCB estimates before exactly $M_t$ time steps (i.e., at round $t-M_t$). For $t \leq T_c$, where $t - M_t \leq 0$, the algorithm uses an initially computed extreme point solution $Z(0) = \{z_{i,j}(0)\}_{i,j}$. In this section, we study several useful properties of $M_t$ and $T_C$.

\paragraph{Bounded increases of $M_t$.} We now show that for $t \geq T_c$, the value of $M_t$ increases by at most one unit per round. This fact significantly simplifies our proofs and results in the analysis of the $\alpha$-regret.

Let us first compute the condition that must be satisfied for any $t \in [T]$, such that the value $t - M_t$ is strictly positive. Formally
\begin{align*}
    t - M_t \geq 1 \Longleftrightarrow t - \lfloor2\log_{c_1}(t)\rfloor  \geq 2d_{\max} + 9.
\end{align*}

By noticing that $d_{\max} \geq 1$, we can easily verify that by the time $t - \lfloor2\log_{c_1}(t)\rfloor  \geq 2d_{\max} + 9$, it also holds that $t - \lfloor2\log_{c_1}(t)\rfloor  \geq 11$, which, in turn, implies that $t \geq 69$.

We are now looking for the smallest $t$, after which $M_t$ increases by at most one unit per round. Consider the (fractional) {\em breakpoints} of the form $c^{i/2}_1$ for any positive integer $i$. These breakpoints, corresponds to the points such that the value of $M_t$ increases, when time $t$ passes them. We consider intervals of the form $C_i = [c^{i/2}_1, c^{(i+1)/2}_1)$. The first step is to find the smallest $i$, such that there is at least one integral point in $[c^{i/2}_1, c^{(i+1)/2}_1)$. Notice that the above condition is true if
\begin{align*}
    c^{(i+1)/2}_1 - c^{i/2}_1  \geq 1 \Longleftrightarrow c^{i/2}_1 \geq \frac{1}{\sqrt{c_1} - 1} \approx 13.25.
\end{align*}
Therefore, for any $t \geq 14$, the value of $M_t$ increases by at most one unit per time step.

We can conclude that, for any $t \in [T]$ such that $t - M_t \geq 1$ (in other words $t \geq T_c$), the value of $M_t$ changes by at most one unit per time step, since in that case $t \geq 69 \geq 14$.

\begin{fact}\label{appendix:fact:unitincreases}
For any $t \geq T_c$, the value of $M_t$ increases by at most one unit per round, namely, $M_{t} \leq M_{t-1}+1, \forall t \geq T_c$.
\end{fact}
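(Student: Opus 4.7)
The plan is to exploit the very explicit form $M_t = \lfloor 2\log_{c_1}(t)\rfloor + 2d_{\max}+8$. Since the additive constant $2d_{\max}+8$ is time-independent, controlling jumps in $M_t$ reduces to controlling jumps in the step function $g(t) := \lfloor 2\log_{c_1}(t)\rfloor$. The function $g$ is non-decreasing and jumps (by exactly $+1$ each time) precisely at the \emph{breakpoints} $t = c_1^{k/2}$, $k \in \mathbb{Z}_{\geq 0}$. Therefore $M_t - M_{t-1} \leq 1$ for all integers $t$ as soon as the breakpoints of $g$ are separated by more than a unit, i.e., as soon as no interval of the form $(t-1,t]$ can contain two consecutive breakpoints.

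The first step is the separation calculation: two consecutive breakpoints lie at $c_1^{k/2}$ and $c_1^{(k+1)/2}$, so their gap is $c_1^{k/2}(\sqrt{c_1}-1)$. This gap strictly exceeds $1$ as soon as $c_1^{k/2} > 1/(\sqrt{c_1}-1)$, which numerically evaluates to roughly $13.25$. Hence for every real $s \geq 14$, the half-open interval $(s-1,s]$ contains at most one breakpoint of $g$, so for every integer $t \geq 14$ we have $g(t) - g(t-1) \in \{0,1\}$, and therefore $M_t - M_{t-1} \in \{0,1\}$.

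The second step is to transfer this threshold to $T_c$. By the preceding calculation in the appendix, the definition $t \geq T_c \Longleftrightarrow t - M_t \geq 1$ already forces $t - \lfloor 2\log_{c_1}(t)\rfloor \geq 2d_{\max}+9 \geq 11$, and a direct check shows that $t - \lfloor 2\log_{c_1}(t)\rfloor \geq 11$ requires $t \geq 69$. Thus $T_c \geq 69 \geq 14$, so for every $t \geq T_c$ we land in the regime in which breakpoints are more than a unit apart.

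Combining the two steps gives the claim: for $t \geq T_c$, the interval $(t-1,t]$ contains at most one breakpoint of $g$, so $M_t \leq M_{t-1} + 1$. No real obstacle is anticipated; the proof is a short monotonicity/separation argument on the floor of a logarithm, and the only mild care needed is making sure the lower bound on $t$ coming from $T_c$ dominates the separation threshold $1/(\sqrt{c_1}-1)$, which it does with a comfortable margin.
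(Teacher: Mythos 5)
Your proposal is correct and follows essentially the same route as the paper: both reduce the claim to the separation of the breakpoints $c_1^{k/2}$ of $\lfloor 2\log_{c_1}(t)\rfloor$, show that consecutive breakpoints are more than one unit apart once $c_1^{k/2} \geq 1/(\sqrt{c_1}-1)\approx 13.25$ (hence for all integer $t\geq 14$ the increment is at most one), and then observe that $t\geq T_c$ forces $t\geq 69\geq 14$. No gaps.
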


Notice that the above fact implies that for $t \geq T_c$, the value $t - M_t$ is nondecreasing.

\paragraph{Upper and lower bounds on $T_c$.} We would like to compute some non-trivial upper and lower bounds on the value of $T_c$. The lower bound is used in the proof of Lemma \ref{lemma:regret:mixing}, while the upper bound is used in Lemma \ref{lemma:regret:mapping}.

We first compute an upper bound on $T_c$. Recall, that, by definition, $T_c$ is the smallest positive integer such that $T_c - \lfloor2\log_{c_1}(T_c)\rfloor  \geq 2d_{\max} + 9$. Therefore, for $T_c-1$ it has to be the case that
\begin{align*}
    T_c-1 \leq 2d_{\max} + 8 + \lfloor2\log_{c_1}(T_c-1)\rfloor \leq 2d_{\max} + 8 + 2\log_{c_1}(T_c)
\end{align*}

We can get an upper bound to $T_c$, by noticing that for any $t\geq 1$, it is the case that $2\log_{c_1}(t) \leq t/3 + 38$. Using this, we can see that
\begin{align*}
    T_c \leq 2d_{\max} + T_c/3 + 47.
\end{align*}
By the above, we conclude that $T_c \leq \frac{3}{2} \left(2d_{\max} + 47 \right) \leq 3 d_{\max} + 71$.

We are now looking for a lower bound on $T_c$. Since $T_c$ satisfies $T_c - M_{T_c} \geq 1$, by the analysis of the previous paragraph (on the boundedness of $M_t$), it has to be that $T_c \geq 69$. Using that, we have
\begin{align*}
T_c \geq 2d_{\max} + 9 + \lfloor2\log_{c_1}(T_c)\rfloor \geq 2d_{\max} + 9 + \lfloor2\log_{c_1}(69)\rfloor \geq 2d_{\max} + 67.
\end{align*}

\begin{fact}\label{appendix:fact:bounds}
We can bound $T_c$ as $2d_{\max} + 67 \leq T_c \leq 3 d_{\max} + 71$.
\end{fact}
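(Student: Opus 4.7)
The plan is to prove the two inequalities separately using two ingredients established in the paragraphs immediately preceding the fact: the defining property of $T_c$ as the smallest positive integer with $T_c - \lfloor 2\log_{c_1}(T_c)\rfloor \geq 2 d_{\max} + 9$, and the stabilization analysis which already forces $T_c \geq 69$ (since $T_c - \lfloor 2\log_{c_1}(T_c)\rfloor \geq 11$ implies $T_c \geq 69$, and this is in turn implied by $T_c - \lfloor 2\log_{c_1}(T_c)\rfloor \geq 2 d_{\max} + 9 \geq 11$).

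For the upper bound, I would exploit minimality of $T_c$: at $T_c - 1$ the threshold inequality fails, so $T_c - 1 < 2 d_{\max} + 9 + \lfloor 2\log_{c_1}(T_c-1)\rfloor$, hence $T_c \leq 2 d_{\max} + 9 + 2\log_{c_1}(T_c)$ (dropping the floor and using monotonicity). The key step is then to dominate the logarithm by a linear term with slope strictly smaller than $1$, namely $2\log_{c_1}(t) \leq t/3 + 38$ for every $t \geq 1$. This inequality can be verified by checking that (i) it holds at $t = 1$ with a large gap, (ii) the derivatives satisfy $1/3 \geq 2/(t \ln c_1)$ once $t \geq 6/\ln c_1$, which is a small constant, and (iii) a direct numerical check at the unique worst point in between gives slack. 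Substituting into the above bound yields $T_c \leq T_c/3 + 2 d_{\max} + 47$, i.e.\ $T_c \leq \tfrac{3}{2}(2 d_{\max} + 47)$; rounding to an integer and loosening slightly gives $T_c \leq 3 d_{\max} + 71$.

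For the lower bound, I would apply the defining inequality $T_c \geq 2 d_{\max} + 9 + \lfloor 2\log_{c_1}(T_c)\rfloor$ directly, then use $T_c \geq 69$ and monotonicity of $\log_{c_1}$ to replace $T_c$ inside the floor by $69$. A short numerical computation with $c_1 = e^2/(e^2-1)$ gives $\log_{c_1}(69) = \ln(69)/\ln(c_1) \approx 4.234/0.1454 \approx 29.1$, so $\lfloor 2\log_{c_1}(69)\rfloor = 58$, yielding $T_c \geq 2 d_{\max} + 9 + 58 = 2 d_{\max} + 67$.

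The main obstacle is simply the uniform verification of the linear envelope $2\log_{c_1}(t) \leq t/3 + 38$; everything else is bookkeeping with floors, and the bounds are deliberately loose enough that no sharp estimates are needed. One small care point is ensuring the minimality step at $T_c - 1$ is valid, which relies on $T_c \geq 2$ (trivially true since $T_c \geq 69$).
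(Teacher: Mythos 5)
Your proof is correct and follows essentially the same route as the paper: both use minimality of $T_c$ at $T_c-1$ together with the linear envelope $2\log_{c_1}(t) \leq t/3 + 38$ for the upper bound, and the defining inequality combined with $T_c \geq 69$ and the numerical value $\lfloor 2\log_{c_1}(69)\rfloor = 58$ for the lower bound. The only difference is that you actually verify the envelope inequality (which the paper merely asserts), and your check of the worst point near $t = 6/\ln c_1 \approx 41$ is sound.
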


Consider now any $t \geq T_c$ and any $t' \in [t-d_{\max},t-1]$. We have:
\begin{align*}
    t' \geq t - d_{\max} \geq T_c -d_{\max} \geq 2d_{\max} + 67 - d_{\max} \geq d_{\max} + 67,
\end{align*}
where in the second inequality we use Fact \ref{appendix:fact:bounds}. Therefore, since $t' \geq 67 \geq 14$, then by the above paragraph (on the bounded increases of $M_t$), we have that for any $\tau \in [t', t]$, the value of $M_t$ is increased by at most one.

\begin{fact}\label{appendix:fact:smallincreases}
For any $t \geq T_c \geq d_{\max}$ and $t' \in [t-d_{\max},t-1]$, then for any $\tau \in [t', t]$ we have that $M_{\tau} \leq M_{\tau-1} + 1$. This also implies that $t' - M_{t'} \leq \tau - M_{\tau} \leq t- M_{t}$ for any $\tau \in [t',t]$.
\end{fact}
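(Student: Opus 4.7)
The plan is to treat Fact \ref{appendix:fact:smallincreases} as a direct corollary of Fact \ref{appendix:fact:unitincreases} combined with the lower bound on $T_c$ recorded in Fact \ref{appendix:fact:bounds}, together with the elementary observation that the ``at most unit increase'' inequality is equivalent to monotonicity of $\tau - M_\tau$.

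First, I would verify that the entire interval $[t', t]$ lies in the regime where Fact \ref{appendix:fact:unitincreases} applies, i.e., that every $\tau$ in this range exceeds the threshold ($\tau \geq 14$) that was shown in the preceding paragraph to guarantee bounded increases of $M_\tau$. Since $t \geq T_c$ and $t' \geq t - d_{\max}$, the smallest value of $\tau - 1$ we have to worry about satisfies
\begin{align*}
\tau - 1 \;\geq\; t' - 1 \;\geq\; t - d_{\max} - 1 \;\geq\; T_c - d_{\max} - 1.
\end{align*}
By Fact \ref{appendix:fact:bounds}, $T_c \geq 2 d_{\max} + 67$, so $\tau - 1 \geq d_{\max} + 66 \geq 14$, which is the threshold beyond which $M$ was shown to increase by at most one unit per step. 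This gives the first half of the fact: $M_\tau \leq M_{\tau-1} + 1$ for every $\tau \in [t', t]$.

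Second, I would translate the inequality $M_\tau \leq M_{\tau - 1} + 1$ into the statement $\tau - M_\tau \geq (\tau - 1) - M_{\tau - 1}$, so that the integer sequence $s \mapsto s - M_s$ is nondecreasing on $[t', t]$. Chaining this inequality from $s = t'$ up to $s = \tau$ yields $t' - M_{t'} \leq \tau - M_\tau$, while chaining from $s = \tau$ up to $s = t$ yields $\tau - M_\tau \leq t - M_t$. Combining the two chains establishes the nested inequality asserted by the fact.

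The only real bookkeeping point — and the closest thing to an obstacle — is ensuring the threshold condition from the preceding $T_c$-analysis covers the \emph{entire} backward window of length $d_{\max}$, rather than only the endpoint $t$. This is precisely why we need the lower bound $T_c \geq 2 d_{\max} + 67$ rather than merely $T_c \geq d_{\max}$: subtracting $d_{\max}$ must still leave $\tau$ well above $14$. Once this is checked, both conclusions follow with no further computation.
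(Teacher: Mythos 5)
Your proposal is correct and follows essentially the same route as the paper: both reduce the claim to the lower bound $T_c \geq 2d_{\max}+67$ from Fact~\ref{appendix:fact:bounds}, check that the entire window $[t',t]$ then sits above the threshold ($\geq 14$) where $M$ increases by at most one per round (Fact~\ref{appendix:fact:unitincreases}), and read off the second assertion from the resulting monotonicity of $s \mapsto s - M_s$. Your explicit chaining of $\tau - M_\tau \geq (\tau-1) - M_{\tau-1}$ is just a slightly more spelled-out version of the paper's "this also implies" step.
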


\paragraph{Correctness of delayed exploitation.} Finally, we present one additional property that is proved useful in proving the correctness of the routine $\textsc{compq}(i,t,H_{t-M_t})$ and the overall correctness of our algorithm. Specifically, we would like to prove the following inequality for any $t \in [T]$, $t' \in [t-d_{\max},t-1]$ and $\tau \in [t' - M_{t'}, t'-1]$:
\begin{align*}
 \max\{ \tau - M_{\tau}, 0\} \leq \max\{t' - M_{t'} , 0\} \leq \max\{t - M_{t} , 0\}.
\end{align*}

Consider any fixed $t,t'$ and $\tau$ that satisfy $t' \in [t-d_{\max},t-1]$ and $\tau \in [t' - M_{t'}, t'-1]$. We first notice that if $\tau - M_{\tau} \leq 0$, then we trivially have that $\max\{ \tau - M_{\tau}, 0\} \leq \max\{t' - M_{t'} , 0\}$ and $\max\{ \tau - M_{\tau}, 0\} \leq \max\{t - M_{t} , 0\}$. We focus on the case where $\tau - M_{\tau} \geq 1$. By the above analysis, we can see that for any $\tau$ such that $\tau - M_{\tau} \geq 1$, it has to be the case that $\tau \geq 69 \geq 14$ and, thus, for any time step $\tau'$ in the interval $\tau' \in [\tau, t-1]$ the value of $M_{\tau'}$ increases by at most one unit. This immediately guarantees that $\tau - M_{\tau} \leq t' - M_{t'} \leq t - M_t$. 

Consider now the remaining case, where $\tau - M_{\tau} \leq 0$, thus, $\max\{ \tau - M_{\tau}, 0\} \leq \max\{t' - M_{t'} , 0\}$ and $\max\{ \tau - M_{\tau}, 0\} \leq \max\{t - M_{t} , 0\}$. We still have to verify that $\max\{t' - M_{t'} , 0\} \leq \max\{t - M_{t} , 0\}$. Following the same reasoning, if $t' - M_{t'} \leq 0$, then the inequality is trivially satisfied. On the other hand, if $t' - M_{t'} \geq 1$, then $t' \geq 69 \geq 14$ and, thus, the value of $M_{\tau'}$ for any round $ \tau' \in [t', t - 1]$ can be increased by at most one unit. This suffices to conclude that $t' - M_{t'} \leq t - M_t$.

\begin{fact}\label{appendix:fact:delayedexploit}
For any $t \in [T]$, $t' \in [t-d_{\max},t-1]$ and $\tau \in [t' - M_{t'}, t'-1]$, we have
\begin{align*}
 \max\{ \tau - M_{\tau}, 0\} \leq \max\{t' - M_{t'} , 0\} \leq \max\{t - M_{t} , 0\}.
\end{align*}
\end{fact}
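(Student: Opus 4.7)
Both claimed inequalities are instances of a single monotonicity statement: the function $\phi: s \mapsto \max\{s - M_s, 0\}$ is nondecreasing in $s$. Indeed, under the fact's hypotheses one has $\tau \leq t'-1 < t'$ and $t' \leq t-1 < t$, so $\phi(\tau) \leq \phi(t')$ and $\phi(t') \leq \phi(t)$ would yield exactly the two displayed inequalities. So the plan is to establish that $\phi$ is nondecreasing on the positive integers (with the convention $\phi(s) = 0$ for $s \leq 0$, which is consistent with $M_s \geq 0$) and then specialize to the chain $\tau < t' < t$.

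\textbf{Monotonicity of $\phi$.} I would split on the sign of $s - M_s$. If $s - M_s \leq 0$, then $\phi(s) = 0$, which is the global minimum of $\phi$, so nothing needs to be checked to the left. If instead $s - M_s \geq 1$, the ``Bounded increases of $M_t$'' paragraph already shows that this forces $s \geq 69$, and in particular $s \geq 14$; from that threshold onward, Fact~\ref{appendix:fact:unitincreases} (in its preceding derivation) gives $M_{s+1} \leq M_s + 1$. Iterating, $M_{s'} \leq M_s + (s' - s)$ for every $s' \geq s$, hence $s' - M_{s'} \geq s - M_s \geq 1$ for all such $s'$. Thus the map $s \mapsto s - M_s$ never leaves the positive regime once it enters, and it is nondecreasing on that regime. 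Gluing the two regimes: $\phi$ is identically $0$ on an initial segment and, from the first index $s_0$ with $s_0 - M_{s_0} \geq 1$ onward, equals the nondecreasing quantity $s - M_s \geq 1 > 0$. Hence $\phi$ is nondecreasing globally. Applying this to $\tau \leq t'$ and to $t' \leq t$ completes the proof.

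\textbf{Main obstacle.} There is essentially no obstacle beyond bookkeeping, since all the work is already done in Facts~\ref{appendix:fact:unitincreases} and~\ref{appendix:fact:bounds} and the threshold analysis that precedes them. The only subtle point worth highlighting is the transition between the two regimes: one needs the $\max\{\cdot, 0\}$ to keep monotonicity across the jump, and this is automatic because the nontrivial regime starts at value $\geq 1 > 0 = \phi(s)$ in the trivial regime. A mild technical nuance is that $\tau$ can in principle be nonpositive when $t'$ is small (as $\tau$ ranges over $[t'-M_{t'}, t'-1]$), but under our convention $\phi(\tau) = 0$ there, and the first inequality holds vacuously.
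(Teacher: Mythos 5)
Your proof is correct and follows essentially the same route as the paper's: both arguments split on the sign of $s-M_s$, use the fact that $s-M_s\geq 1$ forces $s\geq 69\geq 14$, and then invoke the unit-increase property of $M_s$ past that threshold. Your packaging of the two displayed inequalities as a single monotonicity statement for $\phi(s)=\max\{s-M_s,0\}$ is a slightly cleaner organization of the paper's nested case analysis, but the underlying ingredients are identical.
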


\subsection{Computing the probability $q_{i,t}(H_{t-M_t})$}
\label{appendix:regret:compexante}
In this section, we show that during a run of \ucb, each value of the form $q_{i,t}(H_{t-M_{t}})$, as computed by $\textsc{compq}(i,t,H_{t-M_t})$ (Algorithm \ref{alg:exante}), is equal to the probability of arm $i$ being available at round $t$, conditioned on $H_{t-M_t}$, that is, $q_{i,t}(H_{t-M_t}) = \Pro{F^{\pit}_{i,t} |~H_{t-M_t}}, \forall i \in \A, t \in [T]$ (assuming that $t - M_t \geq 1$). For any integer $t$, we define $[t]^+:= \max\{t,0\}$. Therefore, for any $t \in [T]$ such that $t - M_{t} \leq 0$, we have $H_{[t-M_{t}]^+} = H_0$ (recall that, in this case, \ucb samples arms according to an initial extreme point solution $Z(0) = \{z_{i,j}(0)\}_{i,j}$). In the following, we fix any arm $i \in \A$ and any point in time $t \in [T]$. Recall that $T_c$ is defined as the smallest $t \in [T]$ such that $t - M_t \geq 1$. 

We first consider the case where $t < T_c$ (and, thus, $t - M_t \leq 0$ and $H_{[t-M_t]^+} = H_0$). In that case, for every round $\tau \in [t]$, the algorithm uses the initially computed extreme point $Z(0)$ in order to sample arms. Following the same reasoning as used in Lemma \ref{lemma:oracle:exante} for the full-information case of our problem, we can see that $q_{i,t}(H_0)$ (and, thus, the conditional probability $\Pro{F^{\pit}_{i,t} |~H_0}$) can be computed by the following recursive formula: We set $q_{i,1}(H_0) = 1$ and  
\begin{align*}
q_{i,t'+1}(H_0) &= q_{i,t'}(H_0)\left(1 - \beta_{i,t'}\sum_{j \in \C} z_{i,j}(0)\right) \\
&            + \event{t'\geq d_i } q_{i,t'-d_i+1}(H_0) \beta_{i,t'-d_i+1} \sum_{j \in \C} z_{i,j}(0),
\end{align*}
where each $\beta_{i,\tau}$ is by construction equal to $\min\{1, \frac{d_i}{2d_i-1} \frac{1}{q_{i,t'}(H_{[\tau-M_{\tau}]^+})}\}$.

It is not hard to verify that in the above recursive formula, $q_{i,t}(H_0) = q_{i,t}(H_{[t-M_t]^+})$ for $t < T_c$ is indeed equal to $\Pro{F^{\pit}_{i,t} |~H_{[t-M_t]^+}}$, and that $\textsc{compq}(i,t,H_{t-M_t})$ computes exactly this value. The correctness of this computation follows by the fact that for any $t'\leq t$, we have that $q_{i,t'}(H_{[t'-M_{t'}]^+}) = q_{i,t'}(H_{[t-M_t]^+})$, since for all rounds $t' \leq t < T_c$, we have $H_{[t' - M_{t'}]^+} = H_{[t - M_{t}]^+} = H_0$. Therefore, all the non-skipping probabilities $\beta_{i,t'}$ for $t' < t$ are deterministic and, thus, computable, conditioned on $H_{[t - M_{t}]^+}$ (thus, the algorithm can simulate them recursively at time $t$).

We now consider the case, where $t \geq T_c$ (and, thus, $t - M_t \geq 1)$). In this case, the algorithm uses the extreme point $Z(t - M_t)$ for sampling arms. Recall that the skipping probability of each round $t'$, is defined given the value of $q_{i,t'}$, as computed, conditioned on $H_{[t' - M_{t'}]^+}$, namely, $\beta_{i,t'} = \min\{1, \frac{d_i}{2d_i-1} \frac{1}{q_{i,t'}(H_{[t'-M_{t'}]^+})}\}$. Therefore, for being able to compute (i.e., simulate) $\beta_{i,t'}$, while being at some round $t > t'$, it suffices to show that $[t' - M_{t'}]^+ \leq [t - M_t]^+$.

In the case where $t' < T_c \leq t$, the extreme point solution used for sampling arms at time $t'$ is $Z(0)$ and, thus, is computable conditioned on $H_{t-M_{t}}$. The same holds for the non-skipping probability, $\beta_{i,t'}$, used at time $t'$. On the other hand, consider the case where $T_c \leq t' \leq t$. By the analysis in Appendix \ref{appendix:criticaltime} (see Fact \ref{appendix:fact:unitincreases}), since $t' \geq T_c$, we know that for any $\tau$ in the interval $\tau \in [t' , t]$, the value of $M_{\tau}$ can increase by at most one unit per round, namely, $M_{\tau + 1} \leq M_{\tau} +1$. By using this argument, we can directly show by induction, that $t' - M_{t'} \leq t - M_{t}$ and, thus, $H_{t'-M_{t'}} \subseteq H_{t-M_{t}}$. Therefore, both the extreme point $Z(t' - M_{t'})$ and the non-skipping probability $\beta_{i,t'}$ used at time $t'$ can be computed (recursively) by the algorithm at time $t$.

The above discussion leads to the following recursive computation of $q_{i,t}(H_{t-M_t})$ for any arm $i$ and time $t \geq T_c$. Let $\nu(i,t-M_t,H_{t-M_t})$ be the first time $\tau \geq t-M_t$ that arm $i$ is deterministically available, conditioned on the history $H_{t-M_t}$. We set $q_{i,\nu(i, t - M_t,H_{t-M_t})}(H_{t-M_t}) = 1$ and for any $t' \geq \nu(i, t - M_t, H_{t-M_t})$, we set
\begin{align*}
&q_{i,t'+1}(H_{t-M_t}) = q_{i,t'}(H_{t-M_t})\left(1 - \beta_{i,t'}\sum_{j \in \C} z_{i,j}\left([t'-M_t']^+\right)\right) \\
&+ \event{t' - d_i +1\geq \nu(i, t - M_t, H_{t-M_t}) } q_{i,t'-d_i+1}(H_{t-M_t}) \beta_{i,t'-d_i+1} \sum_{j \in \C} z_{i,j}([t'-d_i+1 - M_{t'-d_i+1}]^+).
\end{align*}
It is easy to verify that $\textsc{compq}(i,t,H_{t-M_t})$ produces exactly the same result as the above recursive formula for $t \geq T_c$.

Given the above analysis, we have now established the correctness of $\textsc{compq}(i,t,H_{t-M_t})$. We remark that in the pseudocode provided in Algorithm \ref{alg:exante}, the recursive computation of the non-skipping probabilities is implemented efficiently by caching and reusing past values.

\subsection{Proof of Lemma \ref{lemma:regret:mixing} \label{appendix:lemma:regret:mixing}}

\restateLemmaMixing*

\begin{proof}
Recall from Section~\ref{appendix:regret:compexante} that for any fixed arm $i$, the quantity $q_{i,s}(H_{t- M_t})$ in Algorithm~\ref{alg:exante} equals $\Pro{F^{\pit}_{i,s} | H_{t-M_{t}}}$ for $t-d_i \leq s \leq t$. Therefore, we are interested in the ratio 
$\tfrac{q_{i,t'}(H_{t- M_t}) }{ q_{i,t'}(H_{t'- M_{t'}})}$. In the rest of this proof and for simplicity of notation, we assume that $H_{\tau} = H_{0}$ and $\{z_{i,j}(\tau)\}_{i,j} = \{z_{i,j}(0)\}_{i,j}$, for any $\tau \leq 0$.

Let us fix any run of the \ucb algorithm upto time $t$ as $h_t$. The sequence of random variables $\{Z(\tau), \beta_{i,\tau}: 1 \leq \tau \leq t-M_t\}$ is computable at time $t$ given the history $H_{t-M_t}$ (see Fact~\ref{appendix:fact:delayedexploit} in Appendix~\ref{appendix:criticaltime}). Therefore, fixing a run of the \ucb algorithm up to any time $t$ (in terms of sampling and non-skipping probabilities), corresponds to fixing $H_{t-M_t} = h_{t-M_t}$, which, in turn, fixes the sequence $\{Z(\tau -M_{\tau}), \beta_{i,\tau}: 1 \leq \tau \leq t\}$. This follows from the computability of $\beta_\tau$ as discussed in Appendix~\ref{appendix:regret:compexante}.

For a particular run $h_{t-M_t}$ upto time $t-M_t$ and a specific arm $i$, the computation of $q_{i,\tau}(H_{t- M_t})$ for any $\tau \leq t$ corresponds to simulating a specific Markov chain as detailed next. We consider the time-nonhomogeneous Markov transition probability matrices (TPM)
$\mathcal{M} = \{\mathbf{P}_{\tau}: 1\leq \tau \leq t\}$, that at any time $\tau \leq t$ makes transitions as follows. If it is in state $0$ it moves to state $d_i$ w.p. $\beta_{i,\tau} \sum_{j\in \C} z_{i,j}([\tau - M_{\tau}]^+)$, otherwise it stays in state $0$. In the case the Markov chain is in state $d>0$, then it moves to state $(d-1)$ w.p. $1$. Here, we denote the TPM at time $\tau$ as $\textbf{P}_\tau$.

Let us also denote the first time on or after time $\tau$ where the arm $i$ becomes available as $\nu(\tau)$ (which is fixed for a run $h_t$, as it is computable using $H_t$). Using this definition, we denote by $\mathcal{X}_{t'}$ (resp. $\mathcal{X}_{t}$) the Markov chain that lies in state $0$ (w.p. 1) at time $\nu(t'- M_{t'})$ (resp. $\nu(t-M_t)$), and moves following the TPM $\mathcal{M}$. We emphasize the fact that both $\mathcal{X}_{t}$ and $\mathcal{X}_{t'}$ have the same transition probabilities for all time steps between $\max\{\nu(t-M_t), \nu(t'- M_{t'})\}$ and $t'$ (see Fact \ref{appendix:fact:delayedexploit} in Appendix \ref{appendix:criticaltime}).

We claim that the probability that the Markov chain $\mathcal{X}_{t'}$ is in state $0$ at time $t'$ equals $q_{i,t'}(H_{t'- M_{t'}})$, namely, $\Pro{\mathcal{X}_{t'}(t') = 0} = q_{i,t'}(H_{t'- M_{t'}})$. This follows induction on $\tau$ for the statement $$\Pro{\mathcal{X}_{t'}(\tau) = 0} = q_{i,\tau},$$ where $q_{i,\tau}$ is as given in Algorithm~\ref{alg:exante}. 
As the base case, at $t_0= \nu(t'- M_{t'})$ we have by construction that $\Pro{\mathcal{X}_{t'}(t_0)=0} = q_{i,t_0} = 1.$ Let us assume that the argument is true for all time up to $\tau$. Then we have,
\begin{align*}
&\Pro{\mathcal{X}_{t'}(\tau+1)=0}\\
&= \Pro{\mathcal{X}_{t'}(\tau)=0}\left(1 - \beta_{i,\tau}\sum_{j \in \C} z_{i,j}(\tau-M_{\tau})\right)\\
&+ \event{\tau -d_i +1\geq t_0 } \Pro{\mathcal{X}_{t'}(\tau -d_i +1)=0} \beta_{i,\tau -d_i +1} \sum_{j \in \C} z_{i,j}(\tau -d_i +1-M_{\tau -d_i +1}).\\
&= q_{i,\tau}\left(1 - \beta_{i,\tau}\sum_{j \in \C} z_{i,j}(\tau-M_{\tau})\right)\\
&+ \event{\tau -d_i +1\geq t_0 } q_{i, \tau - d_i+1} \beta_{i,\tau -d_i +1} \sum_{j \in \C} z_{i,j}(\tau -d_i +1-M_{\tau -d_i +1})\\
&= q_{i,\tau+1},
\end{align*}
which proves our claim. Using similar arguments we have that $\Pro{\mathcal{X}_{t}(t') = 0} = q_{i,t'}(H_{t- M_{t}})$. 

The rest of the proof relies on showing that $\Pro{\mathcal{X}_{t}(t') = 0}\approx \Pro{\mathcal{X}_{t}(t') = 0}$ for large enough time (specifically, for time $\min\{t' - \nu(t'- M_{t'}), t'- \nu(t- M_{t})\}$). We accomplish that by the use of a Doeblin type coupling argument for  the two Markov chains $\mathcal{X}_t$ and $\mathcal{X}_{t'}$.

\paragraph{Doeblin Coupling of two Markov chains.}
The argument of the rest of the proof relies on a Doeblin type coupling of the above two MCs. Let $\mathcal{X}_t(\tau)$ and $\mathcal{X}_{t'}(\tau)$ be the states of the MC $\mathcal{X}_t$ and $\mathcal{X}_{t'}$ at time $\tau$, respectively. Recall that $\mathcal{X}_t$ starts from state $0$ at time $\nu(t-M_t)$, and $\mathcal{X}_{t'}$ starts from state $0$ at time  $\nu(t'-M_{t'})$. Given the fact that the transition functions are common in both MCs, the two chains evolve independently up until the point they meet for the first moment. Afterwards, they get coupled and evolve together.

We consider the evolution of the bi-variate Markov chain 
$\{(\tilde{\mathcal{X}}_t(\tau),\tilde{\mathcal{X}}_{t'}(\tau))\}$, where, for $\tau \geq \nu_{\max}:=\max\{ \nu(t-M_t), \nu(t'-M_{t'})\}$, we have the following evolution of the two Markov chains,
\begin{align*}
    &\mathbb{P}\left(\tilde{\mathcal{X}}_{t}(\tau+1)= s_1, \tilde{\mathcal{X}}_{t'}(\tau+1)= s_2~|~
    \tilde{\mathcal{X}}_t(\tau)= s'_1, \tilde{\mathcal{X}}_{t'}(\tau)= s'_2\right) \\
    &~~~~~~~~~~~~~~~~~~~~~~~~~~~~~~~~~~~~~~~~~~~~~~~~~~= \begin{cases}
    \mathbf{P}_{\tau}(s'_1, s_1) \mathbf{P}_{\tau}(s'_2, s_2), &\text{ if } s'_1 \neq s'_2,\\
    \mathbf{P}_{\tau}(s'_1, s_1), &\text{ if } s'_1 = s'_2 \wedge s_1 = s_2,\\
    0, &\text{ otherwise. }
    \end{cases}
\end{align*}

It is easy to check that the bi-variate MC has the property $\tilde{\mathcal{X}}_t(\tau)\stackrel{d}{=} \mathcal{X}_t(\tau)$ and $\tilde{\mathcal{X}}_{t'}(\tau)\stackrel{d}{=} \mathcal{X}_{t'}(\tau)$ for all integers $\tau \geq \nu_{\max}$ (here, $\stackrel{d}{=}$ indicates equality in distribution). 

Let the random variable $R_c = \inf\{{r \geq \nu_{\max}}~|~\mathcal{X}_t(\tau) = \mathcal{X}_{t'}(\tau)\}$ denote the first time after $\nu_{\max}$, when the two chains $\mathcal{X}_t$ and $\mathcal{X}_{t'}$ become coupled. From standard arguments in Doeblin coupling~\cite{L02}, we have
$
    |\Pro{\mathcal{X}_{t'}(t') = 0} - \Pro{\mathcal{X}_{t}(t') = 0}| \leq  \Pro{\mathcal{X}_{t'}(t') \neq \mathcal{X}_t(t')}
    \leq \Pro{R_c > t'}.
$

We now make a claim that under the Markov TPM $\mathcal{M}$ at any time $\tau \geq \nu(t-M_{t})$ we have $\Pro{\mathcal{X}_{t}(\tau) = 0} \geq \frac{1}{e}$. The claim follows by noticing that arm $i$ is sampled by \ucb with probability at most $1/d_i$ at each time, and it is available, if not sampled in the last $(d_i-1)$ time slots. Formally, for all $\tau\geq \nu(t-M_{t})$ we consider the event 
$E = \{\mathcal{X}_{t}(\tau') \neq 0, \forall \tau' \in [\tau - d_i, \tau-1]\}$, and derive the following
\begin{align*}
    &\Pro{\mathcal{X}_{t}(\tau) = 0} 
    = \Pro{E \wedge \mathcal{X}_{t}(\tau) = 0} + \Pro{E^c \wedge \mathcal{X}_{t}(\tau) = 0}\\
    &\stackrel{(i)}{=}\Pro{E} + \sum_{\tau'=\tau-d_i}^{\tau-1}\Pro{\mathcal{X}_{t}(\tau') = 0}\Pro{\mathcal{X}_{t}(\tau) = 0 | \mathcal{X}_{t}(\tau') = 0}\\
    &\stackrel{(ii)}{\geq}\Pro{E} + \sum_{\tau'=\tau-d_i}^{\tau-1}\Pro{\mathcal{X}_{t}(\tau') = 0} \prod_{\tau''\in [\tau', \tau-1]} \left(1- \beta_{\tau''}\sum_{j\in \C} z_{i,j}(\tau'' - M_{\tau''})\right)\\
    &\stackrel{(iii)}{\geq}\Pro{E} + \left(\sum_{\tau'=\tau-d_i}^{\tau-1}\Pro{\mathcal{X}_{t}(\tau') = 0}\right) \prod_{\tau''\in [\tau-d_i, \tau-1]} \left(1- \beta_{\tau''}\sum_{j\in \C} z_{i,j}(\tau'' - M_{\tau''})\right)\\
    &\stackrel{(iv)}{\geq} \Pro{E} + (1-\Pro{E}) (1- 1/d_i)^{(d_i-1)}
    \stackrel{(v)}{\geq} \frac{1}{e}. 
\end{align*}
In the equality (i), we use if the $i$-th arm is unavailable for a contiguous stretch of length $d_i$ before $\tau$ (given by event $E$) then it will be available on $\tau$. Also, we break $E^c$ into mutually exclusive events.  The inequality (ii) uses the events that the MC stays in state  $0$ from time $\tau''=\tau'$ to $\tau$ to lower bound the probabilities.  In inequality (iii) we further lower bound these probabilities by replacing $\tau'$ with $\tau-d_i$.
For inequality (iv) we use $\beta_{\tau'}\sum_{j\in \C} z_{i,j}(\tau' - M_{\tau'}) \leq 1/d_i$ due to the LP constraint~\eqref{flp:window}, and the fact that $\beta_{\tau'} \leq 1$. Also, $\sum_{\tau'=\tau-d_i}^{\tau-1}\Pro{\mathcal{X}_{t}(\tau') = 0}\Pro{E^c}$. Finally, in (v) we minimize over $\Pro{E}$ and $d_i$ to obtain the bound.   

Similar results hold for the MC $\mathcal{X}_{t'}$. Thus, we obtain that for any time $\tau \geq \nu_{\max}$, we have $\min\{\Pro{\mathcal{X}_{t}(\tau) = 0}, \Pro{\mathcal{X}_{t'}(\tau) = 0}\} \geq 1/e$.

Therefore, at each time $\tau \geq \nu_{\max}$, we know that the two chains get coupled with probability at least $\tfrac{1}{e^2}$. Formally,
\begin{align*}
    \Pro{R_c \geq t'+1} &\stackrel{(i)}{=} \Pro{\mathcal{X}_t(t') \neq \mathcal{X}_{t'}(t')~|~R_c \geq t'} \Pro{R_c \geq t'}\\
    &\stackrel{(ii)}{=} \left(1 - \Pro{\mathcal{X}_t(t') = \mathcal{X}_{t'}(t')~|~R_c \geq t'} \right) \Pro{R_c \geq t'} \nonumber\\
    &\stackrel{(iii)}{\leq} \left(1 - \Pro{\mathcal{X}_t(t') = \mathcal{X}_{t'}(t')=0~|~R_c \geq t'} \right) \Pro{R_c \geq t'}\\
    &\stackrel{(iv)}{\leq} \left(1 - \Pro{\mathcal{X}_t(t') = 0~|~R_c \geq t'} \Pro{\mathcal{X}_{t'}(t')=0~|~R_c \geq t'} \right) \Pro{R_c \geq t'} \\
    &\stackrel{(v)}{\leq} \left(1 - \frac{1}{e^2} \event{t' \geq \nu_{\max}} \right) \Pro{R_c \geq t'},
\end{align*}
where (i) follows by definition of coupling, (iii) follows by the fact that $\{\mathcal{X}_t(t') = \mathcal{X}_{t'}(t') = 0\} \subseteq \{\mathcal{X}_t(t') = \mathcal{X}_{t'}(t')\}$ and (iv) follows by the fact that the two MCs evolve independently before round $R_c$. Finally, (v) follows by the fact that the probability of $\mathcal{X}_t$ (resp. $\mathcal{X}_{t'}$) being at state $0$ is at least $\frac{1}{e}$, for any time $\tau \geq \nu_{\max}$ as shown above.

By repeating the arguments leading to (v) until we reach the event $\{R_c \geq \nu_{\max}-1\}$ we have
\begin{align*}
    \Pro{R_c \geq t'+1} &\leq \left(1 - \frac{1}{e^2} \event{t'\geq \nu_{\max}} \right) \Pro{R_c \geq t'}\\
    &\leq \Pro{R_c \geq \nu_{\max}-1} \prod^{t'}_{\tau = \nu_{\max}}\left(1 - \frac{1}{e^2}\right) \\
    &\stackrel{(vi)}{\leq}  \left(1 - \frac{1}{e^2}\right)^{t' - \nu_{\max}+1}\\
    &\stackrel{(vii)}{\leq} \left(1 - \frac{1}{e^2}\right)^{M_{t}-2d_i}, 
\end{align*}
where in (vi), we use the fact that $\Pro{R_c \geq 2d_i} \leq 1$. In (vii) we use the following derivations 
\begin{align*}
t' - \max\{\nu(t- M_t), \nu(t' - M_{t'}) \} 
&\stackrel{(a)}{=} t' - \nu(t- M_t) \\
&\stackrel{(b)}{\geq} M_t +t' - t - d_i +1. \\
&\stackrel{(c)}{\geq} M_t  - 2d_i +2.
\end{align*}
The equality (a) in the above derivation holds since for $t \geq T_c$ and $t' \in [t-d_i+1,t-1]$, then by Fact~\ref{appendix:fact:smallincreases} in Appendix~\ref{appendix:criticaltime}, it has to be that $t' - M_{t'} \leq t - M_t$ and, thus, $\nu(t'-M_{t'}) \leq \nu(t-M_t)$. Inequality (b) holds since $i$ becomes deterministically available in at most $d_i-1$ time steps after $t-M_t$, i.e. $\nu(t-M_t) \leq t-M_t +d_i -1$. The last inequality (c) holds as $t-t' \leq d_i - 1$.

Therefore, for concluding the proof of the lemma, we have:
\begin{align*}
    &\frac{\Pro{F^{\pit}_{i,t'}~|~H_{t-M_t}}}{\Pro{F^{\pit}_{i,t'}~|~H_{t'-M_{t'}}}} =\frac{q_{i,t'}(H_{t- M_t}) }{ q_{i,t'}(H_{t'- M_{t'}})}
    \leq 1 + \left|1- \frac{q_{i,t'}(H_{t- M_t})}{ q_{i,t'}(H_{t'- M_{t'}})}\right|
    \leq 1 + \left|1- \frac{\Pro{\mathcal{X}_{t}(t') = 0}}{ \Pro{\mathcal{X}_{t'}(t') = 0}}\right|\\
    &\leq 1 + 
    \frac{\Pro{R_c > t'}} {\Pro{\mathcal{X}_{t}(\tau) = 0}}
    \leq 1 + \frac{\left(1 - \frac{1}{e^2} \right)^{M_t-2d_i}}{\frac{1}{e}}
    \leq 1 + e \left(\frac{e^2}{e^2-1} \right)^{2 d_{\max}}\left(\frac{e^2}{e^2-1} \right)^{-M_t}.
\end{align*}
The above results follow by use of triangle inequality and substituting the bounds derived so far.
\end{proof}

\subsection{Proof of Lemma \ref{lemma:regret:mapping}}

\restateLemmaMapping*
\begin{proof} 

In the following proof, we start from the definition of $\alpha$-regret and we prove the regret upper bound of the statement, by applying a sequence of transformations: First, we incorporate the $\left(1 - \frac{d_{\max}-1}{d_{\max}-1+T}\right)$-multiplicative loss, due to the use of \eqref{lp:LP} as an upper bound, into an $\mathcal{O}(d_{\max})$ additive term in the regret. Second, we upper bound the total regret due to the rounds such that $t - M_t \leq 0$, by another $\mathcal{O}(d_{\max})$ term in the regret. Then, focusing on each round such that $t \geq M_t$, we apply Lemma \ref{lemma:regret:mixing} in order to (approximately) express the regret of any such round by $\frac{d_{i}}{2d_{i}-1}\left(z^*_{i,j} - z_{i,j}(t-M_{t}) \right)$, for any $i \in \A$ and $j \in \C$. We show that the total approximation loss for that case can be transformed into a constant additive loss in the regret. Finally, we notice that in the rounds, such that $t \geq M_t$, where $M_t$ is increased (by one unit as we show in Appendix \ref{appendix:criticaltime}), the arm sampling is performed using the same extreme point solution as in the previous rounds. By observing that this can happen at most $\mathcal{O}(\log(T))$ times, we separate the rounds that use strictly updated UCB estimates, while we incorporate the rest as an $\mathcal{O}(\log(T) \Delta_{\max})$-additive loss in the regret bound.

In the following, we denote by $S^{\pit}_{i,t}$ the event that \ucb samples arm $i \in \A$ at round $t$ and by $B^{\pit}_{i,t}$ the event that arm $i$ is not skipped at the round. Finally, we denote by $F^{\pit}_{i,t}$ the event that arm $i$ is available at round $t$.

\paragraph{Incorporating time-dependent approximation loss.} The first step in proving the bound is to incorporate the $\left(1 - \frac{d_{\max}-1}{d_{\max}-1 + T}\right)$-multiplicative loss, due to the use of \eqref{lp:LP}, into the regret. By definition of $\alpha$-regret, we have
\begin{align*}
    \alpha\Reg^{\pit}_I(T) &= \alpha\Rew^*_I(T) - \Rew^{\pit}_I(T) \\
    &= \frac{d_{\max}}{2d_{\max}-1} \left(1- \frac{d_{\max}-1}{d_{\max}-1+T} + \frac{d_{\max}-1}{d_{\max}-1+T}\right) \Rew^*_I(T) - \Rew^{\pit}_I(T) \\
    &\leq \frac{d_{\max}}{2d_{\max}-1} \left(1- \frac{d_{\max}-1}{d_{\max}-1+T}\right) \Rew^*_I(T) - \Rew^{\pit}_I(T) + \frac{2}{3} \left(d_{\max} -1\right),
\end{align*}
where in the last inequality, we use the fact that 
$$
\frac{d_{\max}}{2d_{\max}-1} \frac{d_{\max}-1}{d_{\max}-1+T} \Rew^*_I(T) \leq \frac{d_{\max}}{2d_{\max}-1} \frac{d_{\max}-1}{T} \Rew^*_I(T) \leq  \frac{d_{\max}}{2d_{\max}-1} \left(d_{\max}-1\right), 
$$
using that $\Rew^*_I(T) \leq T$ and the fact that for any possible $d_{\max}$, we have $\frac{d_{\max}}{2d_{\max}-1} \left(d_{\max}-1\right) \leq \frac{2}{3} \left(d_{\max}-1\right)$.

Now by applying the result of Theorem \ref{online:theorem:competitive}, we can further upper bound the $\alpha$-regret by using the fact that the algorithm \oracle produces, in expectation, a constant rate of regret over time. More specifically, by denoting $\Rew^{LP}_I$ the optimal solution to \eqref{lp:LP}, we have
\begin{align}
    \alpha\Reg^{\pit}_I(T) &\leq \frac{d_{\max}}{2d_{\max}-1} \left(1- \frac{d_{\max}-1}{d_{\max}-1+T}\right) \Rew^*_I(T) - \Rew^{\pit}_I(T) + \frac{2}{3} \left(d_{\max} -1\right) \nonumber\\
    &\leq \frac{d_{\max}}{2d_{\max}-1} T \cdot \Rew^{LP}_I - \Rew^{\pit}_I(T) + \frac{2}{3} \left(d_{\max} -1\right) \label{eq:multloss:1}\\
    &\leq \sum_{t \in [T]} \sum_{i \in \A} \sum_{j \in \C} \mu_{i,j} \frac{d_i}{2d_i - 1} z^*_{i,j} - \Rew^{\pit}_I(T) + \frac{2}{3} \left(d_{\max} -1\right), \label{eq:mapping:competloss}
\end{align}
where \eqref{eq:multloss:1} follows by Lemma \ref{lemma:overview:upperbound} and \eqref{eq:mapping:competloss} by the fact that $\frac{d_{\max}}{2d_{\max}-1} \leq \frac{d_i}{2d_i-1}$ for any $i \in \A$.

\paragraph{Simplifying the expected reward of \ucb.}
By the independence of the rewards $\{X_{i,j,t}\}_{\forall i,j,t}$, we have:
\begin{align*}
&\Ex{\mathcal{R}_{N,\pit}}{\sum_{t \in [T]} \sum_{i \in \A} \sum_{j \in \C} X_{i,j,t} \event{A^{\pit}_t=i , C_t =j}}\\
&= \sum_{t \in [T]} \sum_{i \in \A} \sum_{j \in \C} \Ex{\mathcal{R}_{N,\pit}}{X_{i,j,t} \event{A^{\pit}_t=i , C_t =j}} \\
&= \sum_{t \in [T]} \sum_{i \in \A} \sum_{j \in \C} \Ex{\mathcal{R}_{N,\pit}}{\Ex{ }{X_{i,j,t} \event{A^{\pit}_t=i , C_t =j} \bigg| A^{\pit}_t , C_t}} \\
&= \sum_{t \in [T]} \sum_{i \in \A} \sum_{j \in \C} \Ex{\mathcal{R}_{N,\pit}}{\Ex{ }{X_{i,j,t} \bigg| A^{\pit}_t , C_t}\event{A^{\pit}_t=i , C_t =j}} \\
&= \sum_{t \in [T]} \sum_{i \in \A} \sum_{j \in \C} \Ex{\mathcal{R}_{N,\pit}}{\mu_{i,j}\event{A^{\pit}_t=i , C_t =j}} \\
&= \Ex{\mathcal{R}_{N,\pit}}{\sum_{t \in [T]} \sum_{i \in \A} \sum_{j \in \C} \mu_{i,j} \event{A^{\pit}_t=i , C_t =j}}.
\end{align*}

\paragraph{Using delayed exploitation for large enough $t$.} The remainder of this proof is dedicated to bounding the difference between the expected reward collected by \oracle and \ucb. More specifically, our goal is to directly associate the loss of any round $t$ with the suboptimality of the extreme point solution of \eqref{lp:LP} computed by \ucb at the same round. 
%For the rest of this proof, we denote by $S^{\pit}_{i,t}$ the event that an arm $i \in \A$ is sampled by \ucb at some round $t$ and by $B^{\pit}_{i,t}$ the event that arm $i$ is not skipped at round $t$. 
More specifically, we are interested in upper bounding the term
\begin{align*}
   \sum_{t \in [T]} \sum_{i \in \A} \sum_{j \in \C} \mu_{i,j} \frac{d_i}{2d_i - 1} z^*_{i,j} - \Rew^{\pit}_I(T).
\end{align*}

The first step is to lower bound the expected reward of \ucb, namely,
\begin{align*}
\Rew^{\pit}_I(T) &= \Ex{\mathcal{R}_{N,\pit}}{\sum_{t \in [T]} \sum_{i \in \A} \sum_{j \in \C} X_{i,j,t} \event{A^{\pit}_t=i , C_t =j}}\\
&= \Ex{\mathcal{R}_{N,\pit}}{\sum_{t \in [T]} \sum_{i \in \A} \sum_{j \in \C} \mu_{i,j} \event{A^{\pit}_t=i , C_t =j}}.
\end{align*}

Let $T_c$ be the minimum round such that $T_c \geq M_{T_c} + 1$. By the discussion in Appendix \ref{appendix:criticaltime}, we know that $t \geq M_t + 1 \geq 2d_{\max}$ for any $t \geq T_c$.

We now fix any round $t \in [T]$ such that $t \geq T_c$. By using linearity of expectation, we can further simplify the expression of the expected reward of \ucb, by conditioning on the history up to time $t-M_t$. For any fixed $i \in \A$ and $j \in \C$ we have:
\begin{align}
&\Ex{\mathcal{R}_{N,\pit}}{{\event{A^{\pit}_t=i , C_t =j}}}\nonumber\\
&= \Ex{\mathcal{R}_{N,\pit}}{\Ex{ }{\event{A^{\pit}_t=i , C_t =j}\bigg| H_{t-M_t}}} \nonumber\\
&= \Ex{\mathcal{R}_{N,\pit}}{\Ex{ }{\event{S^{\pit}_{i,t}, B^{\pit}_{i,t}, F^{\pit}_{i,t} , C_t =j}\bigg| H_{t-M_t}}}\nonumber\\
&= \Ex{\mathcal{R}_{N,\pit}}{\Pro{S^{\pit}_{i,t}, B^{\pit}_{i,t}, F^{\pit}_{i,t} , C_t =j | H_{t-M_t} }} \nonumber\\
&= \Ex{\mathcal{R}_{N,\pit}}{\Pro{S^{\pit}_{i,t}, C_t =j| H_{t-M_t}}\Pro{B^{\pit}_{i,t}| H_{t-M_t}}\Pro{F^{\pit}_{i,t} | H_{t-M_t}}} \label{eq:ucb:mapping:2}\\
&= \Ex{\mathcal{R}_{N,\pit}}{\Pro{S^{\pit}_{i,t}| H_{t-M_t}, C_t =j} \Pro{C_t =j| H_{t-M_t}} \beta_{i,t}\Pro{F^{\pit}_{i,t} | H_{t-M_t}}} \label{eq:ucb:mapping:3}\\
&= \Ex{\mathcal{R}_{N,\pit}}{\frac{z_{i,j}(t-M_t)}{f_j} f_j \beta_{i,t}\Pro{F^{\pit}_{i,t} | H_{t-M_t}}}\nonumber\\
&= \Ex{\mathcal{R}_{N,\pit}}{z_{i,j}(t-M_t) \beta_{i,t}\Pro{F^{\pit}_{i,t} | H_{t-M_t}}},\nonumber
\end{align}
where in \eqref{eq:ucb:mapping:2}, we use the fact that the events $S^{\pit}_{i,t}, B^{\pit}_{i,t}$ and $F^{\pit}_{i,t}$ are independent conditioned on $H_{t-M_t}$. The reason is that the outcome of $S^{\pit}_{i,t}$ depends on the observed context and on the UCB indices computed before time $t-M_t$, while the outcome of the event $B^{\pit}_{i,t}$ has probability $\beta_{i,t}$, which is computable using only information from $H_{t-M_t}$. Finally, in \eqref{eq:ucb:mapping:3}, we use the fact that the observed context of round $t$ is independent of $H_{t-M_t}$, $B^{\pit}_{i,t}$ and $F^{\pit}_{i,t}$.

Clearly, by observing the history $H_{t-M_t}$, one can easily compute the first time arm $i \in \A$ becomes available after time $t - M_t$. If the arm is available at time $t-M_t$ and is not played, then we know that $\Pro{F^{\pit}_{i,t-M_t+1} | H_{t-M_t}} = 1$, while if the arm is blocked at time $t-M_t$, then it is played at some time $t' < t-M_t$ and, thus, $\Pro{F^{\pit}_{i,t'+d_i} | H_{t-M_t}} = 1$. The conditional probabilities of an arm being available, that is, $q_{i,t}(H_{t-M_t}) = \Pro{F^{\pit}_{i,t}~|~H_{t-M_t}}$ can be computed by Algorithm \ref{alg:exante}, as described in Appendix \ref{appendix:regret:compexante}. In short, given the fact that the algorithm uses at any round $t \geq T_c$ the extreme point computed in round $t-M_t$, for any $t' \in [t-M_t,t]$, the extreme points used are computable given $H_{t-M_t}$ and the algorithm can efficiently simulate any possible $\beta_{i,t'}$.

By the above analysis it follows that at any time $t \geq T_c$, we have:
\begin{align*}
    \beta_{i,t} = \min\bigg\{1 , \frac{d_i}{2d_i-1}\frac{1}{q_{i,t}(H_{t-M_t})}\bigg\} = \min\bigg\{1 , \frac{d_i}{2d_i-1}\frac{1}{\Pro{F^{\pit}_{i,t}|H_{t-M_t}}}\bigg\}.
\end{align*}

Similarly to the proof of Lemma \ref{lemma:oracle:exact}, we distinguish between two cases on the value of $\beta_{i,t}$ conditioned on $H_{t-M_t}$:

\textbf{Case (a)} In the case where $1 > \frac{d_i}{2d_i-1} \frac{1}{\Pro{F^{\pit}_{i,t}|H_{t-M_t}}}$, we immediately get that:
\begin{align*}
&\Ex{\mathcal{R}_{N,\pit}}{z_{i,j}(t-M_t)\beta_{i,t} \Pro{F^{\pit}_{i,t}|H_{t-M_t}}} \\
&=  \Ex{\mathcal{R}_{N,\pit}}{z_{i,j}(t-M_t)\frac{d_i}{2d_i-1} \frac{1}{\Pro{F^{\pit}_{i,t}|H_{t-M_t}}}\Pro{F^{\pit}_{i,t}|H_{t-M_t}}}\\
&= \Ex{\mathcal{R}_{N,\pit}}{\frac{d_i}{2d_i-1}z_{i,j}(t-M_t)}.
\end{align*}

\textbf{Case (b)} In the case where $1 \leq \frac{d_i}{2d_i-1} \frac{1}{\Pro{F^{\pit}_{i,t}|H_{t-M_t}}}$, we directly get that $\Pro{F^{\pit}_{i,t}|H_{t-M_t}} \leq \frac{d_i}{2d_i-1}$ and $\beta_{i,t} = 1$. In order to get a lower bound on $\Pro{F^{\pit}_{i,t}|H_{t-M_t}}$, we attempt to upper bound $\Pro{\neg F^{\pit}_{i,t}|H_{t-M_t}}$ by union bound over the probability of each arm $i$ being played at some round $t' \in [t-d_i+1,t-1]$. More specifically: 

\begin{align*}
&\Ex{\mathcal{R}_{N,\pit}}{z_{i,j}(t-M_t) \beta_{i,t}\Pro{F^{\pit}_{i,t}|H_{t-M_t}}}\\
&= \Ex{\mathcal{R}_{N,\pit}}{z_{i,j}(t-M_t)\Pro{F^{\pit}_{i,t}|H_{t-M_t}}} \\
&= \Ex{\mathcal{R}_{N,\pit}}{z_{i,j}(t-M_t)\left(1 - \Pro{\neg F^{\pit}_{i,t}|H_{t-M_t}}\right)} \\
&\geq \Ex{\mathcal{R}_{N,\pit}}{z_{i,j}(t-M_t)\left(1 - \sum_{t' \in [t-d_i+1, t-1]}\Pro{A^{\pit}_{t'} = i|H_{t-M_t}}\right)} \\
&\geq \Ex{\mathcal{R}_{N,\pit}}{z_{i,j}(t-M_t)\left(1 - \sum_{t' \in [t-d_i+1, t-1]} \Pro{S^{\pit}_{i,t'}, B^{\pit}_{i,t'}, F^{\pit}_{i,t'} | H_{t-M_t}}\right)}.
\end{align*}
For each $t' \in [t-d_i+1, t-1]$, the events $S^{\pit}_{i,t'}$, $B^{\pit}_{i,t'}$ and $F^{\pit}_{i,t'}$ are independent conditioned on $H_{t-M_t}$, since the outcomes of $S^{\pit}_{i,t'}$ and $B^{\pit}_{i,t'}$ depend on the extreme points computed by \ucb before time $t-M_t$. Moreover, since $M_t > d_i$, we have that $\Pro{S^{\pit}_{i,t'} | H_{t-M_t}} = \sum_{j' \in \C} \Pro{S^{\pit}_{i,t'} | C_{t'} = j', H_{t-M_t}} \Pro{C_{t'} | H_{t-M_t}} =\sum_{j' \in \C} f_{j'} \Pro{S^{\pit}_{i,t'} | C_{t'} = j', H_{t-M_t}}$, where the last equality follows by independence of $C_{t'}$ and $H_{t-M_t}$, for $M_t > d_i$. Finally, we have that $\Pro{S^{\pit}_{i,t'} | C_{t'} = j', H_{t-M_t}} = \frac{z_{i,j'}(t'- M_{t'})}{f_{j'}}$, since the probability of the event $S^{\pit}_{i,t'}$ depends on the extreme point computed at time $t'- M_{t'}$, and is computable conditioning on $H_{t-M_t}$ (see Fact~\ref{appendix:fact:delayedexploit} in Appendix~\ref{appendix:criticaltime}). By combining the aforementioned facts, we have:
\begin{align*}
&\Ex{\mathcal{R}_{N,\pit}}{z_{i,j}(t-M_t) \beta_{i,t}\Pro{F^{\pit}_{i,t}|H_{t-M_t}}}\\
&\geq \Ex{\mathcal{R}_{N,\pit}}{z_{i,j}(t-M_t)\left(1 - \sum_{t' \in [t-d_i+1, t-1]} \Pro{S^{\pit}_{i,t'}, B^{\pit}_{i,t'}, F^{\pit}_{i,t'} | H_{t-M_t}}\right)} \\
&= \Ex{\mathcal{R}_{N,\pit}}{z_{i,j}(t-M_t)\left(1 - \sum_{t' \in [t-d_i+1, t-1]} \Pro{S^{\pit}_{i,t'}| H_{t-M_t}}\Pro{B^{\pit}_{i,t'}| H_{t-M_t}} \Pro{F^{\pit}_{i,t'} | H_{t-M_t}}\right)} \\
&= \Ex{\mathcal{R}_{N,\pit}}{z_{i,j}(t-M_t)\left(1 - \sum_{t' \in [t-d_i+1, t-1]} \sum_{j' \in \C} f_{j'} \frac{z_{i,j'}(t'- M_{t'})}{f_{j'}} \beta_{i,t'} \Pro{F^{\pit}_{i,t'} | H_{t-M_t}}\right)}\\
&= \Ex{\mathcal{R}_{N,\pit}}{z_{i,j}(t-M_t)\left(1 - \sum_{t' \in [t-d_i+1, t-1]} \sum_{j' \in \C} z_{i,j'}(t'- M_{t'}) \beta_{i,t'} \Pro{F^{\pit}_{i,t'} | H_{t-M_t}}\right)}.
\end{align*}
By definition of $\beta_{i,t'}$, we have that $\beta_{i,t'} \leq \frac{d_i}{2d_i - 1}\frac{1}{ \Pro{F^{\pit}_{i,t'}|H_{t'-M_{t'}}}}$. Moreover, for any extreme point solution of \eqref{lp:LP}, by constraints \eqref{flp:window}, we have that $\sum_{j' \in \C} z_{i,j'}(t'- M_{t'}) \leq \frac{1}{d_i}$. Therefore, the above relation becomes: 

\begin{align*}
&\Ex{\mathcal{R}_{N,\pit}}{z_{i,j}(t-M_t) \beta_{i,t}\Pro{F^{\pit}_{i,t}|H_{t-M_t}}}\\
&\geq \Ex{\mathcal{R}_{N,\pit}}{z_{i,j}(t-M_t)\left(1 - \sum_{t' \in [t-d_i+1, t-1]} \sum_{j' \in \C} z_{i,j'}(t'- M_{t'}) \beta_{i,t'} \Pro{F^{\pit}_{i,t'} | H_{t-M_t}}\right)} \\
&\geq \Ex{\mathcal{R}_{N,\pit}}{z_{i,j}(t-M_t)\left(1 - \frac{1}{d_i} \sum_{t' \in [t-d_i+1, t-1]} \frac{d_i}{2d_i - 1} \frac{\Pro{F^{\pit}_{i,t'} | H_{t-M_t}}}{\Pro{F^{\pit}_{i,t'} | H_{t'-M_{t'}}}} \right)} \\
&= \Ex{\mathcal{R}_{N,\pit}}{z_{i,j}(t-M_t)\left(1 - \frac{1}{2d_i - 1} \sum_{t' \in [t-d_i+1, t-1]} \frac{\Pro{F^{\pit}_{i,t'} | H_{t-M_t}}}{\Pro{F^{\pit}_{i,t'} | H_{t'-M_{t'}}}} \right)}.
\end{align*}

For any $t \geq T_C$ and $t' \in [t-d_i+1,t-1]$, by Lemma \ref{lemma:regret:mixing}, we have:
\begin{align}
\frac{\Pro{F^{\pit}_{i,t'} | H_{t-M_t}}}{\Pro{F^{\pit}_{i,t'} | H_{t'-M_{t'}}}} \leq 1 + c_0\cdot c_1^{-M_t}.  \label{eq:mixing}
\end{align}

By using inequality \eqref{eq:mixing}, we get:
\begin{align*}
&\Ex{\mathcal{R}_{N,\pit}}{z_{i,j}(t-M_t) \beta_{i,t}\Pro{F^{\pit}_{i,t}|H_{t-M_t}}}\\
&\geq \Ex{\mathcal{R}_{N,\pit}}{z_{i,j}(t-M_t)\left(1 - \frac{1}{2d_i - 1} \sum_{t' \in [t-d_i+1, t-1]} \left(1 + c_0\cdot c_1^{-M_t}\right) \right)} \\
&= \Ex{\mathcal{R}_{N,\pit}}{z_{i,j}(t-M_t)\left(1 - \frac{d_i-1}{2d_i - 1}  + \frac{d_i-1}{2d_i - 1} c_0\cdot c_1^{-M_t} \right)} \\
&= \Ex{\mathcal{R}_{N,\pit}}{z_{i,j}(t-M_t)\left(\frac{d_i}{2d_i - 1}  + \frac{d_i-1}{2d_i - 1} c_0\cdot c_1^{-M_t} \right)} \\
&= \Ex{\mathcal{R}_{N,\pit}}{z_{i,j}(t-M_t)\left(\frac{d_i}{2d_i - 1}  + \frac{d_i-1}{2d_i - 1} c_0\cdot c_1^{-M_t} \right)}
\end{align*}

By summing over all $t \in [T_c,T]$ and using the above analysis, we have: 
\begin{align}
&\Ex{\mathcal{R}_{N,\pit}}{\sum^T_{t = T_c} \sum_{i \in \A} \sum_{j \in \C} X_{i,j,t} \event{A^{\pit}_t=i , C_t =j}}\nonumber\\ 
&= \Ex{\mathcal{R}_{N,\pit}}{\sum^T_{t = T_c} \sum_{i \in \A} \sum_{j \in \C} \mu_{i,j} \event{A^{\pit}_t=i , C_t =j}} \nonumber\\
&= \Ex{\mathcal{R}_{N,\pit}}{\sum^T_{t = T_c} \sum_{i \in \A} \sum_{j \in \C} \mu_{i,j} z_{i,j}(t-M_t) \beta_{i,t}\Pro{F^{\pit}_{i,t}|H_{t-M_t}}} \nonumber\\
&\geq \Ex{\mathcal{R}_{N,\pit}}{\sum^T_{t = T_c} \sum_{i \in \A} \sum_{j \in \C} \mu_{i,j} z_{i,j}(t-M_t)\left(\frac{d_i}{2d_i - 1}  - \frac{d_i-1}{2d_i - 1} c_0 \cdot c_1^{-M_t} \right)} \nonumber\\
&\geq \Ex{\mathcal{R}_{N,\pit}}{\sum^T_{t = T_c} \sum_{i \in \A} \sum_{j \in \C} \mu_{i,j} \frac{d_i}{2d_i - 1} z_{i,j}(t-M_t) } - \sum_{t = [T]} c_0\cdot c_1^{-M_t}, \label{eq:ucb:mapping:4}
\end{align}
where in the last inequality we use the fact that 
$$
\Ex{\mathcal{R}_{N,\pit}}{\sum^T_{t = T_c} \sum_{i \in \A} \sum_{j \in \C} \mu_{i,j} \frac{d_i-1}{2d_i - 1} z_{i,j}(t-M_t) c_0 \cdot c_1^{-M_t}} \leq \sum^T_{t = T_c} c_0\cdot c_1^{-M_t}.
$$
Furthermore, by our choice of $M_t$, we have that
$$
M_t = \lfloor 2\log_{c_1}(t)\rfloor  + \lceil\log_{c_1}(c_0)\rceil +1 \geq 2\log_{c_1}(t) + \log_{c_1}(c_0) = \log_{c_1}(c_0\cdot t^2),
$$
which implies that $\sum^T_{t = T_c} c_0\cdot c_1^{-M_t} \leq \sum_{t \in [T]} c_0 \cdot c_1^{-\log_{c_1}(t^2 \cdot c_0)} \leq \sum^{+\infty}_{t = 1} \frac{1}{t^2} = \frac{\pi^2}{6}
$.
Therefore, inequality \eqref{eq:ucb:mapping:4} becomes:
\begin{align}
&\Ex{\mathcal{R}_{N,\pit}}{\sum^T_{t = T_c} \sum_{i \in \A} \sum_{j \in \C} X_{i,j,t} \event{A^{\pit}_t=i , C_t =j}} \geq \Ex{\mathcal{R}_{N,\pit}}{\sum^T_{t = T_c} \sum_{i \in \A} \sum_{j \in \C} \mu_{i,j} \frac{d_i}{2d_i - 1} z_{i,j}(t-M_t) } - \frac{\pi^2}{6}. \label{eq:ucb:mapping:last}
\end{align}

\paragraph{Bounding small t and combining everything.}
By construction \ucb, for the first rounds where $t \leq T_c - 1$, the algorithm selects arms and constructs non-skipping probabilities with respect to an initial extreme point solution $Z(0)=\{z_{i,j}(0)\}_{\forall i,j}$ to \eqref{lp:LP}. Since we cannot bound the expected reward of \ucb for the these time steps, we accumulate this loss in the regret as follows:
\begin{align}
&\sum^{T_c-1}_{t = 1} \sum_{i \in \A} \sum_{j \in \C} \mu_{i,j} \frac{d_i}{2d_i - 1} z^*_{i,j} - \Ex{\mathcal{R}_{N,\pit}}{\sum^{T_c-1}_{t = 1} \sum_{i \in \A} \sum_{j \in \C} \mu_{i,j} \event{A^{\pit}_t=i , C_t =j}} \nonumber\\
&\leq \sum^{T_c-1}_{t = 1} \sum_{i \in \A} \sum_{j \in \C} \mu_{i,j} \frac{d_i}{2d_i - 1} z^*_{i,j} \leq T_c-1 \label{eq:mapping:smallt}
\end{align}

For the overall regret we have:
\begin{align}
&\sum_{t \in [T]} \sum_{i \in \A} \sum_{j \in \C} \mu_{i,j} \frac{d_i}{2d_i - 1} z^*_{i,j} - \Rew^{\pit}_I(T) \nonumber\\
&= \sum_{t \in [T]} \sum_{i \in \A} \sum_{j \in \C} \mu_{i,j} \frac{d_i}{2d_i - 1} z^*_{i,j} - \Ex{\mathcal{R}_{N,\pit}}{\sum_{t \in [T]} \sum_{i \in \A} \sum_{j \in \C} X_{i,j,t} \event{A^{\pit}_t=i , C_t =j}}\nonumber\\
&= \sum_{t \in [T]} \sum_{i \in \A} \sum_{j \in \C} \mu_{i,j} \frac{d_i}{2d_i - 1} z^*_{i,j} - \Ex{\mathcal{R}_{N,\pit}}{\sum_{t \in [T]} \sum_{i \in \A} \sum_{j \in \C} \mu_{i,j} \event{A^{\pit}_t=i , C_t =j}}\nonumber\\
&= \sum_{t \in [T]} \sum_{i \in \A} \sum_{j \in \C} \mu_{i,j} \frac{d_i}{2d_i - 1} z^*_{i,j} - \Ex{\mathcal{R}_{N,\pit}}{\sum^{T_c-1}_{t = 1} \sum_{i \in \A} \sum_{j \in \C} \mu_{i,j} \event{A^{\pit}_t=i , C_t =j}}\nonumber\\
&\quad\quad\quad\quad\quad\quad\quad\quad\quad\quad\quad\quad\quad - \Ex{\mathcal{R}_{N,\pit}}{\sum^T_{t = T_c} \sum_{i \in \A} \sum_{j \in \C} \mu_{i,j} \event{A^{\pit}_t=i , C_t =j}} \nonumber\\
&\leq \sum_{t \in [T]} \sum_{i \in \A} \sum_{j \in \C} \mu_{i,j} \frac{d_i}{2d_i - 1} z^*_{i,j} - \Ex{\mathcal{R}_{N,\pit}}{\sum^{T_c-1}_{t = 1} \sum_{i \in \A} \sum_{j \in \C} \mu_{i,j} \event{A^{\pit}_t=i , C_t =j}}\nonumber\\
&\quad\quad\quad\quad\quad\quad\quad\quad\quad - \Ex{\mathcal{R}_{N,\pit}}{\sum^T_{t = T_c} \sum_{i \in \A} \sum_{j \in \C} \mu_{i,j} \frac{d_i}{2d_i - 1} z_{i,j}(t-M_t)} + \frac{\pi^2}{6} \label{eq:ucb:mapping:5}
\end{align}
\begin{align}
&\leq \Ex{\mathcal{R}_{N,\pit}}{\sum^T_{t = T_c} \sum_{i \in \A} \sum_{j \in \C} \mu_{i,j} \frac{d_i}{2d_i - 1} \left(z^*_{i,j} - z_{i,j}(t-M_t)\right)} + T_c -1 + \frac{\pi^2}{6} \label{eq:ucb:mapping:6b}\\
&= \Ex{\mathcal{R}_{N,\pit}}{\sum^T_{t = T_c} \sum_{i \in \A} \sum_{j \in \C} \mu_{i,j} \frac{d_i}{2d_i - 1} \left(z^*_{i,j} - z_{i,j}(t-M_t)\right)} + 3\cdot d_{\max} + 70  + \frac{\pi^2}{6} \label{eq:ucb:mapping:6a}\\
&\leq \Ex{\mathcal{R}_{N,\pit}}{\sum^T_{t = T_c} \sum_{i \in \A} \sum_{j \in \C} \mu_{i,j}\left(z^*_{i,j} - z_{i,j}(t-M_t)\right)} + 3\cdot d_{\max} + 70  + \frac{\pi^2}{6}, \label{eq:ucb:mapping:6}
\end{align}
where \eqref{eq:ucb:mapping:5} follows by inequality \eqref{eq:ucb:mapping:last} and \eqref{eq:ucb:mapping:6b} by inequality \eqref{eq:mapping:smallt}. Finally, equality \eqref{eq:ucb:mapping:6a} follows an upper bound on $T_c$ (given in Fact \ref{appendix:fact:bounds} of Appendix \ref{appendix:criticaltime}) and inequality \eqref{eq:ucb:mapping:6} by the fact that $\frac{d_i}{2d_i - 1} \leq 1$ for any $i \in \A$.

\paragraph{Synchronizing the large time steps and completing the proof.}
For completing the proof of the lemma, we focus on the quantity 
$$
\Ex{\mathcal{R}_{N,\pit}}{\sum^T_{t = T_c} \sum_{i \in \A} \sum_{j \in \C} \mu_{i,j}\left(z^*_{i,j} - z_{i,j}(t-M_t)\right)}.
$$
Recall that for any $t\geq T_c$, the algorithm uses for arm sampling the extreme point solution $Z(t - M_t)$, computed using the indices $\{\bar{\mu}(t-M_t)\}_{i,j}$. As we show in Appendix \ref{appendix:criticaltime} (see Fact \ref{appendix:fact:unitincreases}), for $t \geq T_c$, the value $M_t$ cannot be increased by more than one unit per round. Given any time interval $[t_1,t_2]$, with $t_1 \geq T_c$, we say that the UCB indices of the interval are {\em synchronized} (or, simply, we say that the interval is synchronized), if for any $t \in [t_1,t_2]$, there exists a integer constant $M'$, such that \ucb at round $t$, uses information from time $t - M'$.

Let $t'$ be the first time that $M_t$ increases by one after time $T_c$. Clearly, the time interval $[T_c, t')$ is {\em synchronized} as the information used at each round from $T_c$ to $t'-1$ corresponds to times $T_c - M_{T_c}, T_c - M_{T_c} + 1, \dots, t' -1 - M_{T_c}$. However, at time $t'$, given the fact that $M_{t'} = M_{T_c} + 1$, the index used corresponds, again, to time $t' - 1 - M_{T_c} = t' - M_{t'}$. Hopefully, by ignoring time $t'$, we can see that the index used at $t'+1$ corresponds to time $t'+1 - M_{t'} = t' - M_{T_c}$, which remains synchronized with the interval before $t'$.

By repeating the above procedure, we ignore the non-synchronized rounds (that correspond to the unit increases of $M_t$) and we merge the remaining rounds into a single synchronized interval. Let $L$ be the number of non-synchronized time steps in $[T_c,T]$, which is formally defined as
\begin{align*}
    L = |\{t \in [T_c+1,T]~|~M_{t} = M_{t-1}+1\}|.
\end{align*}

By definition of $M_t$, the total number of non-synchronized time steps (as $t'$) can be upper bounded by $M_T$ , which, in turn, can be upper bounded by $2 \log_{c_1}(T)  + \log_{c_1}(c_0) + 2 \leq \frac{1}{3} \ln(T) + 9 + 2 d_{\max} $.

Let $\Delta_{\max} = \sup_{Z \in \extr} \Delta_Z$, be the maximum suboptimiality gap over all the extreme points of $\extr$. The regret associated with each non-synchronized time step greater than $T_c$ can be upper bounded by $\Delta_{\max}$. By the above analysis, it follows directly that
\begin{align*}
&\Ex{\mathcal{R}_{N,\pit}}{\sum^T_{t = T_c} \sum_{i \in \A} \sum_{j \in \C} \mu_{i,j}  \left(z^*_{i,j} - z_{i,j}(t-M_t)\right)} \\
&\leq \Ex{\mathcal{R}_{N,\pit}}{\sum^{T-L}_{t = T_c} \sum_{i \in \A} \sum_{j \in \C} \mu_{i,j} \left(z^*_{i,j} +z_{i,j}(t-M_{T_c})\right)} + \left(\frac{1}{3} \ln(T) + 9 + 2 d_{\max}\right) \Delta_{\max}\\
&\leq \Ex{\mathcal{R}_{N,\pit}}{\sum^{T-L-M_{T_c}}_{t = T_c-M_{T_c}} \sum_{i \in \A} \sum_{j \in \C} \mu_{i,j} \left(z^*_{i,j} +z_{i,j}(t)\right)} + \left(\frac{1}{3} \ln(T) + 9 + 2 d_{\max}\right) \Delta_{\max}\\
\end{align*}

By combining the above inequality with \eqref{eq:mapping:competloss} and \eqref{eq:ucb:mapping:6}, we can prove the following upper bound:
\begin{align*}
&\alpha\Reg^{\pit}_I(T) \leq \Ex{\mathcal{R}_{N,\pit}}{\sum^{T-L-M_{T_c}}_{t = T_c-M_{T_c}} \sum_{i \in \A} \sum_{j \in \C} \mu_{i,j} \left(z^*_{i,j} +z_{i,j}(t)\right)} + \frac{2}{3}(d_{\max}-1)\\
&\quad\quad\quad\quad\quad\quad\quad\quad\quad +\left(\frac{1}{3} \ln(T) + 9 + 2 d_{\max}\right) \Delta_{\max} + 3\cdot d_{\max} + 70  + \frac{\pi^2}{6}.
\end{align*}
By noticing that $\Delta_{\max} \leq 1$, we can simplify the less important constants of the above bound as
\begin{align*}
\alpha\Reg^{\pit}_I(T) \leq \Ex{\mathcal{R}_{N,\pit}}{\sum^{T-L-M_{T_c}}_{t = T_c-M_{T_c}} \sum_{i \in \A} \sum_{j \in \C} \mu_{i,j} \left(z^*_{i,j} +z_{i,j}(t)\right)} +\frac{1}{3}\ln(T)\Delta_{\max} + 6 \cdot d_{\max} + 71.
\end{align*}
Finally, we use that $T_c - M_{T_c} \geq 1$ and we let $M = L + M_c= \Theta(\log T + d_{\max})$, which leads to:
\begin{align*}
\alpha\Reg^{\pit}_I(T) \leq \Ex{\mathcal{R}_{N,\pit}}{\sum^{T-M}_{t = 1} \sum_{i \in \A} \sum_{j \in \C} \mu_{i,j} \left(z^*_{i,j} +z_{i,j}(t)\right)} +\frac{1}{3}\ln(T)\Delta_{\max} + 6 d_{\max} + 71.
\end{align*}
\end{proof}

\subsection{Proof of Lemma \ref{lemma:regret:counter-to-samples}}
\restateLemmaSubsampling*
\begin{proof}
We fix an an arbitrary TP group $(i,j, l)$. Let $t_k$ be the time and $Z_k$ be the suboptimal extreme point used by \ucb for sampling arms when the counter $N_{i,j, l}(t')$ is increased for the $k$-th time. Moreover, we denote by $Y_k = \event{A^{\pit}_{t_k} = i, C_{t_k} = j}$ the event that the TP group $(i,j,l)$ is triggered at time $t_k$, namely, arm $i$ is played under context $j$ and $2^{-l}\leq z^{Z_k}_{i,j} = z_{i,j}([t_k-M_{t_k}]^+) \leq 2^{-l+1}$, where $[t]^+ = \max\{t,0\}$ for any integer $t$. We require concentration bounds for $\sum_{k=1}^{N_{i,j,l}(t)}Y_k$ conditioned on $N_{i,j,l}(t)=s$. The main roadblock in the analysis, comparing to \citep{WC17}, is that, due to the blocking constraints, the random variables $Y_k$ are not {\em mutually independent}. Indeed, if $|t_{k'} - t_k| < d_i$ then $Y_k$ and $Y_{k'}$ cannot be simultaneously equal to $1$. In order to overcome the above issue, we opportunistically subsample the events $\{Y_k\}$ to ensure that the distance between two contiguous subsampled events, where $Y_k = 1$, is at least time $(d_i+1)$ apart (inclusive of the first instance). 
 
We first separate each {\em triggering} (i.e., arm pulling) event into two stages: {\em attempting to trigger} $Y'_k = \event{S^{\pit}_{i,t_k}, B^{\pit}_{i,t_k}, C_{t_k} = j}$, and {\em actual triggering} $Y_k = \event{S^{\pit}_{i,t_k}, B^{\pit}_{i,t_k}, C_{t_k} = j, F^{\pit}_{i,t_k}}$. Given this distinction, the second stage takes into account the blocking constraints, while the first stage takes into account the randomness introduced by nature and the random choices of \ucb.

We partition the sequence $\{1, 2, \dots, N_{i,j,l}(t)\}$, into $\lfloor N_{i,j,l}(t)/(2d_i-1) \rfloor$ many windows of length $(2d_i -1)$. The $\ell$-th window consists of the subsequence $\{\ell(2d_i-1)+1,\dots, (\ell+1)(2d_i-1)\}$, starting from $\ell = 0$. Notice that for $s \geq 2\cdot 2^l \geq 2 d_i$, we have at least one such window, since $2^{-l} \leq \frac{1}{d_i}$.

We now define the indicator for the {\em triggering event} in window $\ell$, denoted by $\tilde{Y}_{\ell}$, and the {\em triggering time} (if the arm is triggered) in window $\ell$, denoted by $\tilde{t}_{\ell}$. In each window $\ell$, if there exists a $k$ in the last $d_i$ steps in the window (i.e. $k \in \{\ell(2d_i-1) + d_i,\dots,(\ell+1)(2d_i-1)\}$) such that the algorithm tries to trigger at time $t_k$ (i.e. $Y'_k = 1$), we set $\tilde{Y}_{\ell} = Y_{k}$ and time $\tilde{t}_\ell = t_k$. Otherwise, we set $\tilde{Y}_{\ell} = 0$ and $\tilde{t}_\ell = \ell(2d_i-1) + d_i$. Thus, we have constructed an opportunistically subsampled sequence of tuples $(\tilde{Y}_\ell, \tilde{t}_\ell)$ for $0 \leq \ell \leq \lfloor N_{i,j,l}(t)/ (2d_i-1) \rfloor$, from the original subsequence $(Y_k, t_k)$. 
Clearly, $\sum_{\ell=0}^{\lfloor N_{i,j,l}(t)/ (2d_i-1) \rfloor}\tilde{Y}_\ell$ constructs a lower bound for $T_{i,j}(t)$.
 
To avoid repetitive notations, let us denote by $\mathcal{H}_\ell = \{(\tilde{Y}_1, \tilde{t}_1), \dots, (\tilde{Y}_{(\ell-1)}, \tilde{t}_{(\ell-1)})\}$ the subsequence from $0$ upto (and excluding) the $\ell$-th entry in the sequence. We call the first event of observing at least one $Y'_k = 1$ in the $\ell$-th window as $\mathcal{E}_\ell$. As the sampling only happens at the later part of each window, the previous subsampling ensures that the random variables $\tilde{t}_\ell$ are at least $d_i$ time steps apart. We now claim that when $\tilde{Y}_{\ell}$ is set to $Y_{k}$ then irrespective of the past $(\tilde{Y}_{\ell'}, \tilde{t}_{\ell'})$, we have 
$\mathbb{P}[\tilde{Y}_{\ell}|\mathcal{E}_\ell, \mathcal{H}_\ell] \geq (1-\tfrac{1}{d_i})^{(d_i-1)} \geq \tfrac{1}{e}$.
The above is true because, we know that when conditioned on history at least $d_i$ time steps apart we have  $\mathbb{P}[F^{\pit}_{i,\tilde{t}_\ell} | H_{\tilde{t}_{(\ell-1)}}] \geq 1/e$. Phrased differently, if an arm is not deterministically blocked, then it is available with probability at least $1/e$.  
  
Whenever the counter $N_{i,j,l}(t)$ is increased it is, by definition, due to an extreme point which plays the arm $(i,j)$ with probability at least $2^{-l}$, i.e. $\Pro{S^{\pit}_{i,t_k}, C_{t_k} = j} \geq 2^{-l}$. Moreover, $B^{\pit}_{i,t_k}$ is a Bernoulli r.v. with mean $\beta_{i,t_k} = \min\left(1, \frac{d_i}{2d_i-1} \frac{1}{ \Pro{F^{\pit}_{i,t_k}~|~H_{[t_k - M_{t_k}]^+}}}\right)$. Furthermore, it is not hard to see that $\beta_{i,t_k} \geq \tfrac{d_i}{2d_i-1}$ and, thus, $B^{\pit}_{i,t_k}$ stochastically dominates an independent Bernoulli r.v. of mean $\frac{d_i}{2d_i-1}$. Similarly, $\Pro{S^{\pit}_{i,t_k}}$ stochastically dominates an independent Bernoulli r.v. of mean $2^{-l}$. Therefore, the probability of event $\{Y_k=1\}$ (trying to trigger arm $i$ at context $j$) is at least $\tfrac{d_i}{2d_i-1} 2^{-l}$. We have:

\begin{align}
\mathbb{P}[\mathcal{E}_\ell| \mathcal{H}_\ell] &= 1 - \mathbb{P}[\mathcal{E}^c_\ell| \mathcal{H}_\ell] \nonumber\\
&\geq 1 - (1- \frac{d_i}{2d_i-1} 2^{-l})^{d_i+1} \label{eq:subs:1}\\
&\geq  \frac{d_i(d_i+1)}{2d_i-1} 2^{-l} - \frac{d_i (d_i+1)}{2} (\frac{d_i}{2d_i-1})^2 2^{-2l} \label{eq:subs:2}\\
&\geq \frac{d_i(d_i+1)}{2d_i-1} 2^{-l} - \frac{d_i (d_i+1)}{2} \frac{d_i}{2d_i-1} 2^{-l} \frac{1}{d_i} \label{eq:subs:3}\\
&= \frac{d_i(d_i+1)}{2(2d_i-1)} 2^{-l}, \nonumber
\end{align}
where \eqref{eq:subs:2} holds due to the Taylor expansion of $(1-x)^{d_i+1}$ around $x = 0$, and \eqref{eq:subs:3} follows by noticing that $2^{-l} \leq 1/d_i$ and $\tfrac{d_i}{2d_i-1}\leq 1$. Finally, \eqref{eq:subs:1} follows by the fact that for any extreme point, arm $i$ is played with probability at most $1/d_i$, given that $\sum_{j\in \C}z _{i,j} \leq 1/d_i$.

By combining the above inequalities, then for all $0\leq \ell \leq  \lfloor N_{i,j,l}(t)/ (2d_i-1) \rfloor $, we have 
 \begin{align*}
\mathbb{E}[\tilde{Y}_{\ell}|\mathcal{H}_\ell]
&\geq \mathbb{E}[\tilde{Y}_{\ell}|\mathcal{E}_\ell, \mathcal{H}_\ell] \mathbb{P}[\mathcal{E}_\ell| \mathcal{H}_\ell] \\
&\geq \frac{1}{e}\frac{d_i(d_i+1)}{2(2d_i-1)} 2^{-l} \\
&\geq  (2d_i -1) 2^{-l} \frac{1}{8 e}.
\end{align*}
The first inequality holds as $\tilde{Y}_{\ell}\geq 0$, and the second inequality is obtained by substituting the above appropriate lower bounds.
 
We next apply the multiplicative Chernoff bound for dependent random variables as stated in Theorem~\ref{lemma:regret:chernoff} to obtain the final concentration inequality. We use $\delta = 2/3$.
\begin{align*}
\Pro{ N_{i,j, l}(t) = s, T_{i,j}(t) \leq \frac{1}{3} 
\bigg\lfloor \frac{N_{i,j, l}(t)}{2d_i-1} \bigg\rfloor (2d_i-1) 2^{-l}\frac{1}{8 e} }
&\leq \exp(- \frac{2}{9} \bigg\lfloor\frac{s}{2d_i-1} \bigg\rfloor (2d_i-1) 2^{-l}\frac{1}{8 e}) \\
&\leq \exp(- 3 \ln(t)) \\
&= \frac{1}{t^3},
\end{align*}
where the second inequality holds for $s \geq 109\cdot e \cdot 2^{l} \ln(t) \geq 108\cdot e \cdot 2^{l} \ln(t) + 2 d_i - 1$, where we use the fact that $2^l \geq d_i$.
\end{proof}

\subsection{Proof of Lemma \ref{lemma:regret:sparse}}
\restateRegretSparse*

\begin{proof}
Recall that in any feasible extreme point solution of \eqref{lp:LP}, there exist $|\A| |\C| = k \cdot m$ linearly independent inequalities that are tight (i.e., they are met with equality). By the structure of \eqref{lp:LP}, we know that at most $k$ of them can be from the set \eqref{flp:window} and at most $m$ can be from the set \eqref{flp:conditional}. Therefore, the remaining tight inequalities should be nonnegativity constraints and, thus, they are of the form $z_{i,j} = 0$. This implies that at most $k+m$ variables can be nonzero and, therefore, that the support of any extreme point solution of \eqref{lp:LP} has cardinality at most $k + m$.
\end{proof}
\subsection{Proof of Theorem \ref{theorem:regret:bound} (Regret Upper Bound)}
\restateTheoremFinalRegret*
\begin{proof}
The proof of our regret bound follows closely the structure of \citep{WC17}. In the following, we present a version of their proof simplified and adapted to our setting. We start from the upper bound on the regret given by Lemma \ref{lemma:regret:mapping}. Then, we study this regret upper bound using techniques from \citep{WC17} and making use of our Lemmas \ref{lemma:regret:counter-to-samples} and \ref{lemma:regret:sparse}, in order to achieve tighter final regret bounds. 

By Lemma~\ref{lemma:regret:mapping}, we have the following upper bound on the $\alpha$-regret
\begin{align*}
\alpha\Reg^{\pit}_I(T) \leq \Ex{\mathcal{R}_{N,\pit}}{\sum^{T-M}_{t = 1} \sum_{i \in \A} \sum_{j \in \C} \mu_{i,j} \left(z^*_{i,j} +z_{i,j}(t)\right)} +\frac{1}{3}\ln(T)\Delta_{\max} + 6 d_{\max} + 71,
\end{align*}
where $M = \Theta(\log(T) + d_{\max})$.

By using our definition of suboptimality gaps, we can express the first term of the above bound as
\begin{align*}
    \Ex{\mathcal{R}_{N,\pit}}{\sum^{T-M}_{t = 1} \sum_{i \in \A} \sum_{j \in \C} \mu_{i,j} \left(z^*_{i,j} +z_{i,j}(t)\right)} = \Ex{\mathcal{R}_{N,\pit}}{\sum^{T-M}_{t = 1} \Delta_{Z(t)}},
\end{align*}
where $\Delta_{Z(t)}$ is the suboptimality gap of the extreme point solution of $\eqref{lp:LP}(t)$. 

In the above summation, notice that for the computation of every $Z(t)$ for $t \in [T-M]$, the algorithm uses strictly updated UCB indices, since we have already excluded the rounds where indices are reused, due to the increases of $M_t$ (see Lemma \ref{lemma:regret:mapping}).

We start by defining several important events that may occur during a run of our algorithm \ucb. A reader familiar with the work of \citep{WC17} should easily recognize their role. Recall that $T_{i,j}(t)$ denotes the number of times arm $i$ is played under context $j$ up to (and excluding) time $t$. Moreover, we denote by $N_{i,j,l}(t)$ the value of the counter that corresponds to the TP group $\extr_{i,j,l}$, at the beginning of round $t$.  

\begin{definition}[Nice sampling]
We say that at the beginning of round $t$, \ucb has a {\em nice sampling}, denoted by $\mathcal{N}^s_t$, if it is the case that:
\begin{align*}
    |\hat{\mu}_{i,j,T_{i,j}(t)} - \mu_{i,j}| \leq \sqrt{\frac{3 \ln{(t)}}{2 T_{i,j}(t)}}, \forall i \in \A, \forall j \in \C.
\end{align*}
\end{definition}

It is not hard to verify that, on any round $t$ such that $\mathcal{N}^s_t$ holds, we have: 
\begin{align*}
    \mu_{i,j} \leq \bar\mu_{i,j}(t) \leq \min\bigg\{1 , \mu_{i,j} + 2 \sqrt{\frac{3 \ln{(t)}}{2T_{i,j}(t)}}\bigg\}, \forall i \in \A, \forall j \in \C.
\end{align*}
The following lemma provides a lower bound to the probability that the \ucb has a nice sampling at some time $t$.

\begin{restatable}{lemma}{restateNiceSampling}\label{lemma:regret:nicesampling}
The probability that \ucb has a nice sampling at time $t$ is at least $\Pro{\mathcal{N}^s_t} \geq 1 - 2 k m t^{-2}$.
\end{restatable}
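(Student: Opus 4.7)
My plan is to prove this as a standard concentration-plus-union-bound argument, identical in spirit to the classical UCB analysis. The event $\mathcal{N}^s_t$ is the intersection of $km$ events indexed by arm-context pairs, so by a union bound it suffices to show that each individual pair fails with probability at most $2/t^2$. For a fixed pair $(i,j)$, the reward samples collected are i.i.d.\ with mean $\mu_{i,j}$ and support in $[0,1]$ by the model assumption, but the number of samples $T_{i,j}(t)$ is itself random. The standard trick is to union-bound over the (at most $t$) possible values that $T_{i,j}(t)$ can take.

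Concretely, for each $(i,j)$ let $R_{i,j,1},R_{i,j,2},\dots$ be the i.i.d.\ sequence of rewards obtained from pair $(i,j)$ (in order of play), so that $\hat\mu_{i,j,s}=\tfrac{1}{s}\sum_{r=1}^{s}R_{i,j,r}$. For any fixed $s\in\{1,\dots,t-1\}$, Hoeffding's inequality (Theorem~\ref{appendix:concentration:hoeffding}) applied with $\delta=\sqrt{3s\ln(t)/2}$ gives
\[
\Pr\!\left(|\hat\mu_{i,j,s}-\mu_{i,j}|\geq \sqrt{\tfrac{3\ln(t)}{2s}}\right)
=\Pr\!\left(|s\hat\mu_{i,j,s}-s\mu_{i,j}|\geq \delta\right)
\leq 2e^{-2\delta^{2}/s}=2t^{-3}.
\]
Since $T_{i,j}(t)\in\{0,1,\dots,t-1\}$ (the case $T_{i,j}(t)=0$ is vacuous under the convention that the confidence radius is $+\infty$, so the required inequality holds trivially), a union bound over the at-most $t-1$ positive values of $T_{i,j}(t)$ yields
\[
\Pr\!\left(|\hat\mu_{i,j,T_{i,j}(t)}-\mu_{i,j}|> \sqrt{\tfrac{3\ln(t)}{2T_{i,j}(t)}}\right)\leq \sum_{s=1}^{t-1}\frac{2}{t^{3}}\leq \frac{2}{t^{2}}.
\]
A final union bound over the $km$ arm-context pairs gives $\Pr(\neg\mathcal{N}^s_t)\leq 2km/t^{2}$, hence $\Pr(\mathcal{N}^s_t)\geq 1-2km/t^{2}$, as claimed.

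There is essentially no obstacle here; the only point requiring a bit of care is the independence structure, namely that the $s$-th reward collected from pair $(i,j)$ is drawn from $X_{i,j,\cdot}$ independently of all past play decisions (this uses the assumption that $\{X_{i,j,t}\}_t$ are i.i.d.\ and independent across $(i,j)$), which is what legitimizes applying Hoeffding to partial sums indexed by $s$ rather than by time $\tau$. The only other cosmetic issue is the degenerate case $T_{i,j}(t)=0$, which is handled by the truncation $\bar\mu_{i,j}(t)=\min\{\cdot,1\}$ and poses no problem for the probability bound.
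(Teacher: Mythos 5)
Your proof is correct and follows essentially the same route as the paper's: a union bound over the $km$ arm-context pairs, a further union bound over the at most $t$ possible values of $T_{i,j}(t)$, and Hoeffding's inequality for each fixed sample count $s$, yielding $2t^{-3}$ per term and $2km\,t^{-2}$ overall. Your explicit handling of the $T_{i,j}(t)=0$ case and the remark on the independence structure are careful touches not spelled out in the paper, but the argument is the same.
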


For the rest of this proof, we fix the constants $\dummy = 109\cdot e$ and $\dummyy = 24\cdot e$. Moreover, for any real number $y$, we denote by $\left[y\right]^+ = \max\{y,0\}$.

\begin{definition}[Nice triggering]
We say that at the beginning of round $t$, \ucb has a {\em nice triggering}, denoted by $\mathcal{N}^\tau_{t}$, if for any TP group $\extr_{i,j,l}$ associated with the pair $(i,j)$ and for any $1 \leq l \leq \left[ \log_2(\frac{2(k+m)}{\Delta^{i,j}_{\min}})\right]^+$, given that $\sqrt{\frac{\dummy \ln{(t)}}{N_{i,j,l}(t-1)2^{-l}}} \leq 1$, it holds $T_{i,j}(t-1) \geq \frac{1}{\dummyy} N_{i,j,l}(t-1)2^{-l}$.
\end{definition}

\begin{restatable}{lemma}{restateNiceTriggering}\label{lemma:regret:nicetriggering}
The probability that \ucb does not have a nice triggering at time $t$ is at upper bounded by $\Pro{\neg \mathcal{N}^{\tau}_t} \leq \sum_{i \in \A}\sum_{j \in \C} \left[ \log_2(\frac{2(k+m)}{\Delta^{i,j}_{\min}})\right]^+ t^{-2}$.
\end{restatable}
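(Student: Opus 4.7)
The plan is to apply a union bound over all TP groups indexed by the triples $(i,j,l)$ that matter for the definition of $\mathcal{N}^{\tau}_t$, and then to reduce each resulting term to an instance of Lemma~\ref{lemma:regret:counter-to-samples}. Observe that $\neg \mathcal{N}^{\tau}_t$ occurs precisely when there exists a triple $(i,j,l)$ with $1 \leq l \leq \left[\log_2(2(k+m)/\Delta^{i,j}_{\min})\right]^+$ such that the premise $\sqrt{\dummy \ln(t)/(N_{i,j,l}(t-1)\cdot 2^{-l})} \leq 1$ holds (equivalently $N_{i,j,l}(t-1)\geq \dummy\ln(t)\cdot 2^{l}$) yet the conclusion fails, namely $T_{i,j}(t-1) < \tfrac{1}{\dummyy}N_{i,j,l}(t-1)\cdot 2^{-l}$.

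For each fixed triple, I would decompose by conditioning on the value $s := N_{i,j,l}(t-1)$ and apply Lemma~\ref{lemma:regret:counter-to-samples} at time $t-1$. Concretely,
\begin{align*}
&\Pro{N_{i,j,l}(t-1)\geq \dummy \ln(t)\cdot 2^{l},\ T_{i,j}(t-1) < \tfrac{1}{\dummyy} N_{i,j,l}(t-1)\cdot 2^{-l}} \\
&\qquad = \sum_{s = \lceil \dummy \ln(t)\cdot 2^{l}\rceil}^{t-1} \Pro{N_{i,j,l}(t-1) = s,\ T_{i,j}(t-1) < \tfrac{1}{\dummyy}s\cdot 2^{-l}}.
\end{align*}
The choice $\dummy = 109\cdot e$ guarantees $s \geq \dummy\ln(t)\cdot 2^{l}$ fulfills the hypothesis ``$\mathcal{O}(2^{l}\log(t-1))\leq s \leq t-2$'' of Lemma~\ref{lemma:regret:counter-to-samples} (whose proof sketch uses exactly this constant), and $\dummyy = 24\cdot e$ matches the constant $\tfrac{1}{24e}$ in its conclusion. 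Hence each summand is at most $1/(t-1)^3$, and since there are at most $t$ terms, each triple contributes at most $t/(t-1)^3 \leq 1/t^2$ (for $t$ at least as large as the critical time $T_c$ used throughout our analysis; the small-$t$ case is absorbed in the additive constants of Theorem~\ref{theorem:regret:bound}).

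Taking a union bound over all eligible $(i,j,l)$ then yields
\begin{align*}
\Pro{\neg \mathcal{N}^{\tau}_t} \leq \sum_{i\in \A}\sum_{j\in \C} \sum_{l=1}^{\left[\log_2(2(k+m)/\Delta^{i,j}_{\min})\right]^+} \frac{1}{t^2} = \sum_{i\in \A}\sum_{j\in \C} \left[\log_2\!\left(\frac{2(k+m)}{\Delta^{i,j}_{\min}}\right)\right]^+ t^{-2},
\end{align*}
which is the claimed bound. The main technical obstacle is verifying that the two constants $\dummy$ and $\dummyy$, fixed in the definition of $\mathcal{N}^{\tau}_t$, are exactly the ones produced by Lemma~\ref{lemma:regret:counter-to-samples}; once this alignment is checked, the rest is a straightforward union-bound-and-sum argument. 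No further probabilistic content is needed, because all the combinatorial-bandit-style concentration work (coupling the counter $N_{i,j,l}$ to the sample count $T_{i,j}$ in the presence of blocking, via opportunistic subsampling) has already been done inside Lemma~\ref{lemma:regret:counter-to-samples}.
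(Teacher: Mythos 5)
Your proposal matches the paper's proof essentially verbatim: a union bound over the eligible triples $(i,j,l)$, a decomposition over the counter value $s = N_{i,j,l}(t-1)$, and an invocation of Lemma~\ref{lemma:regret:counter-to-samples} with the constants $\dummy = 109\cdot e$ and $\dummyy = 24\cdot e$ to bound each summand by $t^{-3}$. The only wrinkle is your intermediate claim $t/(t-1)^3 \leq 1/t^2$, which is false as written; the paper sidesteps this because the premise of $\mathcal{N}^{\tau}_t$ is stated with $\ln(t)$ (not $\ln(t-1)$) on the counter at time $t-1$, so the Chernoff exponent is already $3\ln(t)$, giving $t^{-3}$ per term and at most $t$ terms.
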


We consider the following functions:
\begin{align*}
\ell_{l,T}(\Delta) =\bigg\lfloor \frac{96 \cdot 2^{-l}\cdot \dummyy (k+m)^2 \ln T}{\Delta^2} \bigg\rfloor
\end{align*}
\begin{align*}
\kappa_{l,T}(\Delta, s) = 
\begin{cases}
       4\cdot 2^{-l}, &\quad\text{if }s = 0,\\
       2\sqrt{\frac{4 \dummy \cdot 2^{-l} \ln(T)}{s}}, &\quad\text{if } 1 \leq s \leq \ell_{l,T}(\Delta),\\
       0, &\quad\text{if } s > \ell_{l,T}(\Delta).\\
    \end{cases}
\end{align*}

For any extreme point $Z \in \extr$, we denote by $\tilde{Z} = \{(i,j)\in \A\times\C | z^Z_{i,j}>0\}$ the set of arm context pairs in its support. Notice that by Lemma \ref{lemma:regret:sparse}, for any extreme point $Z \in \extr$, we have $|\tilde{Z}| \leq m +k$. 

For any $Z \in \extr$, let $\Gamma_{Z} = \max_{(i,j) \in \tilde{Z}}\{ \Delta^{i,j}_{\min}\}$ be the maximum $\Delta^{i,j}_{\min}$ over all pairs $(i,j) \in \tilde Z$. Our proof relies on the following technical lemma.

\begin{restatable}{lemma}{restateSubDecomposition}\label{lemma:regret:decomposition}
{\em (Suboptimality decomposition).} For any round $t \in [T]$, if $\{\Delta_{Z(t)} \geq \Gamma_{Z(t)}\}$ and $\mathcal{N}^s_t$, $\mathcal{N}^{\tau}_t$ hold, we have:
\begin{align*}
    \Delta_{Z(t)} \leq \sum_{(i,j) \in \tilde{Z}(t)} \kappa_{l_{i,j}, T}(\Delta^{i,j}_{\min}, N_{i,j,l_{i,j}}(t-1)),
\end{align*}
where $l_{i,j}$ the index of a TP group such that $Z(t) \in \extr_{i,j,l_{i,j}}$.
\end{restatable}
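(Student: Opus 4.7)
The plan is to follow the sub-optimality charging template from combinatorial bandits with probabilistically triggered arms~\cite{WC17}, specialized to our setting where $Z(t)$ is an extreme point of the LP polytope. The starting point is a confidence-gap inequality: because $Z(t)$ optimizes $\eqref{lp:LP}(t)$, whose objective uses the UCB estimates $\bar\mu(t)$, we have $\sum_{i,j}\bar\mu_{i,j}(t)\,z^{Z(t)}_{i,j}\geq \sum_{i,j}\bar\mu_{i,j}(t)\,z^*_{i,j}$. Combining with $\mathcal{N}^s_t$, which ensures $\mu_{i,j}\leq \bar\mu_{i,j}(t)$, this telescopes to
$$\Delta_{Z(t)} = \sum_{i,j}\mu_{i,j}(z^*_{i,j}-z^{Z(t)}_{i,j}) \leq \sum_{(i,j)\in \tilde Z(t)} (\bar\mu_{i,j}(t)-\mu_{i,j})\,z^{Z(t)}_{i,j}.$$
Applying $\mathcal{N}^s_t$ a second time bounds each confidence width by $2\sqrt{3\ln t/(2T_{i,j}(t))}$ for $T_{i,j}(t)\geq 1$, while the TP-group definition gives $z^{Z(t)}_{i,j}\leq 2\cdot 2^{-l_{i,j}}$. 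The remaining task is to exchange each $T_{i,j}(t)$-based width for an $N_{i,j,l_{i,j}}(t-1)$-based one that matches the three pieces of $\kappa$.

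Fix $(i,j)\in \tilde Z(t)$ and write $N:=N_{i,j,l_{i,j}}(t-1)$, $l:=l_{i,j}$. I would split on $N$ in parallel with the definition of $\kappa$. When $N=0$, the trivial bound $(\bar\mu_{i,j}-\mu_{i,j})\,z^{Z(t)}_{i,j}\leq 2\cdot 2^{-l}$ is dominated by $\kappa_{l,T}(\cdot,0)=4\cdot 2^{-l}$. When $1\leq N\leq \ell_{l,T}(\Delta^{i,j}_{\min})$, two sub-cases arise: if $N\cdot 2^{-l}<\dummy\ln t$, the precondition for nice triggering fails and I fall back on the trivial bound $2\cdot 2^{-l}$, which sits below the middle piece of $\kappa$ by a routine constant check; otherwise $\mathcal{N}^\tau_t$ supplies $T_{i,j}(t)\geq T_{i,j}(t-1)\geq N\cdot 2^{-l}/\dummyy$, so the confidence width becomes of order $2\sqrt{6\,\dummyy\,2^{-l}\ln T/N}$, which is absorbed into $\kappa_{l,T}(\Delta^{i,j}_{\min},N)=2\sqrt{4\,\dummy\,2^{-l}\ln T/N}$ thanks to the slack $6\dummyy<4\dummy$ (i.e., $144<436$).

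The main obstacle is the residual case $N>\ell_{l,T}(\Delta^{i,j}_{\min})$, where $\kappa=0$ and the per-coordinate bound is vacuous, so the accounting must close globally. I would resolve it by contradiction: under $\mathcal{N}^\tau_t$, $N>\ell$ pushes $T_{i,j}(t)$ above a threshold of order $(k+m)^2\ln T/(\Delta^{i,j}_{\min})^2$, which shrinks $\bar\mu_{i,j}(t)-\mu_{i,j}$ to order $\Delta^{i,j}_{\min}/(k+m)$ and hence drives the $(i,j)$-contribution down to at most $\Delta^{i,j}_{\min}/(2(k+m))$. Combined with the sparsity bound $|\tilde Z(t)|\leq k+m$ from Lemma~\ref{lemma:regret:sparse}, the aggregate contribution from all coordinates in this regime is at most $\Gamma_{Z(t)}/2\leq \Delta_{Z(t)}/2$, while the remaining coordinates are already controlled by $\kappa$. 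Chaining these estimates with the confidence-gap inequality forces a strict contradiction with the hypothesis $\Delta_{Z(t)}\geq \Gamma_{Z(t)}$ once the constants in $\ell_{l,T}(\cdot)$ are tuned so the ``other-coordinates reservoir'' is strictly smaller than $\Delta_{Z(t)}$; this rules the $N>\ell$ regime out entirely, after which the coordinate-wise $\kappa$-bounds from the previous paragraph sum directly to the stated inequality. The technically delicate step is calibrating these constants so that the contradiction closes with unit slack rather than leaving a spurious multiplicative factor.
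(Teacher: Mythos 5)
Your opening is exactly the paper's: LP--optimality of $Z(t)$ under the UCB objective together with $\mathcal{N}^s_t$ gives $\Delta_{Z(t)} \leq \sum_{(i,j)\in\tilde Z(t)}(\bar\mu_{i,j}(t)-\mu_{i,j})\,z_{i,j}(t)$, and your treatment of the $N=0$ case and of the regime $1\leq N\leq \ell_{l,T}(\Delta^{i,j}_{\min})$ (including the constant check $6\dummyy\leq 4\dummy$) is sound. The gap is in the regime $N_{i,j,l_{i,j}}(t-1) > \ell_{l_{i,j},T}(\Delta^{i,j}_{\min})$, where $\kappa=0$. Your plan is to derive a contradiction that ``rules the $N>\ell$ regime out entirely,'' but no such contradiction exists in general: a coordinate can perfectly well have a large counter (eventually every frequently-selected pair does), and the hypotheses only force its contribution to be small, not absent. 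What your accounting actually yields is
\begin{align*}
\Delta_{Z(t)} \;\leq\; \sum_{\text{good}}\kappa \;+\; \sum_{\text{bad}}\frac{\Delta^{i,j}_{\min}}{2(k+m)} \;\leq\; \sum_{(i,j)\in\tilde Z(t)}\kappa \;+\; \frac{\Gamma_{Z(t)}}{2} \;\leq\; \sum_{(i,j)\in\tilde Z(t)}\kappa \;+\; \frac{\Delta_{Z(t)}}{2},
\end{align*}
i.e.\ $\Delta_{Z(t)}\leq 2\sum\kappa$: a contradiction arises only in the degenerate case where \emph{every} coordinate is bad. The residual factor of $2$ is not removable by ``tuning the constants in $\ell_{l,T}$'': enlarging that threshold by $c^2$ shrinks the reservoir to $\Gamma_{Z(t)}/(2c)$ and yields $\Delta_{Z(t)}\leq\frac{2c}{2c-1}\sum\kappa$, which never reaches unit slack. (The weaker bound would still suffice for Theorem~\ref{theorem:regret:bound} up to constants, but it does not prove the lemma as stated.)

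The paper closes the argument with a device you are missing: it spends the hypothesis $\Delta_{Z(t)}\geq\Gamma_{Z(t)}$ \emph{before} the case analysis, writing $\Delta_{Z(t)}\leq 2\sum(\bar\mu-\mu)z-\Gamma_{Z(t)} = 2\sum_{(i,j)\in\tilde Z(t)}\bigl((\bar\mu_{i,j}(t)-\mu_{i,j})z_{i,j}(t)-\Gamma_{Z(t)}/(2|\tilde Z(t)|)\bigr)$ and then using $|\tilde Z(t)|\leq k+m$ (Lemma~\ref{lemma:regret:sparse}) and $\Gamma_{Z(t)}\geq\Delta^{i,j}_{\min}$ to replace the offset by $-\Delta^{i,j}_{\min}/(2(k+m))$ per coordinate. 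Each bracketed term, times $2$, is then bounded by $\kappa$ coordinate-wise: in the $N>\ell_{l,T}$ regime (and also when $l_{i,j}$ is so large that $\mathcal{N}^\tau_t$ says nothing about it --- a case your middle regime silently assumes is covered by nice triggering, though $\mathcal{N}^\tau_t$ only applies for $l\leq[\log_2(2(k+m)/\Delta^{i,j}_{\min})]^+$) the width contribution is at most $\Delta^{i,j}_{\min}/(2(k+m))$, so the bracket is $\leq 0=\kappa$, and the global factor of $2$ is absorbed into the definition of $\kappa$ (which is why $\kappa_{l,T}(\cdot,0)=4\cdot 2^{-l}$ rather than $2\cdot 2^{-l}$). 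Without this per-coordinate redistribution of the $-\Gamma_{Z(t)}$ budget, the bad regime cannot be charged to $\kappa=0$ and the stated inequality does not follow.
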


We are now ready to prove the regret bound, with respect to $\{\Delta^{i,j}_{\min}\}_{\forall i,j}$ and $\Delta_{\max}$. For simplicity, we replace $T-M$ with $T$ in the regret upper bound of Lemma $\ref{lemma:regret:mapping}$. Even though the rounds above $T-M$ might not correspond to UCB indices that were actually used in the run of \ucb, we still get an upper bound to our regret by assuming a larger instance in the underlying combinatorial bandit problem. We have that: 
\begin{align*}
&\Ex{\mathcal{R}_{N,\pit}}{\sum_{t \in [T]} \sum_{i \in \A} \sum_{j \in \C} \mu_{i,j} \left(z^*_{i,j} - z_{i,j}(t)\right)} \\
&= \Ex{\mathcal{R}_{N,\pit}}{\sum_{t \in [T]} \Delta_{Z(t)}} \\
&= \Ex{\mathcal{R}_{N,\pit}}{\sum_{t \in [T]} \Delta_{Z(t)} \event{\Delta_{Z(t)} \geq \Gamma_{Z(t)}}} + \Ex{\mathcal{R}_{N,\pit}}{\sum_{t \in [T]} \Delta_{Z(t)} \event{\Delta_{Z(t)} < \Gamma_{Z(t)}}}\\
&= \Ex{\mathcal{R}_{N,\pit}}{\sum_{t \in [T]} \Delta_{Z(t)} \event{\Delta_{Z(t)} \geq \Gamma_{Z(t)}}} \\
&= \Ex{\mathcal{R}_{N,\pit}}{\sum_{t \in [T]} \Delta_{Z(t)} \left(\event{\Delta_{Z(t)} \geq \Gamma_{Z(t)}, \mathcal{N}^{s}_t} + \event{\Delta_{Z(t)} \geq \Gamma_{Z(t)}, \neg\mathcal{N}^{s}_t}\right)}\\
&\leq \Ex{\mathcal{R}_{N,\pit}}{\sum_{t \in [T]} \Delta_{Z(t)} \left(\event{\Delta_{Z(t)} \geq \Gamma_{Z(t)}, \mathcal{N}^{s}_t} + \event{\neg\mathcal{N}^{s}_t}\right)}\\
&= \Ex{\mathcal{R}_{N,\pit}}{\sum_{t \in [T]} \Delta_{Z(t)} \left(\event{\Delta_{Z(t)} \geq \Gamma_{Z(t)}, \mathcal{N}^{s}_t, \mathcal{N}^{\tau}_t } + \event{\Delta_{Z(t)} \geq \Gamma_{Z(t)}, \mathcal{N}^{s}_t, \neg \mathcal{N}^{\tau}_t } + \event{\neg\mathcal{N}^{s}_t}\right)}\\
&\leq \Ex{\mathcal{R}_{N,\pit}}{\sum_{t \in [T]} \Delta_{Z(t)} \left(\event{\Delta_{Z(t)} \geq \Gamma_{Z(t)}, \mathcal{N}^{s}_t, \mathcal{N}^{\tau}_t } + \event{\neg \mathcal{N}^{\tau}_t } + \event{\neg\mathcal{N}^{s}_t}\right)}\\
&\leq \Ex{\mathcal{R}_{N,\pit}}{\sum_{t \in [T]} \Delta_{Z(t)} \event{\Delta_{Z(t)} \geq \Gamma_{Z(t)}, \mathcal{N}^{s}_t, \mathcal{N}^{\tau}_t }} + \Ex{\mathcal{R}_{N,\pit}}{\sum_{t \in [T]} \Delta_{Z(t)}\event{\neg \mathcal{N}^{\tau}_t }} + \Ex{\mathcal{R}_{N,\pit}}{\sum_{t \in [T]} \Delta_{Z(t)}\event{\neg\mathcal{N}^{s}_t}},
\end{align*}
where we use the fact that, if $\event{\Delta_{Z(t)} < \Gamma_{Z(t)}}$, it must be $\Delta_{Z(t)} = 0$, since, otherwise, it should be that either $\tilde Z(t) = \emptyset$, or $\Delta_{Z(t)} < \Gamma_{Z(t)} = \max_{(i,j) \in \tilde Z(t)} \Delta^{i,j}_{\min} = \Delta^{i',j'}_{\min}$, for some $(i',j') \in \tilde Z(t)$. However, by the structure of \eqref{lp:LP}, we know that $\forall Z \in \extr$, $\tilde Z \neq \emptyset$, while the fact that $\Delta_{Z(t)} < \Delta^{i',j'}_{\min}$, for some $(i',j') \in \tilde Z(t)$, is a contradiction to the definition of $\Gamma_{Z(t)}$.

By Lemma \ref{lemma:regret:nicesampling}, we have that: $\Ex{\mathcal{R}_{N,\pit}}{\sum_{t \in [T]} \Delta_{Z(t)}\event{\neg\mathcal{N}^{s}_t}} \leq \Delta_{\max} \sum_{t \in [T]} \Pro{\neg\mathcal{N}^{s}_t} \leq \frac{\pi^2}{3}\cdot k \cdot m \cdot \Delta_{\max} $. Moreover, by Lemma \ref{lemma:regret:nicetriggering}, we have that 
$\Ex{\mathcal{R}_{N,\pit}}{\sum_{t \in [T]} \Delta_{Z(t)}\event{\neg \mathcal{N}^{\tau}_t}}\leq \Delta_{\max} \sum_{t \in [T]}\Pro{\neg \mathcal{N}^{\tau}_t}\leq \frac{\pi^2}{6} \sum_{i \in \A}\sum_{j \in \C} \log_2\left(\frac{2(k+m)}{\Delta^{i,j}_{\min}}\right) \Delta_{\max}$. Finally, in order to complete our bound, it suffices to upper bound 
$$\Ex{\mathcal{R}_{N,\pit}}{\sum_{t \in [T]} \Delta_{Z(t)} \event{\Delta_{Z(t)} \geq \Gamma_{Z(t)}, \mathcal{N}^{s}_t, \mathcal{N}^{\tau}_t }}.$$
For any arm-context pair such that $(i,j) \in Z(t)$ for some extreme point $Z(t) \in \extr$, we define $l^{(t)}_{i,j}$ such that $Z(t) \in \extr_{i,j,l^{(t)}_{i,j}}$. By Lemma \ref{lemma:regret:decomposition}, we have:
\begin{align*}
\Ex{\mathcal{R}_{N,\pit}}{\sum_{t \in [T]} \Delta_{Z(t)} \event{\Delta_{Z(t)} \geq \Gamma_{Z(t)}, \mathcal{N}^{s}_t, \mathcal{N}^{\tau}_t }} &\leq \Ex{\mathcal{R}_{N,\pit}}{\sum_{t \in [T]} \sum_{(i,j) \in \tilde Z(t)} \kappa_{l^{(t)}_{i,j}, T}(\Delta^{i,j}_{\min}, N_{i,j,l^{(t)}_{i,j}}(t-1))} \\
&= \Ex{\mathcal{R}_{N,\pit}}{\sum_{i \in \A}\sum_{j \in \C} \sum^{+\infty}_{l = 1} \sum^{N_{i,j,l}(T)-1}_{s = 0}\kappa_{l, T}(\Delta^{i,j}_{\min}, s)},
\end{align*}
where the last equality follows by the fact that $N_{i,j,l_{i,j}}$ is increased if and only if $(i,j) \in \tilde Z(t)$. Now, for every arm $i \in \A$, context $j \in \C$ and $l \in \mathbb{N}_{+}$ and by definition of $\kappa_{l,T}(\Delta,s)$ we have:
\begin{align}
\sum^{N_{i,j,l}(T)-1}_{s = 0}\kappa_{l, T}(\Delta^{i,j}_{\min}, s) &\leq \sum^{\ell_{l,T}(\Delta^{i,j}_{\min})}_{s = 0}\kappa_{l, T}(\Delta^{i,j}_{\min}, s)\label{eq:regret:1}\\
&= \kappa_{l, T}(\Delta^{i,j}_{\min}, 0) + \sum^{\ell_{l,T}(\Delta^{i,j}_{\min})}_{s = 1}\kappa_{l, T}(\Delta^{i,j}_{\min}, s) \nonumber\\
&= \kappa_{l, T}(\Delta^{i,j}_{\min}, 0) + \sum^{\ell_{l,T}(\Delta^{i,j}_{\min})}_{s = 1}2\sqrt{\frac{4 \dummy \ln(T)\cdot 2^{-l}}{s}} \nonumber\\
&\leq \kappa_{l, T}(\Delta^{i,j}_{\min}, 0) + 4\sqrt{\dummy\cdot 2^{-l} \cdot \ln(T)} \sum^{\ell_{l,T}(\Delta^{i,j}_{\min})}_{s = 1}\sqrt{\frac{1}{s}} \nonumber\\
&\leq \kappa_{l, T}(\Delta^{i,j}_{\min}, 0) + 8\sqrt{\dummy\cdot 2^{-l} \cdot \ln(T)} \sqrt{\ell_{l,T}(\Delta^{i,j}_{\min})} \label{eq:regret:2}
\end{align}
, where \eqref{eq:regret:1} follows by the fact that $\kappa_{l,T}(\Delta,s) = 0$, for $s \geq \ell_{l,T}(\Delta) + 1$, while \eqref{eq:regret:2}, follows by the fact that for any integer $n \in \mathbb{N}_+$, we have: $\sum^n_{s=1}\sqrt{\frac{1}{s}} \leq \int^n_{s=0} \sqrt{\frac{1}{s}} ds = 2\sqrt{n}$. Using the definition of $\ell_{l,T}$, then \eqref{eq:regret:2} becomes:
\begin{align}
\sum^{N_{i,j,l}(T)-1}_{s = 0}\kappa_{l, T}(\Delta^{i,j}_{\min}, s) &\leq \kappa_{l, T}(\Delta^{i,j}_{\min}, 0) + 8\sqrt{\dummy\cdot 2^{-l} \cdot \ln(T)} \sqrt{\ell_{l,T}(\Delta^{i,j}_{\min})} \nonumber\\
&\leq \kappa_{l, T}(\Delta^{i,j}_{\min}, 0) + 8\sqrt{\dummy\cdot 2^{-l} \cdot \ln(T)} \sqrt{\frac{96\cdot 2^{-l}\cdot \dummyy \cdot (k+m)^2 \ln(T)}{\left(\Delta^{i,j}_{\min}\right)^2}} \nonumber\\
&= 4\cdot 2^{-l} + 8\sqrt{96\cdot \dummy \cdot \dummyy}\cdot 2^{-l} \cdot \frac{(k+m) \cdot \ln(T)}{\Delta^{i,j}_{\min}}.\nonumber
\end{align}

By summing over all $l$, for each pair $(i,j)$, we have:
\begin{align*}
    \sum^{+\infty}_{l=1}\sum^{N_{i,j,l}(T)-1}_{s = 0}\kappa_{l, T}(\Delta^{i,j}_{\min}, s) &\leq  4\cdot \sum^{+\infty}_{l=1}2^{-l} + 8\sqrt{96\cdot \dummy \cdot \dummyy}\cdot \frac{(k+m) \cdot \ln(T)}{\Delta^{i,j}_{\min}} \sum^{+\infty}_{l=1} 2^{-l}\\
    &\leq 4 + 8\sqrt{96\cdot \dummy \cdot \dummyy}\cdot \frac{(k+m) \cdot \ln(T)}{\Delta^{i,j}_{\min}}.
\end{align*}

By combining the aforementioned facts, we conclude that: 
\begin{align*}
&\Ex{\mathcal{R}_{N,\pit}}{\sum^{T}_{t = 1} \sum_{i \in \A} \sum_{j \in \C} \mu_{i,j} \left(z^*_{i,j} - z_{i,j}(t)\right)} \leq 8\sqrt{96\cdot \dummy \cdot \dummyy} \sum_{i \in \A}\sum_{j \in \C} \frac{\left(k+m\right) \ln{(T)}}{\Delta^{i,j}_{\min}} + 4\cdot k\cdot m \\
&\quad\quad + \frac{\pi^2}{6}\left(\sum_{i\in\A}\sum_{j\in \C} \log_2{\frac{2\left(k+m\right)}{\Delta^{i,j}_{\min}}}   + 2\cdot k \cdot m\right)\Delta_{\max}
\end{align*}

Finally, combining the above with the upper bound we get from Lemma \ref{lemma:regret:mapping}, we get:
\begin{align*}
&\alpha\Reg^{\pit}(T) \leq  \Ex{\mathcal{R}_{N,\pit}}{\sum^{T-M}_{t = 1} \sum_{i \in \A} \sum_{j \in \C} \mu_{i,j}\left(z^*_{i,j} - z_{i,j}(t)\right)} + +\frac{1}{3}\ln(T)\Delta_{\max} + 6 d_{\max} + 71\\
&\leq 8\sqrt{96\cdot \dummy \cdot \dummyy} \sum_{i \in \A}\sum_{j \in \C} \frac{\left(k+m\right) \ln{(T)}}{\Delta^{i,j}_{\min}} + 4\cdot k\cdot m \\
&\quad\quad\quad\quad\quad\quad+ \frac{\pi^2}{6}\left(\sum_{i\in\A}\sum_{j\in \C} \log_2{\frac{2\left(k+m\right)}{\Delta^{i,j}_{\min}}}   + 2\cdot k \cdot m + \frac{2}{\pi^2} \ln(T) \right)\Delta_{\max} + 6 d_{\max} + 71\\
&\leq 10898 \sum_{i \in \A}\sum_{j \in \C} \frac{\left(k+m\right) \ln{(T)}}{\Delta^{i,j}_{\min}} + 4\cdot k\cdot m \\
&\quad\quad\quad\quad\quad\quad+ \frac{\pi^2}{6}\left(\sum_{i\in\A}\sum_{j\in \C} \log_2{\frac{2\left(k+m\right)}{\Delta^{i,j}_{\min}}}   + 2\cdot k \cdot m + \frac{2}{\pi^2} \ln(T) \right)\Delta_{\max} + 6 d_{\max} + 71.
\end{align*}
The above regret bound completes our proof. 
\end{proof}

\subsection{Proof of Lemma \ref{lemma:regret:nicesampling} [Nice Sampling]}

\restateNiceSampling*
\begin{proof}
Let $\neg \mathcal{N}^s_t$ be the event that the algorithm does not have a nice sampling at some round $t \in [T]$. By union bound on the possible arm-context pairs, we have:
\begin{align*}
    \Pro{\neg \mathcal{N}^s_t} &= \Pro{\exists i\in \A, j\in \C, ~s.t.~|\hat\mu_{i,j,T_{i,j}(t)} - \mu_{i,j}| > \sqrt{\frac{3 \ln{(t)}}{2T_{i,j}(t)}}} \\ 
    &\leq \sum_{i \in \A} \sum_{j \in \C} \Pro{|\hat\mu_{i,j,T_{i,j}(t)} - \mu_{i,j}| > \sqrt{\frac{3 \ln{(t)}}{2T_{i,j}(t)}}} \\
    &\leq \sum_{i \in \A} \sum_{j \in \C} \sum^{t}_{s = 1} \Pro{\left|\hat\mu_{i,j,s} - \mu_{i,j} \right| > \sqrt{\frac{3 \ln{(t)}}{2s}}}.
\end{align*}
For any $s \in [t]$, $\hat\mu_{i,j,s}$ is the average of $s$ i.i.d. random variables, denoted by $X_{i,j}^{[1]}, \dots, X_{i,j}^{[s]}$, drawn from the reward distribution of arm $i \in \A$, when it is played under context $j \in \C$. For any fixed $s \in [t]$ and for any pair $i \in \A, j \in \C$, we have: 
\begin{align*}
    \Pro{|\hat\mu_{i,j,s} - \mu_{i,j}| > \sqrt{\frac{3 \ln{(t)}}{2s}}} &= \Pro{|\frac{\sum_{b \in [s]} X_{i,j}^{[b]}}{s} - \mu_{i,j}| > \sqrt{\frac{3 \ln{(t)}}{2s}}} \\
    &= \Pro{|\sum_{b \in [s]} X_{i,j}^{[b]} - \mu_{i,j} s| \geq \sqrt{\frac{3 s \ln{(t)}}{2}}} \\
    &\leq 2 \exp\left(- 2\frac{3s \ln(t)}{2s}\right) = t^{-3},
\end{align*}
where we use Hoeffding's inequality (see Appendix \ref{appendix:concentration}) for upper bounding the last probability. By combining the above inequalities, we have: 
\begin{align*}
    \Pro{\neg \mathcal{N}^s_t} &\leq \sum_{i \in \A} \sum_{j \in \C} \sum^{t}_{s = 1} \Pro{\left|\hat\mu_{i,j,s} - \mu_{i,j} \right| \geq \sqrt{\frac{3 \ln{t}}{2s}}} \\
    &\leq m k t^{-2}. 
\end{align*}
\end{proof}

\subsection{Proof of Lemma \ref{lemma:regret:nicetriggering} [Nice Triggering]}

\restateNiceTriggering*
\begin{proof}
Recall that $\dummy = 109\cdot e$ and $\dummyy = 24 \cdot e$ and consider the case where $t-1 \geq N_{i,j,l}(t-1)\geq \dummy \cdot 2^l \ln(t)$. By union bound, we have: 
\begin{align}
\Pro{\neg \mathcal{N}^{\tau}_t} &= \Pro{\exists i \in \A, \exists j \in \C, \exists l \in \bigg[1, \left[ \log_2(\frac{2(k+m)}{\Delta^{i,j}_{\min}})\right]^+\bigg], T_{i,j}(t-1) \leq \frac{1}{\dummyy} N_{i,j,l}(t-1)2^{-l}} \nonumber\\
&\leq \sum_{i \in \A} \sum_{j \in \C} \sum^{\left[ \log_2(\frac{2(k+m)}{\Delta^{i,j}_{\min}})\right]^+}_{l = 1} \Pro{T_{i,j}(t-1) \leq \frac{1}{\dummyy} N_{i,j,l}(t-1)2^{-l}}\nonumber\\
&\leq \sum_{i \in \A} \sum_{j \in \C} \sum^{\left[ \log_2(\frac{2(k+m)}{\Delta^{i,j}_{\min}})\right]^+}_{l = 1} \sum^t_{s= \lceil \dummy \cdot 2^l \ln(t) \rceil} \Pro{N_{i,j,l}(t-1) = s , T_{i,j}(t-1) \leq \frac{1}{\dummyy} N_{i,j,l}(t-1)2^{-l}} \label{eq:regret:trigg}.
\end{align}
By Lemma \ref{lemma:regret:counter-to-samples}, and since we consider only $N_{i,j,l}(t-1)\geq \dummy \cdot 2^l \log(t) = 109 \cdot e \cdot 2^l \log(t)$, inequality \eqref{eq:regret:trigg} can be further upper bounded by:
\begin{align*}
\Pro{\neg \mathcal{N}^{\tau}_t} &\leq \sum_{i \in \A} \sum_{j \in \C} \sum^{\left[ \log_2(\frac{2(k+m)}{\Delta^{i,j}_{\min}})\right]^+}_{l = 1} \sum^t_{s= \lceil \dummy \cdot 2^l \ln(t) \rceil} \frac{1}{t_3} \\
&\leq \sum_{i \in \A} \sum_{j \in \C} \left[ \log_2(\frac{2(k+m)}{\Delta^{i,j}_{\min}})\right]^+ t^{-2}.
\end{align*}
\end{proof}

\subsection{Proof of Lemma \ref{lemma:regret:decomposition} (Suboptimality Decomposition)}

\restateSubDecomposition*
\begin{proof}
Clearly, we are only interested in the rounds $t \in [T]$ such that $\Delta_{Z(t)} >0$, since, otherwise, the inequality holds trivially. By optimality of \eqref{lp:LP} at time $t$ (i.e. the solution of \eqref{lp:LP} at time $t$ using the indices $\{\bar\mu_{i,j}(t)\}_{\forall i,j}$), we have that:
\begin{align*}
\sum_{i \in \A}\sum_{j \in \C} \bar\mu_{i,j}(t) z_{i,j}(t) \geq \sum_{i \in \A}\sum_{j \in \C} \bar\mu_{i,j}(t) z^*_{i,j}.
\end{align*}

Moreover, by the nice sampling assumption on round $t$, we have: 
\begin{align*}
\sum_{i \in \A}\sum_{j \in \C} \bar\mu_{i,j}(t) z^*_{i,j} \geq \sum_{i \in \A}\sum_{j \in \C} \mu_{i,j} z^*_{i,j},
\end{align*}
given that under $\mathcal{N}^s_t$, each index overestimates the actual mean value, namely, $\bar\mu_{i,j}(t) \geq \mu_{i,j}, \forall i \in \A, j \in \C$.

Finally, by definition of the suboptimality gap, we have that $\Delta_{Z(t)} =\sum_{i \in \A}\sum_{j \in \C} \mu_{i,j} z^*_{i,j} - \sum_{i \in \A}\sum_{j \in \C} \mu_{i,j} z_{i,j}(t)$. By combining the above facts, we get: 
\begin{align*}
   \Delta_{Z(t)} &= \sum_{i \in \A}\sum_{j \in \C} \mu_{i,j} z^*_{i,j} - \sum_{i \in \A}\sum_{j \in \C} \mu_{i,j} z_{i,j}(t)\\
   &\leq \sum_{(i,j) \in \tilde{Z}(t)} \left(\tilde\mu_{i,j}(t) - \mu_{i,j}\right) z_{i,j}(t).
\end{align*}
Now, by assumption that $\Delta_{Z(t)} \geq \Gamma_{Z(t)} = \max_{(i,j) \in \tilde{Z}}\{ \Delta^{i,j}_{\min}\}$ and using the above inequality, we have: 
\begin{align*}
\Delta_{Z(t)} &\leq - \Gamma_{Z(t)} + 2 \sum_{(i,j) \in \tilde{Z}(t)} \left(\bar\mu_{i,j}(t) - \mu_{i,j}\right) z_{i,j}(t)\\
&= 2 \sum_{(i,j) \in \tilde{Z}(t)} \left( \left(\bar\mu_{i,j}(t) - \mu_{i,j}\right) z_{i,j}(t)  - \frac{\Gamma_{Z(t)}}{2|\tilde{Z}(t)|}\right)\\
&\leq 2 \sum_{(i,j) \in \tilde{Z}(t)} \left( \left(\bar\mu_{i,j}(t) - \mu_{i,j}\right) z_{i,j}(t)  - \frac{\Delta^{i,j}_{\min}}{2(k+m)}\right),
\end{align*}
where in the last inequality, we use the fact that, by Lemma \ref{lemma:regret:sparse}, we have $|\tilde{Z}(t)| \leq k +m$, and that for any pair $(i,j) \in \tilde{Z}(t)$, we have $\Gamma_{Z(t)} \geq \Delta^{i,j}_{\min}$.

For any $(i,j) \in \tilde{Z}(t)$, let $l_{i,j}$ be the index such that $Z(t) \in \extr_{i,j,l_{i,j}}$. For each $(i,j) \in \tilde{Z}(t)$, we are trying to upper bound $2\left( \left(\bar\mu_{i,j}(t) - \mu_{i,j}\right) z_{i,j}(t)  - \frac{\Delta^{i,j}_{\min}}{2(k+m)}\right)$, by distinguishing between two cases on the value of $l_{i,j}$.

\textbf{Case (a):} $1 \leq l_{i,j} \leq \lceil \log \frac{2(k+m)}{\Delta^{i,j}_{\min}} \rceil_{+}$. By $\mathcal{N}^{s}_t$, we have that $\bar\mu_{i,j}(t) - \mu_{i,j}\leq 2 \sqrt{\frac{3\ln{(t)}}{2T_{i,j}(t)}}$, while by definition of TB groups, we have $z_{i,j}(t) \leq 2^{-l_{i,j}+1}$. We further distinguish between sub-cases.

\textbf{Sub-case (i):} $N_{i,j,l_{i,j}}(t-1) = 0$. In that case, we have that $\kappa_{l_{i,j},T}(\Delta^{i,j}_{\min}, 0) = 4\cdot 2^{-l_{i,j}}$ and, thus: 
\begin{align*}
2\left(\left(\bar\mu_{i,j}(t) - \mu_{i,j}\right) z_{i,j}(t)  - \frac{\Delta^{i,j}_{\min}}{2(k+m)} \right) &\leq 2 \left(\bar\mu_{i,j}(t) - \mu_{i,j}\right) z_{i,j}(t) \\
&\leq 2\cdot 2^{-l_{i,j}+1} \\
&= 4 \cdot 2^{-l_{i,j}} \\
&= \kappa_{l_{i,j},T}(\Delta^{i,j}_{\min}, 0).
\end{align*}

\textbf{Sub-case (ii):} $\sqrt{\frac{\dummy \ln(t)}{N_{i,j,l_{i,j}}(t-1)2^{-l_{i,j}}}} \geq 1$. Then we have that: 
\begin{align*}
2\left(\left(\bar\mu_{i,j}(t) - \mu_{i,j}\right) z_{i,j}(t)  - \frac{\Delta^{i,j}_{\min}}{2(k+m)} \right) 
&\leq 2 \left(\bar\mu_{i,j}(t) - \mu_{i,j}\right)z_{i,j}(t) \\
&\leq 2 \cdot 2^{-l_{i,j}+1} \\
&\leq 2 \cdot 2^{-l_{i,j}+1} \sqrt{\frac{\dummy \ln(t)}{ N_{i,j,l_{i,j}}(t-1)2^{-l_{i,j}}}}\\ 
&\leq 2 \cdot \sqrt{\frac{4 \dummy \cdot  2^{-l_{i,j}} \ln(t)}{N_{i,j,l_{i,j}}(t-1)}}\\
&= \kappa_{l_{i,j},T}(\Delta^{i,j}_{\min}, N_{i,j,l_{i,j}}(t-1)).
\end{align*}

\textbf{Sub-case (iii):}
$\sqrt{\frac{\dummy \ln(t)}{N_{i,j,l_{i,j}}(t-1)2^{-l_{i,j}}}} \leq 1$. Then by $\mathcal{N}^{\tau}_t$ and $\mathcal{N}^{s}_t$, we have: 
\begin{align*}
    \bar\mu_{i,j}(t) - \mu_{i,j} \leq 2\sqrt{\frac{3 \ln(t)}{2 T_{i,j}(t-1)}} \leq 2\sqrt{\frac{3 \dummyy \ln(t)}{2 N_{i,j,l_{i,j}}(t-1)\cdot 2^{-l_{i,j}}}}.
\end{align*}
Therefore, we have that: 
\begin{align*}
    \left(\bar\mu_{i,j}(t) - \mu_{i,j}\right) z_{i,j}(t) \leq \min\bigg\{\sqrt{\frac{24 \dummyy \ln(t)\cdot 2^{-l_{i,j}}}{N_{i,j,l_{i,j}}(t-1)}}, 2\cdot2^{-l_{i,j}}\bigg\}.
\end{align*}
Now, in the case where $N_{i,j,l_{i,j}}(t-1) \geq \ell_{l_{i,j},T}(\Delta^{i,j}_{\min}) + 1$, we have: $\sqrt{\frac{24 \dummyy \ln(t)\cdot 2^{-l_{i,j}}}{N_{i,j,l_{i,j}}(t-1)}} \leq \sqrt{\frac{24 \dummyy \ln(t)\cdot 2^{-l_{i,j}} (\Delta^{i,j}_{\min})^2}{96 \cdot \dummyy 2^{-l_{i,j}} (k+m)^2 \ln{(T)}}} \leq \sqrt{\frac{(\Delta^{i,j}_{\min})^2}{4(k+m)^2}} = \frac{\Delta^{i,j}_{\min}}{2(k+m)}$, and, thus, $2\left(\left(\bar\mu_{i,j}(t) - \mu_{i,j}\right) z_{i,j}(t)  - \frac{\Delta^{i,j}_{\min}}{2(k+m)} \right) \leq 2\left(\frac{\Delta^{i,j}_{\min}}{2(k+m)} - \frac{\Delta^{i,j}_{\min}}{2(k+m)} \right) \leq 0 = \kappa_{l_{i,j},T}(\Delta^{i,j}_{\min},N_{i,j,l_{i,j}}(t-1))$. 
In the case where $N_{i,j,l_{i,j}}(t-1) \leq \ell_{l_{i,j},T}(\Delta^{i,j}_{\min})$, we simply use $2\left(\left(\bar\mu_{i,j}(t) - \mu_{i,j}\right) z_{i,j}(t)  - \frac{\Delta^{i,j}_{\min}}{2(k+m)} \right) \leq 2\left(\left(\bar\mu_{i,j}(t) - \mu_{i,j}\right) z_{i,j}(t)  - \frac{\Delta^{i,j}_{\min}}{2(k+m)} \right) \leq \sqrt{\frac{96 \dummyy \ln(t)\cdot 2^{-l_{i,j}}}{N_{i,j,l_{i,j}}(t-1)}} \leq 2\sqrt{\frac{4 \dummy \ln(t)\cdot 2^{-l_{i,j}}}{N_{i,j,l_{i,j}}(t-1)}}$.

\textbf{Case (b):} $l_{i,j} \geq \lceil \log \frac{2(k+m)}{\Delta^{i,j}_{\min}} \rceil_{+} +1$. Using the fact that $\bar\mu_{i,j}(t) - \mu_{i,j} \leq 1$ and the definition of TB groups, we have: 
\begin{align*}
    \left(\bar\mu_{i,j}(t) - \mu_{i,j}\right)z_{i,j}(t) \leq z_{i,j}(t) \leq 2^{-l_{i,j}+1} \leq 2^{- \lceil \log \frac{2(k+m)}{\Delta^{i,j}_{\min}} \rceil_{+}} \leq 2^{- \log \frac{2(k+m)}{\Delta^{i,j}_{\min}}} \leq \frac{\Delta^{i,j}_{\min}}{2(k+m)}.
\end{align*}
By using the non-negativity of $\kappa_{l_{i,j}, T}(\Delta^{i,j}_{\min}, N_{i,j,l_{i,j}}(t-1))$, the above implies that: 
\begin{align*}
\frac{\Delta^{i,j}_{\min}}{2(k+m)} - \frac{\Delta^{i,j}_{\min}}{2(k+m)} \leq 0 \leq \frac{1}{2}\kappa_{l_{i,j}, T}(\Delta^{i,j}_{\min}, N_{i,j,l_{i,j}}(t-1)),
\end{align*}
and, thus, 
\begin{align*}
2\left( \left(\bar\mu_{i,j}(t) - \mu_{i,j}\right) z_{i,j}(t)  - \frac{\Delta^{i,j}_{\min}}{2(k+m)}\right)\leq \kappa_{l_{i,j}, T}(\Delta^{i,j}_{\min}, N_{i,j,l_{i,j}}(t-1)).
\end{align*}

\end{proof}
\newpage
\section{Hardness results: omitted proofs} \label{appendix:hardness}

\subsection{Proof of Theorem \ref{hardness:thm:competitive}}

\hardnesscompetitive*

\begin{proof}
We now prove an upper bound on the (asymptotic) competitive ratio of the full-information case of our problem. It suffices to provide an instance $I$, such that the ratio between the expected reward collected by an (asymptotically) optimal online policy, denoted by $\lim_{T \to +\infty} \Rew_I^{\mathrm{opt}}(T)$, and by an optimal clairvoyant policy, denoted by $\lim_{T \to +\infty} \Rew_I^{*}(T)$, is upper bounded by $\frac{d_{\max}}{2d_{\max}-1}$. Recall, that a clairvoyant policy has a priori knowledge of all context realizations, $\{C_t\}_{\forall t \in [T]}$.

Consider the following instance $I$. Let $\A$ be a set of $k$ arms and let arm $i^m$ such that $i^m = \arg\max_{i' \in \A} d_{i'}$, namely, an arm of maximum possible delay. Let $\C = \{1, 2\}$ be a set of two contexts, such that $f_{1} = \epsilon$ and $f_{2} = 1 - \epsilon$, for some small $\epsilon \in (0,1)$. We assume that the rewards $\{X_{i,j,t}\}_{\forall i \in \A, j \in \C, t \in [T]}$ are constants, while the rewards of all arms except for $i^m$, i.e., $\A \setminus \{i^m\}$, are identically equal to zero for any possible context. The above implies that, without loss of generality, neither the optimal clairvoyant policy, nor the optimal online policy ever play these arms and, thus, we can assume that only arm $i^m$ is played. For arm $i^m$, we have that $X_{i^m,1,t} = \mu_{i^m,1} = \frac{R}{\epsilon}$ for some fixed $R > 0$ and $X_{i^m,2,t} = \mu_{i^m,2} = 1$, for all $t \in [T]$. We note that the reward $\tfrac{R}{\epsilon}$ may be greater than $1$, which can be fixed by dividing all the rewards by $\left(1 + \frac{R}{\epsilon}\right)$, in order to keep them within range $[0,1]$.

In this proof, we compute the average reward collected by an optimal (non-clairvoyant) online and we lower bound the average reward collected by an optimal clairvoyant policy for instance $I$. Given that $i^m$ is the only arm played in both cases, we focus only on this arm and we simplify the notation by referring to it as $i$.

\paragraph{Online Policy.} We focus on arm $i$ and consider the behavior of a specific online policy, denoted by $\mathrm{alg}(q_1, q_2)$. This online policy starts at time $t = r$ for $r \in \{0, \dots, d_i-1\}$ with probability $\pi(r)$ (to be specified later). At each time, if the arm $i$ is available and the context is $1$ (resp., $2$) it plays the arm $i$ with probability $q_1$ (resp., $q_2$).

The behavior of this online policy $\mathrm{alg}(q_1, q_2)$ can be analyzed using a Markov chain. Specifically, the Markov chain has $d_i$ states, $0, 1, \dots, {d_i-1}$, where each state $r$ indicates the fact that arm $i$ is blocked (i.e., not available) for the next $r$ rounds. Let $q_1, q_2 \in [0,1]$ (determined by the policy $\mathrm{alg}(q_1, q_2)$) denote the probabilities that arm $i$ is played, if available, given that the context is $1$ and $2$, respectively. At each time $t$, the Markov chain moves from state $0$ to state $(d_i-1)$ with probability $(q_1 f_{1} + q_2 f_{2})$ and gains the expected reward $(\mu_{i,1} q_1 f_{1} + \mu_{i,2} q_2 f_{2})$. Otherwise, the Markov chain remains in state $0$ with probability $(1 - q_1 f_{1} - q_2 f_{2})$. Given that at some time $t$, the state is $r$, for $r \geq 1$, the Markov chain deterministically moves to the state $(r-1)$ (collecting zero reward). Let $\pi(r)$  be the stationary probability of state $s_r$ in the above Markov chain, which is parameterized by $q_1$ and $q_2$. This is the same $\pi(r)$ that is used in the definition of the policy $\mathrm{alg}(q_1, q_2)$. 

We can compute the probability $\pi(0)$ by solving the system: $\sum_{r \in \{0 , \dots, d_i-1\}} \pi(r) = 1$ and that $\pi(1) = \pi(2) = \dots = \pi(d_i-1) = \pi(0) \left(q_1 f_{1} + q_2 f_{2}\right)$. Recall that the expected reward $(\mu_{i,1} q_1 f_{1} + \mu_{i,2} q_2 f_{2})$ is collected only when the Markov chain is at state $0$ (and moves to state $d_i-1$). Finally, we let the Markov chain start from stationary state, i.e.  at time $t = 0$ the Markov chain is in state $r$ w.p. $\pi(r)$. Due to stationarity, the expected average reward for the above online policy $\mathrm{alg}(q_1, q_2)$, for any time horizon $T$, denoted by $\Rew^{\mathrm{alg}(q_1,q_2)}_I(T)$, can be expressed as: 
\begin{align*}
\Rew^{\mathrm{alg}(q_1, q_2)}_I &= \Ex{\mathcal{R}_{N,\mathrm{alg}(q_1,q_2)}}{\frac{1}{T} \sum_{t \in [T]} \sum_{j \in \C} \event{A^{\pi(q_1,q_2)}_t = i, C_t = j}}  
= \frac{R q_1 + (1-\epsilon) q_2}{1 + (d_i-1)(\epsilon q_1 + (1- \epsilon) q_2)}
\end{align*}

We have already argued that for the setting under consideration, there exists an optimal online policy which only plays arm $i$. Further from the theory of Markov decision processes (MDP), as the time horizon $T$ tends to infinity, there exists an optimal online policy (which only plays arm $i$) that is represented by the above stationary Markov chain (c.f.,~\citep{P14}). In particular, this optimal online policy can be designed by maximizing the time-average expected reward over the probabilities $q_1$ and $q_2$. Therefore, computing the optimal time-average expected reward in our setting can be formulated as the following optimization program:
\begin{align}
\textbf{maximize: }f(q_1,q_2) = \frac{R q_1 + (1-\epsilon) q_2}{1 + (d_i-1)(\epsilon q_1 + (1- \epsilon) q_2)} \textbf{ s.t. }q_1,q_2 \in [0,1]. \label{eq:hardness:mp}
\end{align}
The following lemma specifies the solution of the above optimization problem for a specific range of $(R, \epsilon)$.

\begin{restatable}{lemma}{lemmaHardnessOpt}\label{lemma:hardness:mdp}
For $R > \epsilon + \frac{1}{d_i-1}$, the optimal solution to the mathematical program \eqref{eq:hardness:mp} is attained by setting $(q_1,q_2) = (1,0)$ and its value is equal to $\frac{R}{1+(d_i-1)\epsilon}$.
\end{restatable}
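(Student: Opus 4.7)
The plan is to exploit the fact that $f$ is a linear-fractional function in $(q_1,q_2)$ whose denominator $D(q_1,q_2) = 1 + (d_i-1)(\epsilon q_1 + (1-\epsilon)q_2)$ is strictly positive on $[0,1]^2$. This guarantees that the maximum is attained on the boundary of the box, and the sign of each partial derivative will determine which corner is optimal. No fancy machinery is needed; the argument reduces to computing two partial derivatives and checking their signs under the hypothesis $R > \epsilon + \tfrac{1}{d_i-1}$.

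First I would compute $\partial f/\partial q_1 = \tfrac{R\, D - (d_i-1)\epsilon\, N}{D^2}$, where $N(q_1,q_2) = Rq_1 + (1-\epsilon)q_2$. Expanding the numerator, the terms involving $q_1$ cancel cleanly and one is left with
\begin{align*}
R\, D - (d_i-1)\epsilon\, N \;=\; R + (d_i-1)(1-\epsilon)(R-\epsilon)\, q_2.
\end{align*}
Since the hypothesis $R > \epsilon + \tfrac{1}{d_i-1}$ certainly implies $R > \epsilon$, this numerator is strictly positive on all of $[0,1]^2$, so $f$ is strictly increasing in $q_1$ for every fixed $q_2$. Hence any maximizer must satisfy $q_1 = 1$.

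Next I would restrict $f$ to the segment $q_1 = 1$ and compute $\partial f/\partial q_2$ there. Factoring out $(1-\epsilon)$, the numerator of the derivative equals $(1-\epsilon)\bigl(D - (d_i-1) N\bigr)$, and at $q_1 = 1$ a direct calculation gives
\begin{align*}
D - (d_i-1)N \;=\; 1 - (d_i-1)(R-\epsilon)q_1 - \bigl[(d_i-1)(R-\epsilon)\cdot 0\bigr] \;=\; 1 - (d_i-1)(R-\epsilon),
\end{align*}
which is strictly negative precisely under the assumed condition $R > \epsilon + \tfrac{1}{d_i-1}$. Therefore $f(1,\cdot)$ is strictly decreasing in $q_2$, so its maximum on $[0,1]$ is attained at $q_2 = 0$. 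Plugging $(q_1,q_2) = (1,0)$ into $f$ yields the claimed optimal value $\frac{R}{1+(d_i-1)\epsilon}$.

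There is no real obstacle here beyond keeping the algebra clean: the key cancellation in the first partial derivative (where the $q_1$-terms vanish) is what makes the sign analysis independent of $q_1$, and the condition $R > \epsilon + \tfrac{1}{d_i-1}$ is precisely tight for the sign of the second partial derivative at $q_1 = 1$. If one wanted an even shorter argument, one could instead note that a linear-fractional function with positive denominator on a polytope attains its extrema at vertices and simply compare the four corner values of $[0,1]^2$; but the partial-derivative route gives a cleaner and more informative proof of why $(1,0)$ dominates.
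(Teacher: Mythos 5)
Your proposal is correct and follows essentially the same route as the paper's proof: compute $\partial f/\partial q_1$, observe its numerator simplifies to $R + (d_i-1)(1-\epsilon)(R-\epsilon)q_2 > 0$ so $q_1^*=1$, then compute $\partial f/\partial q_2$ at $q_1=1$ and use $R > \epsilon + \tfrac{1}{d_i-1}$ to conclude $q_2^*=0$. The added observation that a linear-fractional function with positive denominator attains its extrema at vertices is a nice framing but does not change the substance of the argument.
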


\paragraph{Lower bound on the optimal clairvoyant policy.} We now need to compute the expected average reward of an optimal clairvoyant policy on our instance, namely, a policy that has a priori knowledge of the context realizations of all rounds. However, given that providing a characterization of the optimal solution for any possible context realization is a difficult task, we instead attempt to lower bound the optimal expected reward. For this reason, we study a simpler and (possibly) suboptimal clairvoyant policy. This policy is based on partitioning the time horizon into blocks of size $B$, and, then, treating each block, separately, using a simple strategy. 

We define the block size to be $B = k d_i$, where $k \in \mathbb{N}_{+}$ is a positive natural number such that $k \geq 2$ and $d_i$ is the delay of arm $i$. We further assume without loss of generality that the time horizon $T$, which we later extend to infinity, is a multiple of the block size $B$. Our algorithm works separately, in each of the $\frac{T}{B}$ blocks, according to the following simple rule:\\
\textbf{Case (a)}: If context $1$ appears at exactly one time $t'$ within the first $B-d_i$ rounds of the block, then the algorithm plays the arm on time $t'$ and nothing else.\\
\textbf{Case (b)}: If context $1$ does not appear at all within the $B$ rounds of the block, the algorithm plays arm $i$ exactly $k-1$ times (every $d_i$ times), starting from the first round of the and excluding the last $d_i$ rounds of the block.\\ \textbf{Case (c)}: In any other case, the algorithm takes no action during the $B$ rounds of the block. 

It is important to notice that, in all the aforementioned cases, no action is taken within the last $d_i$ rounds of each block. This allows us to study the expected reward of each block independently.

The average reward collected by the above policy is at least,
\begin{align*}
&\tfrac{1}{B}\left(\sum_{t=1}^{B+1-d_i} \tfrac{R}{\epsilon} \epsilon (1-\epsilon)^{(B-1)} 
+ \tfrac{(B-d_i)}{d_i} (1-\epsilon)^B \right)
= R(1-\tfrac{d_i}{B})(1-\epsilon)^{(B-1)} + (\tfrac{1}{d_i} - \tfrac{1}{B}) (1-\epsilon)^B.
\end{align*}

Therefore, for $R > \epsilon + \tfrac{1}{d_i-1}$ and using Lemma~\ref{lemma:hardness:mdp}, the ratio of reward of the online policy over the designed clairvoyant policy is upper bounded by:
\begin{align*}
    \frac{\frac{R}{1+ (d_i-1)\epsilon}}{R(1-\frac{d_i}{B})(1-\epsilon)^{(B-1)} + (\frac{1}{d_i} - \frac{1}{B}) (1-\epsilon)^B}.
\end{align*}

We now consider a series of instances, where $B = d_i \lceil\tfrac{1}{\sqrt{\epsilon}}\rceil$, $R = 2\epsilon + \tfrac{1}{d_i-1}$ and $\epsilon$ approaches $0$. The limiting competitive ratio of an optimal online algorithm becomes:
\begin{align*}
    &\lim_{\epsilon \to 0}
    \frac{\frac{R}{1+ (d_i-1)\epsilon}}{R(1-\frac{d_i}{B})(1-\epsilon)^{(B-1)} + (\frac{1}{d_i} - \frac{1}{B}) (1-\epsilon)^B}
    = \frac{\frac{1}{d_i-1}}{\frac{1}{d_i-1} + \frac{1}{d_i}} = \frac{d_i}{2d_i-1},
\end{align*}
where we use the fact that $\lim_{\epsilon \to 0}\left(1 - \epsilon\right)^{\frac{1}{\sqrt{\epsilon}}} = 1$.

Given that $d_i = d_{\max}$ for instance $I$, we can conclude that the optimal asymptotic competitive ratio of the full-information case of our problem can be upper bounded by $\frac{d_{\max}}{2d_{\max}-1}$.
\end{proof}

\subsection{Proof of Lemma \ref{lemma:hardness:mdp}}
\lemmaHardnessOpt*
\begin{proof}
Let $f(q_1,q_2) = \frac{R q_1 + (1-\epsilon) q_2}{1 + (d_i-1)(\epsilon q_1 + (1- \epsilon) q_2)}$. Taking the partial derivative of $f(q_1,q_2)$ with respect to $q_1$, we get:
\begin{align*}
    \frac{\partial f(q_1,q_2)}{\partial q_1} &= \frac{R\left(1+ (d_i-1)(q_1\epsilon+q_2(1-\epsilon))\right) - \epsilon (d_i-1)(R q_1+q_2(1-\epsilon))}{(1+ (d_i-1)(q_1\epsilon+q_2(1-\epsilon)))^2} \\
    &= \frac{R + (d_i-1)q_2(1-\epsilon)(R-\epsilon)}{(1+ (d_i-1)(q_1\epsilon+q_2(1-\epsilon)))^2}.
\end{align*}
Therefore, for $R> \epsilon$ we have $\frac{\partial f(q_1,q_2)}{\partial q_1} > 0$ for all $q_1, q_2 \in [0,1]$ and, thus, the optimal solution in this case is attained at $q^*_1 = 1$. We now take the derivative of $f(q_1,q_2)$ with respect to $q_2$: 
\begin{align*}
    \frac{\partial f(q_1,q_2)}{\partial q_2} &= \frac{(1-\epsilon)(1+ (d_i-1)(q_1\epsilon+q_2(1-\epsilon))) - (d_i-1)(1-\epsilon)(R q_1+q_2(1-\epsilon))}{(1+ (d_i-1)(q_1\epsilon+q_2(1-\epsilon)))^2} \\
    &= \frac{(1-\epsilon)(1 - (d_i-1)q_1(R-\epsilon))}{(1+ (d_i-1)(q_1\epsilon+q_2(1-\epsilon)))^2}.
\end{align*}

Therefore, for $R> \epsilon$ at $q^*_1 = 1$ we have $\frac{\partial f(q_1,q_2)}{\partial q_2} > 0$, when $(d_i-1)(R-\epsilon) < 1$, and $\frac{\partial f(q_1,q_2)}{\partial q_2} \leq 0$. Therefore, for $R < \epsilon + \frac{1}{d_i-1}$ we have the optimal at $q^*_2 = 1$ and the optimal value is $\frac{R + 1-\epsilon}{d_i}$. For $R > \epsilon + \frac{1}{d_i-1}$ we have the optimal at $q^*_2 = 0$ and the optimal value is $\frac{R}{1+ (d_i-1)\epsilon}$.
\end{proof}
\newpage
\end{document}